\documentclass{article} %
\usepackage{iclr2027_conference,times}

\usepackage{amsmath,amsfonts,bm}

\def\eqref#1{equation~\ref{#1}}

\def\1{\bm{1}}

\def\eps{{\epsilon}}

\DeclareMathAlphabet{\mathsfit}{\encodingdefault}{\sfdefault}{m}{sl}
\SetMathAlphabet{\mathsfit}{bold}{\encodingdefault}{\sfdefault}{bx}{n}

\newcommand{\E}{\mathbb{E}}

\newcommand{\R}{\mathbb{R}}

\newcommand{\KL}{D_{\mathrm{KL}}}

\usepackage{hyperref,cleveref}
\usepackage{url}

\usepackage{amsmath,amssymb,amsthm,mathtools}
\usepackage{algorithm}
\usepackage{algpseudocode}
\usepackage{enumitem}
\usepackage{xcolor}         %
\usepackage{booktabs}
\usepackage{multirow}
\usepackage{graphicx}
\usepackage{colortbl}
\usepackage{xfp}
\usepackage{capt-of}
\usepackage[normalem]{ulem}

\newtheorem{theorem}{Theorem}
\newtheorem{lemma}{Lemma}
\newtheorem{proposition}{Proposition}
\newtheorem{definition}{Definition}
\newtheorem{corollary}{Corollary}
\newtheorem{assumption}{Assumption}

\theoremstyle{remark}
\newtheorem{remark}{Remark}

\newcommand{\cA}{\mathcal{A}}
\newcommand{\cD}{\mathcal{D}}
\newcommand{\cM}{\mathcal{M}}
\newcommand{\cF}{\mathcal{F}}
\newcommand{\cH}{\mathcal{H}}

\newcommand{\norm}[1]{\left\|#1\right\|}
\newcommand{\ip}[2]{\left\langle #1,\,#2\right\rangle}

\newcommand{\ceps}{c_{\eps}}

\definecolor{ourrow}{gray}{0.92}
\newcommand{\ours}{\rowcolor{ourrow}}

\definecolor{heatgray}{gray}{0.60}

\title{Asynchronous LLM Post-Training: Group-Mass Capping and Convergence Analysis}

\author{
Qijia He$^{1}$ \quad Ruinan Jin$^{1}$ \quad Jun Luo$^{1}$  \quad Shaofeng Zou$^{2}$  \quad Yingbin Liang$^{1}$ \\
$^{1}$The Ohio State University \qquad $^{2}$Arizona State University \\
}

\iclrfinalcopy %
\begin{document}
\maketitle
\fancyhead[L]{Preprint}     %

\begin{abstract}
Asynchronous reinforcement learning (RL) improves the efficiency of large language model post-training but introduces stale rollouts generated by earlier policies. Theoretical understanding of how this staleness affects convergence and how to mitigate its impact remains limited. We derive a convergence bound for GRPO-style algorithms that explicitly characterizes the tradeoff between the gradient estimator's second moment and bias. For trajectory-level importance-weighted estimators, our analysis shows that once the second moment is uniformly controlled, delay enters the bound through the bias introduced by clipping or rescaling. Guided by this insight, we propose a novel group mass capping GRPO (GMC-GRPO) method, which minimizes a ratio-based bias bound within a class of weighted estimators sharing a common second-moment guarantee. We establish convergence guarantees for asynchronous GMC-GRPO and show that, compared with TIC-GRPO, it improves the threshold dependence of the fourth-order delay term from $O(\eps^{-4})$ to $O(\eps^{-2})$ as $\eps\to0$, where $1+\eps$ is the ratio threshold. Under local policy overlap, the delay-dependent term decreases as $G^{-2/5}$ after tuning the step size, where $G$ is the group size. For fixed behavior and current policies, the bias introduced by group rescaling also vanishes as $G\to\infty$, whereas the bias from trajectory-wise clipping can persist. Experiments across Qwen3 models and reasoning benchmarks demonstrate improved robustness to stale rollouts, with GMC-GRPO achieving the best performance among stable baselines under large rollout delays.

\end{abstract}

\section{introduction}
\vspace{-0.4em}

Reinforcement learning (RL) has become a widely used paradigm for post-training
large language models (LLMs), enabling them to improve from outcome-level
feedback rather than relying solely on static demonstrations \citep{ouyang2022training, lambert2024tulu}.
Given a prompt, RL allows an LLM to generate thinking tokens before producing a
final answer, and reinforces the generated trajectory based on its final quality
\citep{jaech2024openai, guo2025deepseek}.
Models post-trained with algorithms such as GRPO \citep{shao2024deepseekmath} and
its variants \citep{yu2026dapo, liu2025understanding, zheng2026prosperity} have
shown strong capabilities on challenging reasoning problems, such as math  \citep{ cobbe2021training,he2024olympiadbench},
coding \citep{jain2025livecodebench}, and agentic tasks \citep{du2026pact}.

To reduce the inefficiency of repeatedly alternating between rollout generation and policy optimization, modern RL training pipelines increasingly adopt {\bf asynchronous} execution \citep{zhong2024rlhfuse,sheng2025hybridflow}. Such systems decouple rollout generation from policy optimization: rollout workers continuously generate responses using their local policy versions, while the learner updates the policy as soon as sufficient completed rollouts have accumulated and publishes the updated parameters without blocking generation \citep{fu2026areal,hu2026dora,noukhovitch2025asynchronous}. Together with optimized training engines and workload scheduling, this design allows inference and training to proceed concurrently, improving resource utilization and substantially increasing the training throughput of GRPO-style methods \citep{wang2025infinite}.

The decoupling, however, introduces \emph{data staleness}. Since rollout workers and the learner progress independently, the learner may update the policy several times before a generated rollout is consumed. Consequently, a training batch can mix trajectories collected under multiple earlier policy versions, creating a mismatch between the behavior policies that generated the data and the current policy being optimized \citep{fu2026areal}. As the delay grows, these trajectories can become increasingly off-policy, making policy optimization more difficult and reducing training efficiency \citep{tian2026off, zheng2026prosperity, xu2026gac}. Despite the empirical success of delay-aware algorithms \citep{roux2025tapered, xi2026bapo, huang2026stable}, how this staleness affects the convergence of GRPO-style algorithms, and how to mitigate its impact through principled algorithm design, remain poorly understood. This raises a natural but fundamental research question:

\textit{Can we theoretically characterize how data staleness affects the convergence of GRPO-style algorithms, and use this to design RL algorithms better suited to asynchronous training systems?}

We answer this question by first identifying what any importance-weighted algorithm must control in order to converge, and then designing the correction that best meets this requirement under delay. Our key contributions are as follows:

\begin{itemize}[leftmargin=4mm, itemsep=2pt, topsep=2pt, parsep=0pt, partopsep=0pt]
    \item \textbf{Second moment and bias tradeoff.}
    We derive a convergence bound for GRPO-style algorithms which explicitly captures the tradeoff between the second moment of gradient estimators and the bias error. With trajectory-level importance sampling, stale rollouts can produce large importance ratios and enlarge the gradient estimator's second moment. Clipping these ratios controls the second moment but can introduce bias. When the resulting second-moment bound is independent of delay, delay enters the convergence bound through this clipping bias. This motivates algorithms that minimize a bias bound while maintaining a common second-moment guarantee.

    \item \textbf{GMC-GRPO.} Guided by this principle, we introduce a class of weighted gradient estimators that %
    satisfy a common second-moment bound. This class includes several GRPO-style methods as special cases. Within this class, we derive group mass capping (GMC)-GRPO by minimizing a bias bound and selecting a solution that preserves each group's gradient direction, while retaining the same second-moment guarantee. This makes GMC-GRPO particularly well suited to asynchronous training, where its principled weighting rule effectively limits the bias.

    \item \textbf{Convergence of asynchronous GMC-GRPO.}
    We establish a convergence guarantee for asynchronous GMC-GRPO with explicit dependence
    on stochastic rollout delays. Compared with TIC-GRPO \citep{pang2025tic}, GMC-GRPO yields a strictly smaller delay-dependent
    penalty: in the tight-clipping regime, the penalty improves from
    $O(\eps^{-4})$ to $O(\eps^{-2})$, where $1+\eps$ is the threshold for individual importance ratios in TIC-GRPO and their group average in GMC-GRPO. Moreover, because GMC-GRPO rescales a group
    only when its average importance ratio exceeds the stability cap, its
    correction bias vanishes as $G$ grows and the group average concentrates. In contrast, trajectory-wise clipping acts separately on every ratio, leaving a
    nonzero clipping bias even in the large-group limit. 

   \item \textbf{Experiments.}
    Finally, we evaluate GMC-GRPO under both controlled fixed delays and in a simulated asynchronous pipeline where rollout staleness is correlated with response
    length. Across Qwen3 models and reasoning benchmarks, GMC-GRPO achieves the best performance among vanilla GRPO-style algorithms and other baselines designed for asynchronous training. These results demonstrate effectiveness of our theory-guided design under asynchronous rollout delays.
    
\end{itemize}

\textbf{Related Work.} 
We defer the comprehensive discussion of related work to Appendix~\ref{app:related work}. 
Our paper is most closely related to off-policy correction for stale LLM rollouts in asynchronous RL systems \citep{noukhovitch2025asynchronous,fu2026areal,hu2026dora}.
Typical strategies include adaptive clipping thresholds \citep{roux2025tapered,xi2026bapo,zhao2026deconstructing, li20253po}, variance-aware step-size control \citep{zheng2026prosperity,huang2026stable,xu2026gac}, and reuse of stale experience through staged training with selective filtering or replay buffers \citep{tian2026off,arnal2026efficient,ritter2026llms}.
Despite the empirical success of these methods, few works have studied GRPO-style algorithms from a theoretical perspective, especially under asynchronous rollout delays.
\citet{zhou2026demystifying} characterize the
GRPO gradient as a U-statistic and analyze its finite-sample error and
dependence on group size, while \citet{pang2025tic} introduce TIC-GRPO and
provide the first rigorous convergence analysis of GRPO-style methods. Nevertheless, these studies are not under asynchronous training. More
recently, \citet{xu2026gac} analyze asynchronous GRPO through a generic
biased-gradient framework, but leave the staleness-induced bias and higher-order
remainder terms as abstract quantities in the final bound. \textbf{To the best of our knowledge, we are the first to provide an
explicit delay-dependent convergence analysis of GRPO-style algorithms in asynchronous RL systems.}

\section{Preliminaries}
\label{sec:prelim}\vspace{-0.4em}

\subsection{RL post-training}\label{sec:prelim-rl}
\vspace{-0.4em}

We denote $\pi_\theta$ as an autoregressive language model parameterized
by $\theta\in\mathbb{R}^d$.
Given a prompt $x$ drawn from a training distribution $\cD$ over
the prompt space $\mathcal{X}$, the likelihood of a response
$y=(y_1,\ldots,y_T)$ is
$\pi_\theta(y\mid x)
=\prod_{t=1}^{T}\pi_\theta(y_t\mid x,y_{<t})$,
where $\cA$ denotes the token vocabulary,
$y_{<t}:=(y_1,\ldots,y_{t-1})$ is the generated prefix,
and $T$ is the maximum response length.
A verifier $r_\phi:\mathcal{X}\times\cA^T\to[0,R]$
scores each completed response.

The goal of post-training is to maximize the expected verifier score.
Following standard practice in reinforcement learning, we divide each global batch into
mini-batches and update the parameters sequentially.
The rollout samples used by an update may therefore be generated by
a behavior policy $\pi_{\theta_{\mathrm{behav}}}$ that differs
from the current policy.
Using trajectory-level importance sampling to account for this
mismatch, we write the objective as
\begin{equation}\label{eq:rl-objective}
\begin{aligned}
J(\theta)
&:= \E_{x\sim\cD}\mathbb{E}_{y\thicksim\pi_\theta(\cdot|x)}\bigl[r_\phi(x,y)\bigr] = \E_{x\sim\cD}
   \E_{y\sim\pi_{\theta_{\mathrm{behav}}}(\cdot\mid x)}
   \bigl[\rho\,r_\phi(x,y)\bigr],
\end{aligned}
\end{equation}
where
$\rho:=\pi_\theta(y\mid x)/
\pi_{\theta_{\mathrm{behav}}}(y\mid x)$.
We further denote the optimal value by
$J^\star:=\sup_{\theta\in\mathbb{R}^d}J(\theta)$.

To find an optimal solution, at learner iteration $n\in\{1,\ldots,N\}$, where $N\ge1$ is the total number of updates, GRPO style algorithms draws $B$ prompts
$x^{(1)},\ldots,x^{(B)}$ from $\cD$ and a group of $G\ge2$ responses
$y^{(b,1)},\ldots,y^{(b,G)}\sim\pi_{\theta_{\mathrm{behav}}}(\cdot\mid x^{(b)})$
per prompt, and takes the ascent step $\theta_{n+1}=\theta_n+\eta\,
g_n^{\mathrm{IS}}$ with step size $\eta>0$, starting from $\theta_0$ and ending at $\theta_N$. Here $g_n^{\mathrm{IS}}$ is the
stochastic policy gradient of \cref{eq:rl-objective} on the returned mini-batch, normalized by the response length $T$,
\begin{equation}\label{eq:tic-grad}
g_n^{\mathrm{IS}}
:=\frac1{TBG}\sum_{b=1}^{B}\sum_{i=1}^{G}
\rho_n^{(b,i)}
\nabla\log\pi_{\theta_n}\bigl(y^{(b,i)}\mid x^{(b)}\bigr)\,
\widehat A^{(b,i)},
\end{equation}
where $\rho_n^{(b,i)}:=\pi_{\theta_n}(y^{(b,i)}\mid
x^{(b)})/\pi_{\theta_{\mathrm{behav}}}(y^{(b,i)}\mid x^{(b)})$ is the importance ratio evaluated at $\theta_n$, and
$\widehat A^{(b,i)}:=r^{(b,i)}-\frac1{G-1}\sum_{j\ne i}r^{(b,j)}$ with
$r^{(b,i)}:=r_\phi(x^{(b)},y^{(b,i)})$ being the leave-one-out group advantage. Both
the ratio and the score function $\nabla_\theta\log\pi_\theta(y\mid x)$ act on the whole trajectory, so a single scalar
advantage is shared by all tokens of $y^{(b,i)}$. 

The policy gradient in \cref{eq:tic-grad} depends on the policy through
its score function and the importance sampling ratio.
Our later analysis will be based on the
following token-level regularity and coverage assumptions.

\begin{assumption}[Token-level Lipschitz score]
\label{as:lip}
There is $L>0$ such that, for every token context and all $\theta,\theta'$,
\[
\norm{\nabla_\theta\log\pi_\theta(y_t\mid x,y_{<t})
-\nabla_\theta\log\pi_{\theta'}(y_t\mid x,y_{<t})}_2
\le L\norm{\theta-\theta'}_2.
\]
\end{assumption}
\begin{assumption}[Token-level coverage]\label{as:coverage}
There exists $\mathcal{C}\ge1$ such that $\pi_\theta(y_t\mid x,y_{<t})\ge1/{\mathcal{C}}
$
for every token, context, and learner or behavior policy encountered during
training.
\end{assumption}
Although the constant $\mathcal{C}$ can be large in LLM settings, it appears only through $\log\mathcal{C}$ in our analytical bounds, which justifies it as a controlled constant.

\subsection{Asynchronous RL Systems and Data Staleness}
\label{sec:prelim-async}
\vspace{-0.4em}

As illustrated in \Cref{fig:async}, we consider an asynchronous RL
system in which multiple rollout workers generate responses
continuously while a single learner updates the policy
\citep{fu2026areal, hu2026dora, noukhovitch2025asynchronous}.
Each worker draws a prompt $x\sim\cD$ and generates a group of $G$
responses under the policy snapshot it holds at launch.
Once all responses have been scored, the worker deposits the group
and its behavior-policy version into a shared buffer. At iteration $n$, the learner takes $B$ completed groups from the
buffer, performs the update in \cref{eq:tic-grad} at $\theta_n$, and publishes
$\theta_{n+1}$ while rollout generation continues.
Since groups start under different snapshots and take different
amounts of time to generate, score, and wait in the buffer, a single
mini-batch can mix several degrees of staleness.

\noindent
\begin{minipage}[t]{0.68\linewidth}
\vspace{-10pt}
Formally, let $s_n^{(b)}\in\{0,\ldots,n\}$ denote the parameter-version index used
to generate group $b$ consumed at iteration $n$, and define its delay as
$\tau_n^{(b)}:=n-s_n^{(b)}$, the lag between the behavior-policy version and the
current learner version, measured in learner updates. All $G$ responses in a group
share one delay, whereas groups within a mini-batch need not. For example, a batch used at
$\theta_8$ may contain groups generated under $\theta_5$ and $\theta_7$, with
delays $3$ and $1$. The behavior policy of Section~\ref{sec:prelim-rl} is thus
group dependent, $y^{(b,i)}\sim\pi_{\theta_{n-\tau_n^{(b)}}}(\cdot\mid x^{(b)})$,
and the ratio in \cref{eq:tic-grad} becomes
\begin{equation}\label{eq:delayed-ratio}
\rho_n^{(b,i)}
=\frac{\pi_{\theta_n}(y^{(b,i)}\mid x^{(b)})}
{\pi_{\theta_{n-\tau_n^{(b)}}}(y^{(b,i)}\mid x^{(b)})}.
\end{equation}

To state the sampling conditions for \cref{eq:delayed-ratio},
let $\cF_n^-$ collect the randomness available before the mini-batch is
returned, so that $\{\theta_m\}_{m=0}^n$ is $\cF_n^-$-measurable while the prompts
and delays carried by the current batch need not be, and let
$\cH_n:=\sigma(\cF_n^-,\{x^{(b)},\tau_n^{(b)}\}_{b=1}^B)$. Every behavior parameter
$\theta_{n-\tau_n^{(b)}}$ is $\cH_n$-measurable, so \cref{eq:delayed-ratio} can be
analyzed with the behavior policy held fixed.
For our analysis, we impose the following regularity assumption on asynchronous sampling.
\end{minipage}\hfill
\begin{minipage}[t]{0.30\linewidth}
\vspace{0pt}
\centering
\includegraphics[width=\linewidth]{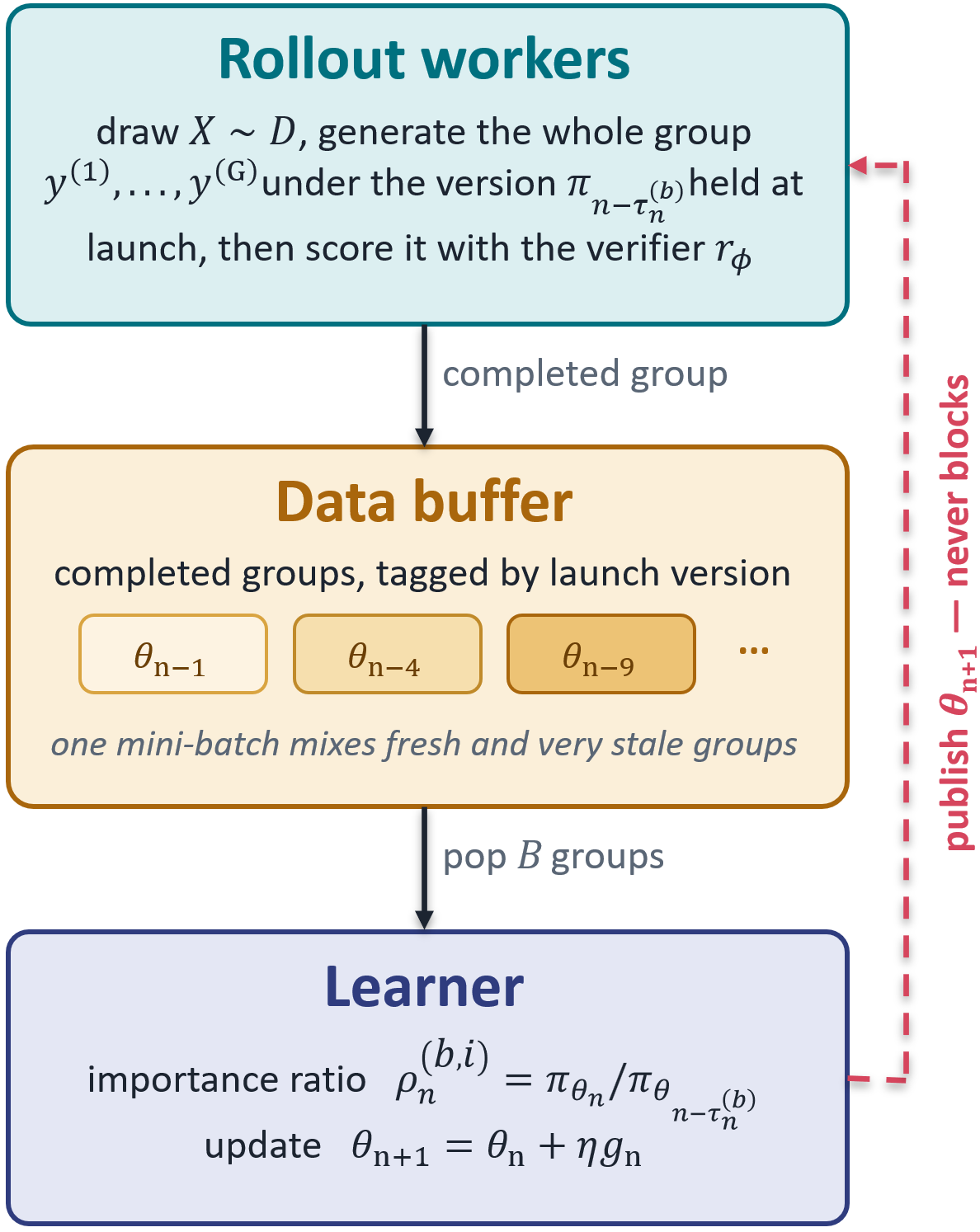}
\captionof{figure}{Asynchronous rollout. Each worker generates a group using its launch-time policy, a batch may contain groups with different delays.}
\label{fig:async}
\end{minipage}
\par\medskip

\begin{assumption}[Asynchronous sampling regularity]\label{as:async}
Conditional on $\cH_n$, the responses within each returned group are
independent draws from
$\pi_{\theta_{n-\tau_n^{(b)}}}(\cdot\mid x^{(b)})$.
The returned mini-batch preserves conditional unbiasedness of the raw
importance-sampling update:
$\E[g_n^{\mathrm{IS}}\mid\cF_n^-]
=T^{-1}\nabla J(\theta_n)$.
Queue admission and mini-batch selection may otherwise depend on launch and
completion times. For the sharper delay-moment conversion only, we additionally
require the delay-regularity condition
$\E[\norm{\theta_n-\theta_{n-t}}_2^2\mid\tau_n^{(b)}=t]
=\E\norm{\theta_n-\theta_{n-t}}_2^2$
whenever $\Pr(\tau_n^{(b)}=t)>0$.
\end{assumption}
Assumption~\ref{as:async} is a natural abstraction for stationary, large-scale
RL training. With a fixed prompt stream, lossless rollout collection, and many
concurrent workers, the delay of any individual rollout is approximately
decoupled from both the distribution of returned samples and the aggregate
learner trajectory. Thus, delay affects only the staleness of a rollout, without systematically biasing the returned data or the associated learner drift. We exclude RL training systems that explicitly filter rollouts by
completion time, such as timeouts or maximum-delay gating \citep{fu2026areal,zeng2026glm}, as they can
favor shorter responses and introduce length bias \citep{wang2025infinite}.

\section{Second Moment-Bias tradeoff and Group-Mass-Capping GRPO} \label{sec3}
\vspace{-0.4em}

How we design gradient estimators plays an important role in convergence, particularly in asynchronous training. In this section, we first establish a general convergence bound for a broad class of gradient estimators, which shows how their second moments and bias affect the convergence. This bound will then guide us toward a principled gradient estimator that minimizes a bias upper bound while maintaining a common second-moment guarantee, thereby reducing sensitivity to asynchronous rollout delays.

\begin{lemma}
\label{lem:master-descent}
Suppose that $\theta_{n+1}=\theta_n+\eta g_n$ and
$\E[g_n\mid\cF_n^-]=T^{-1}\nabla J(\theta_n)+b_n$. Under Assumption~\ref{as:lip}-\ref{as:coverage}, for every $\eta>0$ and
$N\ge1$, 
\[
\frac{\sum_{n=0}^{N-1}
\E\norm{\nabla J(\theta_n)}_2^2}{N}
\!\le\!
\frac{2T\bigl(J^\star\!-\!\E[J(\theta_0)]\bigr)}
{\eta N}
\!+\!
\frac{T^2}{N}\sum_{n=0}^{N-1}
\Bigl[
\eta RL(2\log|\cA|\!+\!1)\E\norm{g_n}_2^2
\!+\!\E\norm{b_n}_2^2
\Bigr].
\]
\end{lemma}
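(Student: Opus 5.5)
The plan is the standard ascent-lemma argument for a biased stochastic gradient method. The only model-specific ingredient is a smoothness constant for $J$. Specifically, I will show that $\nabla J$ is $L_J$-Lipschitz with
\[
L_J\le TRL\,(2\log|\cA|+1).
\]
Then I will run the one-step inequality
\[
J(\theta_{n+1})\ge J(\theta_n)+\eta\ip{\nabla J(\theta_n)}{g_n}-\tfrac{L_J}{2}\eta^2\norm{g_n}_2^2,
\]
take conditional expectations, and telescope.

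\textbf{Step 1 (smoothness of $J$).} Write $s_\theta(y):=\nabla\log\pi_\theta(y\mid x)=\sum_{t}\nabla\log\pi_\theta(y_t\mid x,y_{<t})$. Differentiating $\nabla J(\theta)=\E_x\E_{y\sim\pi_\theta}[r\,s_\theta(y)]$ once more formally gives
\[
\nabla^2J(\theta)=\E_x\E_{y\sim\pi_\theta}\bigl[r\,(\nabla s_\theta(y)+s_\theta(y)s_\theta(y)^\top)\bigr].
\]
The two terms are bounded separately.
\begin{itemize}
\item \emph{First term.} By Assumption~\ref{as:lip}, each token Jacobian has operator norm at most $L$. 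Hence $\norm{\nabla s_\theta(y)}\le TL$, and with $0\le r\le R$ this term is at most $RTL$ in norm.
\item \emph{Second term.} This term is PSD and dominated by $R\,\E[s_\theta s_\theta^\top]$. By the martingale structure of the token scores (cross terms $\E[s_ts_{t'}^\top]$ vanish for $t\neq t'$, since each token score has conditional mean zero), $\E[s_\theta s_\theta^\top]=\sum_t\E[s_ts_t^\top]$. The Fisher identity $\E_{y_t}[s_ts_t^\top]=-\E_{y_t}[\nabla s_t]$ then bounds each summand by $L$, so the term is at most $RTL$.
\end{itemize}
Together these give $\norm{\nabla^2J}\le 2TRL\le TRL(2\log|\cA|+1)$, using $|\cA|\ge2$. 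To avoid assuming twice differentiability, I would run the same computation on the difference $\nabla J(\theta)-\nabla J(\theta')$ along the segment between them. Assumption~\ref{as:coverage} guarantees all ratios and densities are finite and positive, so differentiation under the (finite) sum over $y$ is legitimate.

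\textbf{Step 2 (one-step inequality).} Taking $\E[\cdot\mid\cF_n^-]$ and using $\E[g_n\mid\cF_n^-]=T^{-1}\nabla J(\theta_n)+b_n$, the cross term is handled by Young's inequality:
\[
\ip{\nabla J}{T^{-1}\nabla J+b_n}\ge \tfrac{1}{2T}\norm{\nabla J}^2-\tfrac T2\norm{b_n}^2.
\]
Hence
\[
\tfrac{\eta}{2T}\E\norm{\nabla J(\theta_n)}^2\le \E[J(\theta_{n+1})-J(\theta_n)]+\tfrac{\eta T}{2}\E\norm{b_n}^2+\tfrac{L_J\eta^2}{2}\E\norm{g_n}^2 .
\]

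\textbf{Step 3 (telescoping).} Multiply by $2T/\eta$, sum over $n=0,\dots,N-1$, bound $\E J(\theta_N)\le J^\star$, and divide by $N$. The coefficient of $\E\norm{g_n}^2$ becomes $\eta TL_J\le \eta T^2RL(2\log|\cA|+1)$, which is exactly the claimed bound.

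The main obstacle is Step 1. The delicate points are the Fisher-identity control of the outer-product term, which keeps the bound linear rather than quadratic in $T$, and matching the precise constant $2\log|\cA|+1$. If the author's route goes through an entropy or KL argument, the $\log|\cA|$ factor presumably arises there. My bound $2TRL$ is at least as strong whenever $|\cA|\ge2$, so it suffices for the stated inequality.
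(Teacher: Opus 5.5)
Your proof is correct. Steps 2 and 3 are exactly the paper's argument: conditioning on $\cF_n^-$, Young's inequality with weights $\frac1{2T}$ and $\frac T2$, and telescoping against $J^\star$. The real difference is how Step 1 obtains the smoothness constant.

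\textbf{How the paper gets its constant.} The paper also integrates along the segment $\gamma(s)=\theta'+s\Delta$. It then bounds $\sum_y\norm{\tfrac{\mathrm d}{\mathrm ds}\nabla\pi_{\gamma(s)}(y\mid x)}_2$, which puts the norm \emph{inside} the sum over trajectories. The outer-product term is then controlled by $\norm{\Delta}_2\,\E\norm{S_{\gamma(s)}}_2^2$, the \emph{trace} of the Fisher information. The paper bounds this trace by $2LT\log|\cA|$ via the entropy-based score bound \cref{eq:scoremoment}, and that is the source of the factor $2\log|\cA|+1$.

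\textbf{How your route differs.} You keep the expectation inside the norm and use the \emph{operator norm} of the Fisher information instead. In the segment version, the term is the vector $\E[r\,(S^\top\Delta)\,S]$ with $S=S_{\gamma(s)}(y\mid x)$. Its norm is at most $R\sqrt{\E(S^\top\Delta)^2}\,\sup_{\norm{v}_2=1}\sqrt{\E(S^\top v)^2}\le RTL\norm{\Delta}_2$ by Cauchy--Schwarz, which does not even need $r\ge0$. This gives the vocabulary-free constant $2TRL$.

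\textbf{The step to make rigorous.} You need the per-token bound $\E_{a\sim\pi_\theta(\cdot\mid h)}\bigl[(\nabla\log\pi_\theta(a\mid h)^\top u)^2\bigr]\le L\norm{u}_2^2$ under a merely Lipschitz score. To get it, differentiate the identity $\sum_a\pi_{\theta+\lambda u}(a\mid h)\,\nabla\log\pi_{\theta+\lambda u}(a\mid h)^\top u\equiv0$ in $\lambda$; this is valid for a.e.\ $\lambda$ by absolute continuity. Bound the derivative of the score by $L\norm{u}_2^2$, then pass to $\lambda=0$ by continuity. Martingale orthogonality across tokens then gives the trajectory-level bound $TL\norm{u}_2^2$.

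\textbf{What each route buys.} The paper's route avoids the Fisher-identity cancellation entirely. It reuses \cref{eq:scoremoment}, which it needs anyway for the second-moment bounds. Yours needs that extra step, but it yields a strictly sharper constant for $|\cA|\ge2$. It would replace $2\log|\cA|+1$ by $2$ in the lemma, and hence in the $O(\eta)$ terms of Theorems~\ref{thm:coverage} and~\ref{thm:appvanilla}.
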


The proof of Lemma \ref{lem:master-descent} is deferred to Appendix~\ref{sec:assumptions}. Lemma~\ref{lem:master-descent} characterizes the convergence bound into three terms. The first is the standard decay term $O(1/(\eta N))$, which vanishes as the number $N$ of iterations becomes large. The other two depend on the \textbf{gradient estimator's second moment} $\E\norm{g_n}_2^2$ and its \textbf{squared bias} relative to the current policy gradient $\E\norm{b_n}_2^2$. A good gradient estimator should therefore keep both quantities small. A common way to prevent gradient's second moment become large is to {\em clip} individual samples whose importance ratios exceed a prescribed threshold, as in GRPO and its variants. However, clipping can also enlarge bias and slow convergence \citep{zheng2026prosperity}. This tradeoff becomes especially important in asynchronous post-training, where delayed rollouts increase the mismatch between the behavior and learner policies. As this mismatch grows, more importance ratios may exceed the clipping threshold, increasing the bias introduced by clipping. Although raising the clipping threshold allows more samples to contribute, it weakens control over the gradient estimator's second moment and can lead to unstable training or even collapse \citep{zheng2026prosperity, huang2026stable, tian2026off}.

This tradeoff motivates us to consider a class of {\bf weighted} gradient estimators, in which each sample receives an additional weight to adjust its contribution. We require the average weighted importance ratio within each group to stay below a fixed threshold, while allowing individual weighted ratios to exceed it. As we show below, this constraint gives all estimators in the class a common second-moment guarantee. We can then compare different weighting rules and choose one that minimizes the bias bound.

\begin{definition}[Weighted gradient estimator class]\label{def:admissible class}
Define the weighted gradient estimator as
\begin{equation}
\label{eq:weighted-update}
g_n^{w}:=\frac1B\sum_{b=1}^B \frac1{TG}\sum_{i=1}^G \boldsymbol{w}_n^{(b,i)}\rho_n^{(b,i)}
\nabla\log\pi_{\theta_n}(y^{(b,i)}\mid x^{(b)})\widehat A^{(b,i)},
\end{equation}
where the weights $w_n^{(b,i)}$ should satisfy, for every realized group,
\begin{equation}\textstyle
\label{eq:admissible-weights}
0\le w_n^{(b,i)}\le1\quad\text{for all }i\in[G],
\qquad
\frac1G\sum_{i=1}^G w_n^{(b,i)}\rho_n^{(b,i)}\le1+\eps.
\end{equation}
\end{definition}

\begin{remark}
In \cref{eq:admissible-weights}, the first condition prevents the correction from amplifying any trajectory, and the second precisely controls the group's average gradient second moment. Such a class is broad, and includes many existing algorithms as special cases. For example $w_n^{(b,i)}=\mathbf 1\{\rho_n^{(b,i)}\le1+\eps\}$ corresponds to TIC-GRPO \citep{pang2025tic};  $w_n^{(b,i)}=\min\{1,c/\rho_n^{(b,i)}\}$ corresponds to VCPO \citep{huang2026stable} which applies sequence-level truncated importance sampling with a detached weight $\min\{\rho,c\}$; and CISPO \citep{chen2025minimax} and SAPO \citep{gao2025soft} can be viewed as token-level analogues. However, the vanilla GRPO \citep{shao2024deepseekmath} is not in this class, because for negative advantages, it retains the gradient of tokens with arbitrarily large ratios, so that its second moment scales as $O(\mathcal{C}^2)$ in the worst case \citep{pang2025tic}. 
\end{remark}

The following proposition shows that all gradient estimators in this class satisfy a common second-moment bound. This allows us to design a principled bias-minimal weighting rule.
\begin{proposition}
\label{prop:weighted-second-moment}
Under Assumptions~\ref{as:lip}-\ref{as:async}, the gradient estimators in the class given in \Cref{def:admissible class} share the following common bound
\begin{equation}\textstyle
\label{eq:weighted-second-moment}
\E[\norm{g_n^{w}}_2^2\mid\cF_n^-]
\le\frac{8}{T}{(1+\eps)LR^2\log|\cA|}.
\end{equation}
Further, their group bias $b_{n,b}^{w}
:=-\frac{1}{TG}\sum_{i=1}^{G}
(1-w_n^{(b,i)})\rho_n^{(b,i)}
\nabla\log\pi_{\theta_n}(y^{(b,i)}\mid x^{(b)})
\widehat A^{(b,i)}$ satisfies
\begin{equation}\textstyle
\label{eq:weighted-bias-envelope1}
\norm{\E[b_{n,b}^{w}\mid\cH_n]}_2
\le
2R\sqrt{2L\log\mathcal C}\,
\E\left[\frac{1}{G}\sum_{i=1}^G
(1-w_n^{(b,i)})\rho_n^{(b,i)}\mid\cH_n\right].
\end{equation}
\end{proposition}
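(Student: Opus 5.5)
The plan is to reduce everything to one per-token inequality for the score, and then handle the second moment and the bias separately, each by a Cauchy--Schwarz/Jensen step over the weights and an importance-sampling change of measure. Write $s_t(\theta):=\nabla_\theta\log\pi_\theta(y_t\mid x,y_{<t})$ and $S(\theta):=\sum_{t}s_t(\theta)=\nabla\log\pi_\theta(y\mid x)$. Throughout I use $|\widehat A^{(b,i)}|\le R$, which holds because rewards lie in $[0,R]$.

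\textbf{Step 1 (self-bounding score).} Fix a token context and set $f(\theta):=-\log\pi_\theta(y_t\mid x,y_{<t})\ge0$. By Assumption~\ref{as:lip}, $f$ has an $L$-Lipschitz gradient, so the descent lemma at the point $\theta-\nabla f(\theta)/L$ gives
\[
0\le f\bigl(\theta-\nabla f(\theta)/L\bigr)\le f(\theta)-\frac{\norm{\nabla f(\theta)}_2^2}{2L}.
\]
Hence $\norm{s_t(\theta)}_2^2\le 2L\log\bigl(1/\pi_\theta(y_t\mid x,y_{<t})\bigr)$. This yields two consequences.
\begin{itemize}
\item By Assumption~\ref{as:coverage}, $\norm{s_t}_2\le\sqrt{2L\log\mathcal C}$ pointwise, so $\norm{S}_2\le T\sqrt{2L\log\mathcal C}$.
\item Under $y\sim\pi_\theta(\cdot\mid x)$, the summands $s_t$ form a martingale difference sequence, since $\E[s_t\mid y_{<t}]=0$. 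The cross terms therefore vanish, and
\[
\E_{\pi_\theta}\norm{S}_2^2=\sum_t\E\norm{s_t}_2^2\le 2L\sum_t\E\bigl[-\log\pi_\theta(y_t\mid\cdot)\bigr]\le 2LT\log|\cA|,
\]
where the last step bounds each conditional entropy by $\log|\cA|$.
\end{itemize}

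\textbf{Step 2 (second moment).} By Jensen over $b$, $\norm{g_n^w}_2^2\le\frac1B\sum_b\norm{\frac1{TG}\sum_i w^{(b,i)}\rho^{(b,i)}S^{(b,i)}\widehat A^{(b,i)}}_2^2$. For one group, apply weighted Cauchy--Schwarz with nonnegative weights $w\rho$:
\[
\Bigl\|\sum_i w\rho\,\widehat A\,S\Bigr\|_2^2\le\Bigl(\sum_i w\rho\Bigr)\Bigl(\sum_i w\rho\,\widehat A^2\norm{S}_2^2\Bigr)\le G(1+\eps)R^2\sum_i\rho^{(b,i)}\norm{S^{(b,i)}}_2^2 .
\]
Here the group cap in \cref{eq:admissible-weights} bounds the first factor, and $w\le1$ and $|\widehat A|\le R$ bound the second. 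This is the key point: the weights and advantages, which couple the group members, have now been removed. I then condition on $\cH_n$. Each $y^{(b,i)}$ is a draw from the behavior policy (Assumption~\ref{as:async}), so changing measure gives $\E[\rho\norm{S(\theta_n)}_2^2\mid\cH_n]=\E_{\pi_{\theta_n}}\norm{S}_2^2\le 2LT\log|\cA|$ by Step 1. This is finite because coverage bounds $\rho$. Collecting factors gives $\frac{1}{T^2G^2}\cdot G(1+\eps)R^2\cdot G\cdot 2LT\log|\cA|=\frac{2(1+\eps)LR^2\log|\cA|}{T}$, which is within the stated constant $8$. Taking $\E[\cdot\mid\cF_n^-]$ finishes \cref{eq:weighted-second-moment}.

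\textbf{Step 3 (bias envelope).} Move the norm inside the conditional expectation and use the triangle inequality:
\[
\norm{\E[b^w_{n,b}\mid\cH_n]}_2\le\E\Bigl[\frac1{TG}\sum_i(1-w)\rho\,|\widehat A|\,\norm{S}_2\Bigm|\cH_n\Bigr].
\]
Insert $|\widehat A|\le R$ (the stated $2R$ absorbs a crude bound $|\widehat A|\le 2R$ if needed) and $\norm{S}_2/T\le\sqrt{2L\log\mathcal C}$ from Step 1. This gives \cref{eq:weighted-bias-envelope1}.

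\textbf{Main obstacle.} The delicate step is Step 1: getting a $\log|\cA|$ (entropy) rather than a $\log\mathcal C$ bound in the second moment. This needs the self-bounding inequality $\norm{s_t}^2\le 2L f$, which I justified above from global smoothness of $f\ge0$ in $\theta$. It also needs the martingale orthogonality of token scores, which holds only under the current policy. For that reason the change of measure in Step 2 must come after the weights have been removed, since the weights depend on the whole group.
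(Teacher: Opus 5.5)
Your proposal is correct and follows essentially the same route as the paper. For the second moment, you use a weighted Cauchy--Schwarz with weights $w\rho$ to remove the group coupling, then change measure to the current policy using the self-bounding, martingale-orthogonal score bound $2LT\log|\cA|$; for the bias envelope, you use Jensen together with the coverage score bound $T\sqrt{2L\log\mathcal C}$. The one difference is that you use the valid sharper estimate $|\widehat A^{(b,i)}|\le R$ (both $r^{(b,i)}$ and the leave-one-out mean lie in $[0,R]$) where the paper uses $2R$, so you get constants $2$ and $R$ in place of $8$ and $2R$, which imply the stated bounds.
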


\Cref{prop:weighted-second-moment} shows that all weighting rules in this class share a common second-moment guarantee. With this guarantee in place, we focus on choosing weights that reduce the bias. Ideally, we would minimize the exact bias $\|\E[b_{n,b}^{w}\mid\cH_n]\|_2$. However, this quantity depends on the directions of individual gradient contributions and how they cancel, which cannot generally be determined from the importance ratios alone. We therefore minimize the upper bound in \cref{eq:weighted-bias-envelope1}, which depends only on the importance ratios and the chosen weights. This bound is proportional to the expected reduction in the group’s average importance ratio caused by weighting. Since the unweighted average $\frac1G\sum_{i=1}^G\rho_n^{(b,i)}$ is fixed for each realized group, minimizing this reduction is equivalent to maximizing the average weighted importance ratio subject to the class constraints:
\begin{equation}\textstyle
\label{eq:retained-mass-program}
\max_{0\le w_n^{(b,i)}\le1}\ \frac1G\sum_{i=1}^G w_n^{(b,i)}\rho_n^{(b,i)}
\quad\text{subject to}\quad
\frac1G\sum_{i=1}^G w_n^{(b,i)}\rho_n^{(b,i)}\le1+\eps.
\end{equation}
The optimal weights are not unique in general. When the unweighted average importance ratio exceeds $1+\eps$, any feasible weighting rule whose average weighted importance ratio equals $1+\eps$ attains the optimum. Different optimal rules can nevertheless produce different gradient directions within the same group. To select among these rules, we restrict attention to \emph{direction-preserving} rules that assign a common positive weight to all responses in a group: $w_n^{(b,i)}=\alpha_n^{(b)}>0$ for all $i\in[G]$. This scales the original importance-weighted group gradient estimator by a single factor, preserving the relative contributions of the responses and hence the direction of the group gradient estimate. The following proposition gives the optimal weighting rule under this restriction.
\begin{proposition}
\label{prop:gmc-retained-mass}
Fix a realized group and let
$\overline\rho_n^{(b)}:=G^{-1}\sum_{i=1}^G\rho_n^{(b,i)}$.
Among weighted direction-preserving corrections, the unique solution to
\cref{eq:retained-mass-program} is $\alpha_n^{(b),\star}
=\min\left\{1,{(1+\eps)}/{\overline\rho_n^{(b)}}\right\}.$
\end{proposition}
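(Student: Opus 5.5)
Restricting to direction-preserving rules $w_n^{(b,i)}=\alpha$ for all $i\in[G]$ collapses \cref{eq:retained-mass-program} to a one-dimensional linear program in $\alpha$. For a fixed realized group the objective is $\frac1G\sum_i\alpha\rho_n^{(b,i)}=\alpha\,\overline\rho_n^{(b)}$. The class constraints in \cref{eq:admissible-weights} become $0<\alpha\le1$ and $\alpha\,\overline\rho_n^{(b)}\le1+\eps$. Positivity of $\alpha$ is part of the direction-preserving requirement, and $\alpha\le1$ comes from $0\le w\le1$.

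The first step is to observe that $\overline\rho_n^{(b)}>0$. Every $\rho_n^{(b,i)}$ is a ratio of trajectory probabilities. By Assumption~\ref{as:coverage}, the numerator is at least $\mathcal C^{-T}>0$ and the denominator is finite and positive, so each ratio is strictly positive. Hence the objective $\alpha\mapsto\alpha\overline\rho_n^{(b)}$ is strictly increasing in $\alpha$.

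The second step is to rewrite the feasible set as the interval $(0,\min\{1,(1+\eps)/\overline\rho_n^{(b)}\}]$. This interval is nonempty because $\eps\ge0$ and $\overline\rho_n^{(b)}<\infty$, so its right endpoint is positive. A strictly increasing function on a half-open interval that is closed on the right attains its maximum exactly at the right endpoint. Every other feasible $\alpha$ is strictly smaller and therefore gives a strictly smaller objective, which yields both existence and uniqueness of $\alpha_n^{(b),\star}=\min\{1,(1+\eps)/\overline\rho_n^{(b)}\}$.

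For interpretation I will separate two cases.
\begin{itemize}
\item If $\overline\rho_n^{(b)}\le1+\eps$, then $\alpha^\star=1$ and no correction is applied.
\item If $\overline\rho_n^{(b)}>1+\eps$, the cap is binding and the retained mass equals exactly $1+\eps$. This matches the general optimum described before the proposition.
\end{itemize}

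There is essentially no hard step. The only points needing care are the strict positivity of $\overline\rho_n^{(b)}$, which rules out a degenerate case where every $\alpha$ is optimal, and the fact that uniqueness holds within the restricted direction-preserving family, not among all feasible weightings.
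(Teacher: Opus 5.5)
Your proposal is correct and follows the paper's argument. Restricting to a common weight $\alpha$ reduces the program to maximizing $\alpha\overline\rho_n^{(b)}$ subject to $0<\alpha\le1$ and $\alpha\overline\rho_n^{(b)}\le1+\eps$, and positivity of $\overline\rho_n^{(b)}$ makes the largest feasible $\alpha$ the unique optimizer; the only cosmetic difference is that you get $\overline\rho_n^{(b)}>0$ from Assumption~\ref{as:coverage}, whereas the paper appeals to the full support of softmax policies.
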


We call this weighting rule \emph{group mass capping} (GMC), which caps the group’s average importance ratio by replacing each raw ratio with $\widehat\rho_n^{(b,i)}:=\alpha_n^{(b),\star}\rho_n^{(b,i)}$, yielding the following gradient estimator:
\begin{equation}
\label{eq:gmcgrad-main}
g_n^{\mathrm{GMC}}
:=\frac1{TBG}\sum_{b=1}^B\sum_{i=1}^G
\frac{\rho_n^{(b,i)}}{\max\{1,\bar{\rho}_n^{(b)}/(1+\eps
)\}}
\nabla\log\pi_{\theta_n}(y^{(b,i)}\mid x^{(b)})\widehat A^{(b,i)}.
\end{equation}
We provide a comprehensive pseudo code of asynchronous GMC-GRPO in Appendix \ref{sec:algorithm}.

\section{Convergence Analysis}
\vspace{-0.4em}

\subsection{Convergence Guarantees for GMC-GRPO}
\vspace{-0.4em}

We now apply Lemma~\ref{lem:master-descent} to establish convergence
guarantees for asynchronous GMC-GRPO.
Since Proposition~\ref{prop:weighted-second-moment} already controls the gradient second moment, it remains to bound the correction bias in terms of rollout delays.
The resulting bound depends on the averaged fourth  moment of the delay defined as
$\overline{\tau}_4
:= \frac{1}{NB}\sum_{n=0}^{N-1}\sum_{b=1}^{B}
\E[(\tau_n^{(b)})^4]$.
\begin{theorem}[Convergence of asynchronous GMC-GRPO]\label{thm:gmc-main}
    Under Assumptions~\ref{as:lip}--\ref{as:async}, for every $\eta>0$ and
    $N\ge1$, the convergence rate of GMC-GRPO satisfies
    \[
    \frac{1}{N}\sum_{n=0}^{N-1}
    \mathbb{E}\left\|\nabla J(\theta_n)\right\|_2^2
    =
    O\!\left(
    \frac{1}{\eta N}
    +(1+\eps)\eta
    +\frac{(1+\eps)^3}{\log^2(1+2\eps)}
    \eta^4\overline{\tau}_4
    \right).
    \]
For $\overline\tau_4>0$, taking
$\eta^\star\asymp\min\{1/\sqrt{(1+\eps)N},
(\log^2(1+2\eps)/((1+\eps)^3\overline\tau_4N))^{1/5}\}$
gives
\[
\frac{1}{N}\sum_{n=0}^{N-1}
\mathbb{E}\|\nabla J(\theta_n)\|_2^2
=
O\left(
\sqrt{\frac{1+\eps}{N}}
+
\left(
\frac{(1+\eps)^3\overline\tau_4}
{\log^2(1+2\eps)N^4}
\right)^{1/5}
\right).
\]
When $\overline\tau_4=0$, the convergence recovers the standard $O(N^{-1/2})$ rate for fixed $\eps$.
\end{theorem}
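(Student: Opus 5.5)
The plan is to apply Lemma~\ref{lem:master-descent} with $g_n=g_n^{\mathrm{GMC}}$, which belongs to the class of \Cref{def:admissible class} because $w_n^{(b,i)}=\alpha_n^{(b),\star}\in(0,1]$ and the weighted group average equals $\min\{\overline\rho_n^{(b)},1+\eps\}\le1+\eps$. Proposition~\ref{prop:weighted-second-moment} then gives $\E\norm{g_n}_2^2\le 8(1+\eps)LR^2\log|\cA|/T$. Multiplying by $\eta T^2RL(2\log|\cA|+1)$ yields the $O((1+\eps)\eta)$ term, with constants absorbed since $T,L,R,|\cA|$ are fixed. The decay term $2T(J^\star-\E J(\theta_0))/(\eta N)$ is immediate. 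It remains to bound $\E\norm{b_n}_2^2$.

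By Assumption~\ref{as:async}, $\E[g_n^{\mathrm{IS}}\mid\cF_n^-]=T^{-1}\nabla J(\theta_n)$, so $b_n=\E[\frac1B\sum_b b^w_{n,b}\mid\cF_n^-]$. By the tower property, Jensen, and Cauchy--Schwarz over $b$, we get $\E\norm{b_n}^2\le\frac1B\sum_b\E\norm{\E[b^w_{n,b}\mid\cH_n]}^2$. Applying \cref{eq:weighted-bias-envelope1}, the GMC weights give
\[
\frac1G\sum_i(1-\alpha^\star)\rho_n^{(b,i)}=(\overline\rho_n^{(b)}-1-\eps)_+ .
\]
So it suffices to bound $\E[(\overline\rho_n^{(b)}-1-\eps)_+\mid\cH_n]^2$ in terms of the drift $D:=\norm{\theta_n-\theta_{n-\tau}}_2$. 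Write $\rho=\exp(\ell)$ with $\ell=\log\pi_{\theta_n}(y|x)-\log\pi_{\theta_{n-\tau}}(y|x)$. Using Assumptions~\ref{as:lip}--\ref{as:coverage}, the per-token score is bounded by roughly $\sqrt{2L\log\mathcal C}$, so $|\ell|/T\lesssim D$ in the length-normalized scaling. I would apply Markov's inequality in a second-moment form: $(\overline\rho-1-\eps)_+\le(\overline\rho-1)^2/\eps'$-type bounds, or more precisely $(u-1-\eps)_+\le (u-1)^2\cdot c_\eps$. Since $\E[\rho\mid\cH_n]=1$, one has $\E(\overline\rho-1)^2=\mathrm{Var}(\rho)/G$, and $\mathrm{Var}(\rho)$ is $O(D^2)$ through a $\chi^2$ bound between nearby policies. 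Squaring gives a bias of order $D^4$ times a threshold factor. The $\log^2(1+2\eps)$ in the denominator suggests the threshold should be handled on the log scale: the event $\overline\rho>1+\eps$ requires some $\ell$ of size at least about $\log(1+2\eps)$ on a nontrivial mass. A bound of the form $(\overline\rho-1-\eps)_+\lesssim (1+\eps)\ell_{\max}^2/\log(1+2\eps)$ with $\ell_{\max}\lesssim D$ then yields $\E\norm{b_n}^2\lesssim (1+\eps)^3 D^4/\log^2(1+2\eps)$ after tracking the $(1+\eps)$ factors from $\rho\le e^{\ell}$ on the capped region.

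Finally, to convert drift into delay, write $\theta_n-\theta_{n-t}=\eta\sum_{k=n-t}^{n-1}g_k$ and use $\norm{g_k}\le$ a deterministic bound. Alternatively, use Cauchy--Schwarz to get $\E D^4\le\eta^4t^3\sum_k\E\norm{g_k}^4$, with $\norm{g_k}$ bounded almost surely by $(1+\eps)$ times constants because weighted ratios sum to at most $G(1+\eps)$ per group. The delay-regularity clause of Assumption~\ref{as:async} lets me replace conditioning on $\tau_n^{(b)}=t$ by unconditional moments, giving $\E D^4\lesssim\eta^4\E[(\tau_n^{(b)})^4]$. Averaging over $n,b$ produces $\eta^4\overline\tau_4$. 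The step-size corollary is then routine balancing: choose $\eta$ to equate $1/(\eta N)$ with $(1+\eps)\eta$, and separately with the $\eta^4$ term, taking the minimum. When $\overline\tau_4=0$, the third term vanishes and the standard rate remains.

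The main obstacle is the middle step: producing exactly the $(1+\eps)^3/\log^2(1+2\eps)$ dependence, rather than something like $\eps^{-2}$ via Markov. I would first try splitting on whether $\max_i|\ell_i|\le\log(1+2\eps)$, where the cap cannot bind beyond a controlled amount, and bounding the complementary region using $\mathbf 1\{|\ell|\ge\log(1+2\eps)\}\le\ell^2/\log^2(1+2\eps)$.
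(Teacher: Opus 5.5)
The scaffolding is right and matches the paper: Lemma~\ref{lem:master-descent}, the second moment from Proposition~\ref{prop:weighted-second-moment}, the reduction of each group bias to $2R\sqrt{2L\log\mathcal C}\,\E[(\overline\rho_n^{(b)}-(1+\eps))_+\mid\cH_n]$, and the final step-size balancing. The gap is the step you flag yourself, and none of your proposed ways through it works.

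\emph{The middle step fails as written.} The pathwise inequality $(\overline\rho-1-\eps)_+\lesssim(1+\eps)\ell_{\max}^2/\log(1+2\eps)$ is false once $\ell_{\max}$ is not small. The reason is that $\overline\rho$ can be as large as $e^{\ell_{\max}}$, and $\ell_{\max}$ grows with the drift $\norm{\theta_n-\theta_{n-\tau}}_2$, which is not bounded uniformly in $\eta\tau$. Your splitting repair loses a power of the threshold. On $\{\ell_{\max}>\log(1+2\eps)\}$, the bound $\mathbf{1}\{\cdot\}\le\ell_{\max}^2/\log^2(1+2\eps)$ controls the removed mass only by $\E[\overline\rho\,\ell_{\max}^2\mid\cH_n]/\log^2(1+2\eps)$. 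After squaring, this gives a $\log^{-4}(1+2\eps)\asymp\eps^{-4}$ coefficient, which is the TIC-type penalty rather than the claimed $\eps^{-2}$. The Markov/variance route is the paper's chi-square branch. However, $\chi_n^{2,(b)}$ is not $O(\norm{\theta_n-\theta_{n-\tau}}_2^2)$ under Assumptions~\ref{as:lip}--\ref{as:async} alone; it can be exponentially large in the drift. This is exactly why Corollary~\ref{cor:gmc-local-overlap} needs the extra overlap hypothesis.

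\emph{The missing idea is to control the excess in expectation through KL, not through pathwise log-ratios.} Let $\varphi(u)=u\log u-u+1$. Convexity and the tangent line at $1+2\eps$ give $\varphi(u)\ge\log(1+2\eps)\,(u-(1+\eps))_+$ for all $u\ge0$, using $(1+\eps)\log(1+2\eps)\ge2\eps$. Jensen over the group gives $\varphi(\overline\rho_n^{(b)})\le G^{-1}\sum_i\varphi(\rho_n^{(b,i)})$. Change of measure gives $\E[\varphi(\rho_n^{(b,i)})\mid\cH_n]=D_n^{(b)}$. Hence $\E[(\overline\rho_n^{(b)}-(1+\eps))_+\mid\cH_n]\le D_n^{(b)}/\log(1+2\eps)$, with a single log factor and no exponential. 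Lemma~\ref{lem:trajectory-kl} then gives $D_n^{(b)}\le\frac{TL}{2}\norm{\theta_n-\theta_{n-\tau}}_2^2$.

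\emph{Your drift conversion also needs adjustment.} The delay-regularity clause only equates conditional and unconditional \emph{second} moments of $\norm{\theta_n-\theta_{n-t}}_2$. It therefore cannot be applied to the fourth moments $\E\norm{g_k}_2^4$ as you suggest. The paper instead writes $(D_n^{(b)})^2\le D_{\max}D_n^{(b)}$, where $D_{\max}=4(1+\eps)^2TR^2L^2\log\mathcal C\,\eta^2t^2$ is deterministic and comes from the almost-sure bound on $\norm{g_s}_2$. It then bounds $\E[D_n^{(b)}\mid\tau_n^{(b)}=t]$ through the second-moment bound. This is what produces $(1+\eps)^3$; the purely pathwise alternative you mention yields only $(1+\eps)^4$.
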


\begin{remark}[Effect of rollout delay on convergence rate] 
\label{rem:delay-tolerance}
We use \Cref{thm:gmc-main} to characterize how much rollout delay can be tolerated without degrading the convergence rate. Suppose $\overline\tau_4>0$.
If $\overline\tau_4=O(N^{3/2})$, choosing
$\eps\asymp\min\{1,\overline\tau_4^{1/7}N^{-3/14}\}$ and
$\eta\asymp N^{-1/2}$ preserves the no-delay $O(N^{-1/2})$
convergence guarantee; the delay contribution is lower order if
$\overline\tau_4=o(N^{3/2})$.
For larger scale delays, where
$\overline\tau_4=\Theta(N^p)$ with $3/2<p<4$, the rate-optimal choices
are $\eps=\Theta(1)$ and $\eta\asymp N^{-(p+1)/5}$, which yield the
slower convergence guarantee $O(N^{-(4-p)/5})$. Thus, the bound tolerates fourth delay moments up to $N^{3/2}$ without degrading the standard rate. Under polynomial scaling, moments below $N^4$ still yield a vanishing convergence guarantee, but at a slower convergence rate.
\end{remark}

\begin{corollary}[Asymptotically optimal GMC threshold]
\label{cor:gmc-threshold}
Under the conditions of Theorem~\ref{thm:gmc-main}, assume that
$\overline\tau_4 >0$ does not scale with $N$.
Then as $N\to\infty$, any minimizer of the leading $\eps$-dependent terms over $\eps>0$ in the derived GMC upper bound satisfies
$
\eps_{\mathrm{GMC}}^\star
=\Theta\left(\overline\tau_4^{1/7}N^{-3/14}\right).
$
\end{corollary}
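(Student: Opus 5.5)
The corollary concerns the $\eps$-dependent part of the bound in Theorem~\ref{thm:gmc-main}. We take the theorem's bound with $\eta$ set to $\eta^\star$ and minimize it over $\eps$. Write $\ceps:=(1+\eps)^3/\log^2(1+2\eps)$. After tuning $\eta$, the bound is
\[
\Phi_N(\eps)=\sqrt{(1+\eps)/N}+\bigl(\ceps\,\overline\tau_4\,N^{-4}\bigr)^{1/5}.
\]
The first term is increasing in $\eps$. The second blows up like $\eps^{-2/5}$ as $\eps\to0$, because $\log(1+2\eps)\sim2\eps$. It also grows like $(1+\eps)^{3/5}/\log^{2/5}(1+2\eps)$ as $\eps\to\infty$. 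So $\Phi_N$ is continuous on $(0,\infty)$, tends to $+\infty$ at both ends, and has at least one minimizer.

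\textbf{Step 1: minimizers go to zero.} For any fixed $\eps_0>0$ we have $\Phi_N(\eps_0)\asymp N^{-1/2}$, since the delay term is $O(N^{-4/5})=o(N^{-1/2})$. We also have $\Phi_N(\eps)\ge\sqrt{(1+\eps)/N}$. This lower bound alone does not force $\eps\to0$, so we compare more carefully. Choose the candidate $\eps_N:=\overline\tau_4^{1/7}N^{-3/14}\to0$ and check that
\[
\Phi_N(\eps_N)=N^{-1/2}+O\bigl(N^{-1/2}\,\eps_N^{1/2}\bigr),
\]
which is strictly better than $\sqrt{(1+\eps_0)/N}$ at any fixed $\eps_0>0$ for large $N$. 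Therefore every minimizer $\eps^\star_N$ tends to $0$.

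\textbf{Step 2: small-$\eps$ expansion.} In the regime $\eps\to0$, the only $\eps$-dependent leading terms are
\[
\sqrt{(1+\eps)/N}-N^{-1/2}\sim \frac{\eps}{2\sqrt N}
\qquad\text{and}\qquad
\bigl(\ceps\overline\tau_4N^{-4}\bigr)^{1/5}\sim(\overline\tau_4/4)^{1/5}\,\eps^{-2/5}N^{-4/5},
\]
where the second uses $\ceps\sim 1/(4\eps^2)$. So the leading problem is to minimize
\[
h(\eps)=a\eps+c\,\eps^{-2/5},\qquad a\asymp N^{-1/2},\quad c\asymp\overline\tau_4^{1/5}N^{-4/5}.
\]
Setting $h'(\eps)=a-\tfrac25c\,\eps^{-7/5}=0$ gives $\eps^{7/5}\asymp c/a=\overline\tau_4^{1/5}N^{-3/10}$. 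Hence
\[
\eps\asymp\overline\tau_4^{1/7}N^{-3/14},
\]
and this is the unique critical point because $h$ is strictly convex.

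\textbf{Step 3: error terms do not move the minimizer.} The main obstacle is making this rigorous, that is, showing the higher-order corrections do not shift the minimizer's order. On the set $\{\eps\le\delta\}$, the factors $(1+\eps)^3$, the ratio $\log(1+2\eps)/(2\eps)$, and $\sqrt{1+\eps}$ differ from their leading forms by factors in $[1-C\delta,1+C\delta]$. So the exact objective is sandwiched between two functions of the form $a'\eps+c'\eps^{-2/5}$, with constants within $(1\pm C\delta)$ of $a$ and $c$, up to an additive $\eps$-independent term. Any minimizer of the exact objective makes it at most $\min h\cdot(1+O(\delta))$. For $h$-type functions, any $\eps$ with value within a constant factor of the minimum lies within constant factors of $(c/a)^{5/7}$. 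This follows because $a\eps$ alone, or $c\eps^{-2/5}$ alone, would otherwise exceed $(1+O(\delta))\min h$.

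Combining Step 1, which lets $\delta\to0$, with this sandwich argument gives $\eps^\star_{\mathrm{GMC}}=\Theta(\overline\tau_4^{1/7}N^{-3/14})$.
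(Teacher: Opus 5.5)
Your argument is correct, and its first half follows the paper's proof. You establish existence of a minimizer because the optimized bound diverges at both ends of $(0,\infty)$. You then show every minimizer tends to $0$ by comparing with a vanishing test point; the paper uses $\eps=N^{-1/5}$ and you use $\overline\tau_4^{1/7}N^{-3/14}$, and either works.

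The localization step is where you differ. The paper works with the exact objective and its first-order condition. Any minimizer is an interior critical point, and since it tends to $0$, the asymptotics $-h'(\eps)\asymp\eps^{-7/5}$ for $h(\eps)=(1+\eps)^{3/5}\log^{-2/5}(1+2\eps)$ and $\frac{\mathrm d}{\mathrm d\eps}\sqrt{1+\eps}\asymp1$ turn stationarity directly into $N^{-1/2}\asymp\overline\tau_4^{1/5}N^{-4/5}(\eps^\star)^{-7/5}$. You instead argue at the level of values. You sandwich the $\eps$-dependent part $\Phi_N(\eps)-N^{-1/2}$ on $\{\eps\le\delta\}$ between two model functions $a_\pm\eps+c_\pm\eps^{-2/5}$ and use the scaling of $a\eps+c\eps^{-2/5}$.

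The two routes trade off as follows. The paper's is shorter, but it requires differentiating $h$ and only describes exact critical points. Yours needs only two-sided multiplicative approximations of $\sqrt{1+\eps}-1$ and $\log(1+2\eps)$. It also shows that every approximate minimizer, within a constant factor of the optimal $\eps$-dependent excess, has the same order. Letting $\delta\to0$ with the exact leading coefficients $a,c$, it even pins down the constant, $\eps^\star\sim(2c/(5a))^{5/7}$.

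You handle the additive $N^{-1/2}$ correctly by comparing only the $\eps$-dependent part. This is essential, since a $1+O(\delta)$ factor applied to that dominant term would swamp the $\eps$-dependent terms. One thing to add: the competitor you compare against, the minimizer of the model function (of order $\overline\tau_4^{1/7}N^{-3/14}\to0$), must itself lie in $\{\eps\le\delta\}$ for the sandwich to apply there. This holds for large $N$, but it should be said. Finally, rename your $(1+\eps)^3/\log^2(1+2\eps)$: in the paper $\ceps$ already denotes the TIC constant $\log(1+\eps)-\eps/(1+\eps)$.
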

\Cref{cor:gmc-threshold} reflects a simple intuition: larger delays favor a higher threshold to avoid reducing samples' contributions too aggressively, while more training iterations favor a lower threshold to keep the gradient estimator's second moment small.
\begin{corollary}
\label{cor:gmc-local-overlap}
Under the conditions of Theorem~\ref{thm:gmc-main}, further suppose that for some finite $\delta\ge0$,
$|\rho_n^{(b,i)}-1|\le\delta$ almost surely for all $n,b,i$.
For $\overline\tau_4>0$, taking
$\eta^\star\asymp\min\{1/\sqrt{(1+\eps)N},
(G^2\eps^2/((1+\delta)^2(1+\eps)^3\overline\tau_4N))^{1/5}\}$, 
then the group-concentration branch in the GMC bound gives
\[
\frac{1}{N}\sum_{n=0}^{N-1}
\mathbb{E}\|\nabla J(\theta_n)\|_2^2
=
O\left(
\sqrt{\frac{1+\eps}{N}}
+
\left(
\frac{
(1+\delta)^2(1+\eps)^3\overline{\tau}_4
}{
G^2\eps^2N^4
}
\right)^{1/5}
\right).
\]
\end{corollary}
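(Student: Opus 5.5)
We re-run the proof of Theorem~\ref{thm:gmc-main} with Lemma~\ref{lem:master-descent} as the master inequality, changing only how the GMC correction bias is bounded. The second-moment term needs no new work. By Proposition~\ref{prop:weighted-second-moment}, $\E\norm{g_n^{\mathrm{GMC}}}_2^2\le 8(1+\eps)LR^2\log|\cA|/T$, which contributes $O((1+\eps)\eta)$. The remaining task is to bound $\E\norm{b_n}_2^2$, where $b_n=\frac1B\sum_b \E[b^{w}_{n,b}\mid\cF_n^-]$, using the local-overlap hypothesis in place of the ratio-moment/log-barrier argument that produced the $\log^{-2}(1+2\eps)$ factor.

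\textbf{Step 1: bias envelope.} Use \cref{eq:weighted-bias-envelope1} with the GMC weights. The retained mass lost is
\[
\tfrac1G\textstyle\sum_i(1-\alpha_n^{(b),\star})\rho_n^{(b,i)}=(\overline\rho_n^{(b)}-1-\eps)_+ .
\]
It therefore suffices to bound $\E[(\overline\rho_n^{(b)}-1-\eps)_+\mid\cH_n]$.

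\textbf{Step 2: group concentration.} Conditional on $\cH_n$, the ratios $\rho_n^{(b,i)}$ are i.i.d. with mean $1$; this uses $\theta_n\in\cF_n^-$ and Assumption~\ref{as:async}. They take values in $[0,1+\delta]$. Write $\sigma^2:=\mathrm{Var}(\rho\mid\cH_n)$. Then
\[
(\overline\rho-1-\eps)_+\le \frac{(\overline\rho-1)^2}{\eps}\,\mathbf 1\{\overline\rho-1>\eps\}\le \frac{(\overline\rho-1)^2}{\eps},
\]
which gives $\E[(\overline\rho-1-\eps)_+\mid\cH_n]\le \sigma^2/(G\eps)$.

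Next, link $\sigma^2$ to delay. Since $\rho\le1+\delta$, we have $\sigma^2=\E[(\rho-1)^2]\le(1+\delta)\,\E[(\rho-1)^2/\rho]$, and $\E[(\rho-1)^2/\rho]$ is the $\chi^2$-divergence of the behavior policy from the current one. Alternatively, bound $|\rho-1|\le(1+\delta)|\log\rho|$ on the range, and bound $|\log\rho|$ via the token-level scores: by the mean value theorem, $|\log\rho|\le \sum_t\norm{\nabla\log\pi}\,\norm{\theta_n-\theta_{n-\tau}}$. The trajectory score second moment is controlled as in the proof of Proposition~\ref{prop:weighted-second-moment}, giving a bound of order $T L\log\mathcal C$. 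The outcome is $\sigma^2\lesssim (1+\delta)\,T L\log\mathcal C\,\norm{\theta_n-\theta_{n-\tau}}_2^2$.

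\textbf{Step 3: squaring and delay conversion.} Combining Steps 1 and 2 and applying Jensen over $b$,
\[
\E\norm{b_n}_2^2\lesssim \frac{(1+\delta)^2}{G^2\eps^2}\,\E\norm{\theta_n-\theta_{n-\tau}}_2^4 .
\]
The fourth power of the drift is handled exactly as in Theorem~\ref{thm:gmc-main}. Write $\theta_n-\theta_{n-t}=\eta\sum_{m=n-t}^{n-1}g_m$, apply the pathwise bound on $\norm{g_m}$ implied by the weight constraint (the group average of the weighted ratios is at most $1+\eps$, and scores and advantages are bounded), and use the delay-regularity condition to replace $t$ by $\tau_n^{(b)}$. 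This yields
\[
\E\norm{\theta_n-\theta_{n-\tau}}_2^4\lesssim \eta^4(1+\eps)^{3}\,\E[\tau^4],
\]
with the same $(1+\eps)^3$ bookkeeping as in the theorem's delay term. The delay term becomes $\frac{(1+\delta)^2(1+\eps)^3}{G^2\eps^2}\eta^4\overline\tau_4$; that is, $\log^2(1+2\eps)$ is replaced by $G^2\eps^2/(1+\delta)^2$.

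\textbf{Step 4: step-size balancing.} This is the same as in the theorem. With the stated $\eta^\star$, the first minimum branch balances $1/(\eta N)$ against $(1+\eps)\eta$, giving $\sqrt{(1+\eps)/N}$. The second branch balances $1/(\eta N)$ against the delay term, giving the fifth-root expression.

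\textbf{Main obstacle.} I expect the hardest part to be Step 2's linkage between the conditional variance of $\rho$ and the parameter drift, with the right powers of $(1+\delta)$. The difficulty is that $\rho$ is a trajectory-level ratio, so $\log\rho$ sums $T$ token terms. My first attempt is to reuse whatever ratio-to-drift inequality the proof of Theorem~\ref{thm:gmc-main} established for $\E|\rho-1|$, upgraded to second order using $\rho\le1+\delta$. A secondary point needing care is that squaring the Step 2 bound produces $\norm{\cdot}^4$ rather than a product of conditional expectations. This is handled by applying Jensen's inequality to the conditional expectation before squaring.
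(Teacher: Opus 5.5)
Your skeleton matches the paper's chain: bias bounded by the excess group mass, then $(u-1-\eps)_+\le(u-1)^2/\eps$ with conditional independence giving $\sigma^2/(G\eps)$ (your $\sigma^2$ is exactly $\chi_n^{2,(b)}$), then fourth-order drift and step-size balancing. The gap is the link you flagged as the main obstacle, and that link is what carries the $\delta$-dependence of the statement.

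\textbf{Why your Step 2 link fails.} Your reverse-$\chi^2$ bound $\sigma^2\le(1+\delta)\E[(\rho-1)^2/\rho]$ is never connected to the drift. Moreover, $\E[(\rho-1)^2/\rho]$ is not controlled by $\KL(\pi_{\theta_n}\|\pi_{\theta_{n-\tau}})$ without a lower bound on $\rho$. Your log-ratio route squares the factor. From $|\rho-1|\le(1+\delta)|\log\rho|$ you only get $\sigma^2\le(1+\delta)^2\E[(\log\rho)^2]$. No sharper constant exists, because at $\rho=1+\delta$ the ratio $(\rho-1)^2/(\log\rho)^2=\delta^2/\log^2(1+\delta)$ grows faster than $1+\delta$. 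Carried through your Steps 3--4, this gives $(1+\delta)^4$ instead of the stated $(1+\delta)^2$. So your claimed outcome $\sigma^2\lesssim(1+\delta)\norm{\theta_n-\theta_{n-\tau}}_2^2$ follows from neither route.

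There is also a separate problem with the mean-value step. It evaluates the trajectory score at a $y$-dependent intermediate point $\tilde\theta$ between $\theta_{n-\tau}$ and $\theta_n$. Assumption~\ref{as:coverage} is stated only for policies actually encountered in training, so it does not apply at $\tilde\theta$. The moment bound \cref{eq:scoremoment} is an expectation under $\pi_{\tilde\theta}$, not under the behavior policy. Hence neither justifies the score bound you use there.

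\textbf{The missing idea: go through KL, not the log-ratio.} Let $\varphi(u)=u\log u-u+1$. Then $\varphi(1)=\varphi'(1)=0$ and $\varphi''(u)=1/u\ge(1+\delta)^{-1}$ on $(0,1+\delta]$. Taylor's formula gives $\varphi(u)\ge(u-1)^2/(2(1+\delta))$. Taking behavior-policy expectations yields $\chi_n^{2,(b)}\le2(1+\delta)D_n^{(b)}$, which is Corollary~\ref{cor:local-overlap-kl}. This bound has three advantages:
\begin{itemize}
\item it is linear in $1+\delta$;
\item it uses only the upper bound $\rho\le1+\delta$;
\item it needs no coverage along the segment, since $D_n^{(b)}\le\frac{TL}{2}\norm{\theta_n-\theta_{n-\tau_n^{(b)}}}_2^2$ follows from Assumption~\ref{as:lip} alone (Lemma~\ref{lem:trajectory-kl}).
\end{itemize}

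Working with $D_n^{(b)}$ also lets you cite \cref{eq:fourthdrift} directly: $\E[(D_n^{(b)})^2\mid\tau_n^{(b)}]\lesssim(1+\eps)^3\eta^4(\tau_n^{(b)})^4$. The exponent $3$ there comes from $D^2\le D_{\max}D$, with $D_{\max}$ bounded pathwise and $\E[D\mid\tau]$ bounded through the delay-regularity second-moment bound. Bounding $\norm{\theta_n-\theta_{n-\tau}}_2^4$ purely pathwise, as your Step 3 describes, would instead give $(1+\eps)^4$. With these replacements, your Steps 1, 3 and 4 coincide with the paper's proof and give the stated rate.
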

\Cref{cor:gmc-local-overlap} shows that the delay-dependent term decreases as $G^{-2/5}$, so larger groups reduce the effect of rollout delays on the convergence bound.
\begin{remark}
\label{rem:group-size-gain}
Corollary~\ref{cor:gmc-local-overlap} reflects a
simple distinction between group-mass capping (GMC) and individual clipping. Importance
ratios have conditional mean one under the behavior policy, and GMC intervenes
only when their empirical group average is unusually large. As the group size
grows, this average concentrates around one, so this excess as well as the bias introduced by correcting it vanishes. In contrast, individual clipping makes a
separate decision for every trajectory \citep{pang2025tic, huang2026stable}; averaging more such decisions does not
remove the clipping bias already introduced at the trajectory level.

The condition $|\rho_n^{(b,i)}-1|\le\delta$ in
Corollary~\ref{cor:gmc-local-overlap} identifies a
near-policy regime in which this concentration benefit carries over to delayed
training. When the behavior and current policies assign comparable likelihoods to
the sampled trajectories, the importance ratios remain bounded, and the delay
term decays with the group size. As this overlap deteriorates, the constant
$(1+\delta)^2$ grows and the refinement loses its advantage, although
Theorem~\ref{thm:gmc-main} continues to apply. The group-size gain should
 be interpreted as a local benefit of aggregating trajectories with
controlled likelihood ratios, rather than a universal claim that larger groups
eliminate bias under arbitrary distribution shift.
\end{remark}

\subsection{Proof Outline and the Role of Rollout Delay}\label{sec:proofoutline}
\vspace{-0.4em}

We outline the proof of \Cref{thm:gmc-main}, which further explains how rollout delay affects convergence of asynchronous training. Theorem~\ref{thm:gmc-main} follows from Lemma~\ref{lem:master-descent} by bounding the second moment and bias term separately. Proposition~\ref{prop:weighted-second-moment} controls the second-moment term uniformly, independently of delay, which yields the $O(\eta)$ term. Delay therefore enters only through the correction bias, via a three-step chain. First, Proposition~\ref{prop:weighted-second-moment} bounds the bias of each group by the importance ratio it removes, which for GMC equals the group average's excess over the cap: $\E[(\overline\rho_n^{(b)}-(1+\eps))_+\mid\cH_n]$. Second, a convexity argument bounds this excess by $D_n^{(b)}/\log(1+2\eps)$, where $D_n^{(b)}:=\KL(\pi_{\theta_n}(\cdot\mid x^{(b)})\,\|\,\pi_{\theta_{n-\tau_n^{(b)}}}(\cdot\mid x^{(b)}))$ measures the mismatch between the current and behavior policies. Third, Assumption~\ref{as:lip} gives $D_n^{(b)}\le\frac{TL}{2}\|\theta_n-\theta_{n-\tau_n^{(b)}}\|_2^2$, and the uniform second-moment bound limits the parameter drift accumulated over $\tau_n^{(b)}$ updates to $O(\eta^2(\tau_n^{(b)})^2)$. Squaring the bias thus produces the term of order $\eta^4\overline\tau_4/\log^2(1+2\eps)$. The second-moment control therefore plays a dual role: it stabilizes each update, and it also limits how far the policy can drift within a delay window.

\begin{remark}[Comparison with asynchronous SGD]
\label{rem:asgd}
Delay plays a different role in Theorem~\ref{thm:gmc-main} than in classical asynchronous stochastic gradient descent (ASGD) \citep{tan2023abs, maranjyan2025ringmaster, koloskova2022sharper, mishchenko2022asynchronous}. In ASGD, an update applied at iteration $n$ is conditionally unbiased for the gradient at the stale iterate $\theta_{n-\tau_n}$ rather than at the current iterate $\theta_n$, so staleness enters the descent argument directly through the discrepancy between the two gradients. In our setting, the trajectory-level ratio instead changes measure from the behavior policy $\pi_{\theta_{n-\tau_n^{(b)}}}$ to the current policy $\pi_{\theta_n}$, so the uncorrected estimator remains unbiased for $\nabla J(\theta_n)$ regardless of delay. Staleness instead enlarges the spread of the importance ratios, which makes a second-moment correction necessary; delay then affects the bound only through the bias of that correction as explained above.
This difference also changes the appropriate remedy. ASGD commonly bounds staleness by discarding updates whose delay exceeds a threshold \citep{tan2023abs, maranjyan2025ringmaster}, but the analogous gating in LLM post-training preferentially discards long responses and thus induces a length bias \citep{wang2025infinite}. Improving the delay dependence therefore calls for RL algorithms that minimize the bias under the same second-moment control, which is precisely what group-mass-capping is designed to do.
\end{remark}

\subsection{Comparison with TIC-GRPO}
\vspace{-0.4em}

We now compare GMC-GRPO with TIC-GRPO \citep{pang2025tic}, which replaces the token-level ratios of GRPO with a single trajectory-level ratio and clips it from above, corresponding to $w_n^{(b,i)}=\mathbf 1\{\rho_n^{(b,i)}\le1+\eps\}$ in \cref{eq:weighted-update}. TIC-GRPO is a natural baseline for two reasons. First, it provably converges faster than GRPO \citep{pang2025tic}. Second, it belongs to the weighted gradient estimator class and therefore shares the second-moment bound of Proposition~\ref{prop:weighted-second-moment} with GMC-GRPO. The following theorem establishes the convergence of TIC-GRPO under stochastic delays.

\begin{theorem}[Convergence of asynchronous TIC-GRPO]\label{thm:tic-main}
Let $\ceps:=\log(1+\eps)-\eps/(1+\eps)$. Under Assumptions~\ref{as:lip}--\ref{as:async}, for every $\eta>0$ and $N\ge1$, TIC-GRPO satisfies the bound of Theorem~\ref{thm:gmc-main} with $\log^2(1+2\eps)$ replaced by $\ceps^2$. Consequently, for $\overline\tau_4>0$, taking $\eta^\star\asymp\min\{1/\sqrt{(1+\eps)N},(\ceps^2/((1+\eps)^3\overline\tau_4N))^{1/5}\}$ gives
\[
\frac1N\sum_{n=0}^{N-1}\E\|\nabla J(\theta_n)\|_2^2
=O\left(\sqrt{\frac{1+\eps}{N}}+\left(\frac{(1+\eps)^3\overline\tau_4}{\ceps^2N^4}\right)^{1/5}\right).
\]
\end{theorem}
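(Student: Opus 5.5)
The proof follows the same route as for \Cref{thm:gmc-main}: we apply Lemma~\ref{lem:master-descent} to TIC-GRPO, i.e.\ to the weighted estimator with $w_n^{(b,i)}=\mathbf 1\{\rho_n^{(b,i)}\le1+\eps\}$. These weights satisfy \cref{eq:admissible-weights}, because every retained ratio is at most $1+\eps$. Hence Proposition~\ref{prop:weighted-second-moment} bounds $\E\norm{g_n}_2^2$ by $O((1+\eps)/T)$, uniformly in the delay, and this gives the $O(1/(\eta N)+(1+\eps)\eta)$ terms unchanged. Since $\E[g_n\mid\cF_n^-]$ differs from $T^{-1}\nabla J(\theta_n)$ only by the average of the group biases, Jensen gives $\E\norm{b_n}_2^2\le\frac1B\sum_b\E\norm{\E[b^w_{n,b}\mid\cH_n]}_2^2$. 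We then use \cref{eq:weighted-bias-envelope1}. For TIC the removed mass is
\[
\E\Bigl[\tfrac1G\textstyle\sum_i \rho_n^{(b,i)}\mathbf 1\{\rho_n^{(b,i)}>1+\eps\}\Bigm|\cH_n\Bigr]
=\E_{y\sim\pi_{\theta_n}}\bigl[\mathbf 1\{\rho>1+\eps\}\bigm| x^{(b)}\bigr],
\]
which is the probability, under the \emph{current} policy, that the trajectory ratio exceeds the cap.

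The key new step replaces the convexity bound $(\overline\rho-(1+\eps))_+\lesssim D/\log(1+2\eps)$ used for GMC with a tail bound. We will show
\[
\Pr_{\pi_{\theta_n}}(\rho>1+\eps)\le D_n^{(b)}/\ceps,\qquad D_n^{(b)}=\KL\bigl(\pi_{\theta_n}\,\|\,\pi_{\theta_{n-\tau}}\bigr)=\E_{\pi_{\theta_n}}[\log\rho].
\]
The plan is to write $\KL=\E_{\pi_{\theta_n}}[\log\rho-1+1/\rho]$, which holds because $\E_{\pi_{\theta_n}}[1/\rho]=1$ under coverage. The integrand $h(\rho)=\log\rho-1+1/\rho$ is nonnegative, decreasing on $(0,1)$ and increasing on $(1,\infty)$. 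On the event $\{\rho>1+\eps\}$ we therefore have $h(\rho)\ge h(1+\eps)=\log(1+\eps)-\eps/(1+\eps)=\ceps$. Markov's inequality applied to $h(\rho)$ then gives the claim. The appearance of exactly this $\ceps$ in the theorem is strong evidence that this is the intended inequality.

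The rest copies the GMC chain. Assumption~\ref{as:lip} gives $D_n^{(b)}\le\frac{TL}{2}\norm{\theta_n-\theta_{n-\tau_n^{(b)}}}_2^2$ via a second-order expansion of the sequence-level KL, whose per-token Fisher-type curvature is bounded through $L$. By Cauchy--Schwarz, $\norm{\theta_n-\theta_{n-t}}_2^2\le\eta^2 t\sum_{k=n-t}^{n-1}\norm{g_k}_2^2$. Squaring introduces a fourth power, so the argument needs $\E\norm{\theta_n-\theta_{n-t}}_2^4\lesssim \eta^4 t^4\bigl((1+\eps)/T\bigr)^2$. To get this we use the almost-sure bound on $\norm{g_k}_2$, which follows from $w\rho\le1+\eps$, $|\widehat A|\le2R$ and the coverage-based score bound. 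The delay-regularity clause of Assumption~\ref{as:async} lets us condition on $\tau_n^{(b)}=t$ and average to obtain $\overline\tau_4$. Collecting constants gives the bias contribution $O\bigl((1+\eps)^3\eta^4\overline\tau_4/\ceps^2\bigr)$, with the same powers of $1+\eps$ as in \Cref{thm:gmc-main}: one factor comes from the second moment, and the drift squared contributes $(1+\eps)^2$.

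Finally, the rate follows from balancing the three terms exactly as in \Cref{thm:gmc-main}. With $\eta^\star$ as stated, the term $1/(\eta N)$ is at most the sum of $\sqrt{(1+\eps)/N}$ and $\bigl((1+\eps)^3\overline\tau_4/(\ceps^2N^4)\bigr)^{1/5}$, and the other two terms are dominated likewise. The main obstacle is the fourth-moment drift bound. If only second moments of $g_k$ were available, the fourth power would fail, so I would lean on the almost-sure boundedness of the admissible estimator and check that the constant retains the $(1+\eps)$ rather than a worst-case $\mathcal C$ factor. A secondary check is the tail lemma, which needs coverage so that $\E_{\pi_{\theta_n}}[1/\rho]=1$ holds exactly.
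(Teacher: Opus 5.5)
Your proposal is correct and follows the paper's proof essentially step for step. Your Markov bound on $h(\rho)=\log\rho-1+1/\rho$ under $\pi_{\theta_n}$ is exactly the paper's inequality $\ceps u\mathbf 1\{u>1+\eps\}\le\varphi(u)$ with $\varphi(u)=u\log u-u+1$, rewritten after the change of measure, and the coverage-based bias envelope, the trajectory-KL bound, and the step-size balancing are used in the same way.

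One sentence is inconsistent with your own final accounting. Your stated target $\E\norm{\theta_n-\theta_{n-t}}_2^4\lesssim\eta^4t^4((1+\eps)/T)^2$ is not delivered by the almost-sure bound alone, which gives $(1+\eps)^4$. The factor $(1+\eps)^3$ comes from the paper's split $\E[(D_n^{(b)})^2\mid\tau_n^{(b)}=t]\le D_{\max}\,\E[D_n^{(b)}\mid\tau_n^{(b)}=t]$. Here $D_{\max}$ uses the almost-sure drift, and $\E[D_n^{(b)}]$ uses the second-moment and delay-regularity bound; this is what your closing sentence describes.
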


\begin{remark}[Convergence bound improvement]
\label{rem:gmc-fourth-order}
Theorems~\ref{thm:gmc-main} and~\ref{thm:tic-main} differ only in the
fourth-order delay coefficient: TIC-GRPO replaces $\log^2(1+2\eps)$ in the
GMC-GRPO bound with $\ceps^2$. Since $\ceps<\log(1+2\eps)$ for every
$\eps>0$, the GMC coefficient is strictly smaller, and after optimizing the
step size the delay-dependent rate improves by the factor
$(\ceps/\log(1+2\eps))^{2/5}<1$. This distinction is substantial for a tight
threshold. As $\eps\to0$, $\ceps=\Theta(\eps^2)$ while
$\log(1+2\eps)=\Theta(\eps)$; hence the TIC penalty scales as $\eps^{-4}$,
whereas the GMC penalty scales as $\eps^{-2}$. GMC obtains this improvement
without enlarging the conditional second-moment upper bound.

This improvement arises from how much importance weight each algorithm
removes. 
Following the discussion in \Cref{sec:proofoutline}, 
GMC-GRPO removes only the amount by which the group average exceeds $1+\eps$, whereas TIC-GRPO removes the full contribution of every trajectory above the threshold. The KL divergence between the current and behavior policies bounds
these two quantities with denominators $\log(1+2\eps)$ and $\ceps$, respectively, producing the coefficient difference above.
\end{remark}

Finally, the delay term in Theorem~\ref{thm:tic-main} does not improve with the group size, while GMC-GRPO benefits from larger groups under local overlap (Remark~\ref{rem:group-size-gain}). Section~\ref{sec:experiments} examines whether these theoretical gaps translate into faster training in practice.

\begin{figure}[t]
\centering
\includegraphics[width=0.95\linewidth]{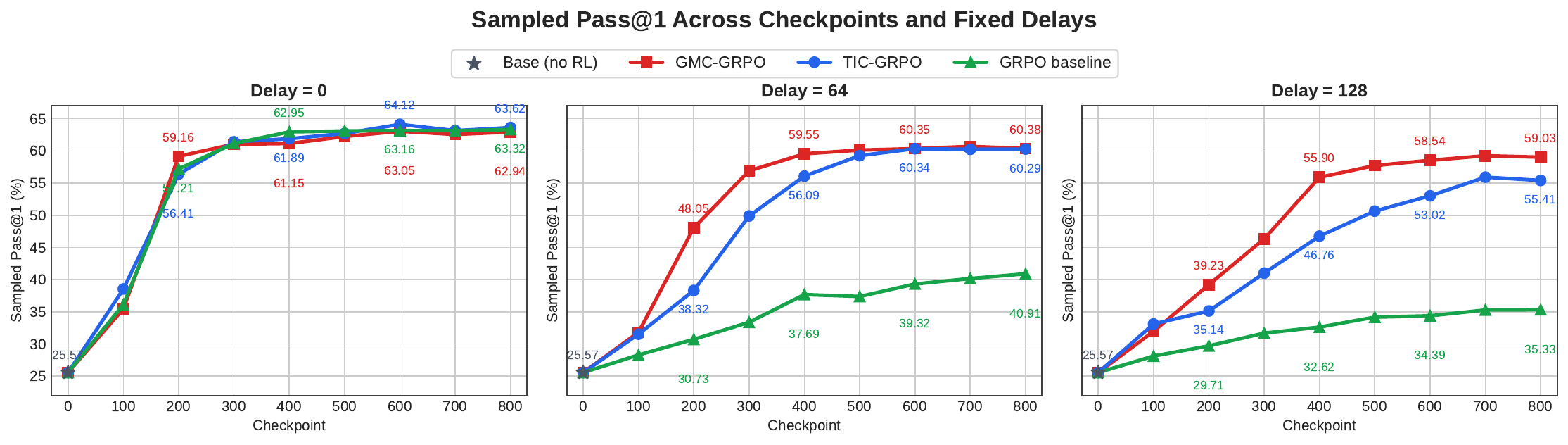}
\caption{Sampled Pass@1 on MATH500 across training checkpoints under fixed
rollout delays. GMC-GRPO remains robust as staleness increases,
substantially outperforming vanilla GRPO and competitive with or
exceeding TIC-GRPO.}
\label{fig:fixed-delay}
\end{figure}

\section{Experiments}\label{sec:experiments}
\vspace{-0.4em}

In this section, we evaluate GMC-GRPO against several baselines: (i) vanilla GRPO
\citep{shao2024deepseekmath}; (ii) GAC, which controls stale-aligned update
directions through gradient projection \citep{xu2026gac}; (iii) VCPO, which
combines effective-sample-size-guided step-size scaling with an off-policy
minimum-variance baseline \citep{huang2026stable}; and (iv) TIC-GRPO
\citep{pang2025tic}. Among these methods, GAC and VCPO are designed specifically for asynchronous training. We first evaluate robustness under controlled fixed delays
and then construct a simulated asynchronous system in which rollout delay is
correlated with response length. Detailed parameter settings and implementation
details are deferred to Appendix~\ref{sec:experiment-setup}.
We also discuss related approaches that selectively clip a fraction of importance ratios
\citep{fu2026areal, li20253po} and report an additional comparison with A-3PO \citep{li20253po}
in Appendix~\ref{additional-experiment-results}.

\textbf{Controlled Fixed Delays.}
To isolate the effect of staleness, we evaluate GRPO, TIC-GRPO, and GMC-GRPO
on Qwen3-1.7B under fixed rollout delays. Figure~\ref{fig:fixed-delay} reports
sampled Pass@1 on MATH500 for delays of $D=0,64,128$ learner updates; see
Appendix~\ref{sec:experiment-setup}.
The full sweep over $D\in\{0,32,64,128,256,512\}$ is deferred to
Appendix~\ref{additional-experiment-results}.
Without delay, all three methods attain similar final performance. At delay 64,
GMC-GRPO learns faster than TIC-GRPO and reaches comparable final Pass@1,
with both outperforming vanilla GRPO. At delay 128, GMC-GRPO
achieves 59.03\% Pass@1, compared with 55.41\% for TIC-GRPO
and 35.33\% for vanilla GRPO. These results show that 
GMC supports faster learning and stronger final performance as staleness increases.

\textbf{Simulated Asynchronous Delays.}
The fixed-delay study isolates staleness, but real rollout workers finish at
different times: longer responses typically take longer to generate and thus
reach the learner under older policies. To capture this coupling, we simulate an
asynchronous pipeline in which rollout delay depends on generation length,
producing heterogeneous delays averaging approximately 160 learner updates for
Qwen3-1.7B and 140 for Qwen3-4B and Qwen3-8B. This setting tests whether
GMC-GRPO remains robust when staleness varies across rollouts.
Among the methods reported in Table~\ref{tab:async-transfer}, GMC-GRPO achieves
the highest mean avg@$n$ scores across all three model sizes
(41.47, 56.20, and 59.62) and the highest mean greedy scores on Qwen3-1.7B and
Qwen3-8B (48.77 and 64.37). On Qwen3-4B, its mean greedy score of 60.63 is
0.83 points below TIC-GRPO. Notably, GMC-GRPO also outperforms the two of the most recent asynchronous-RL stabilization methods: GAC, which controls stale-aligned update directions via gradient projection, and VCPO, which combines ESS-guided step-size scaling with a minimum-variance baseline. Both improve over vanilla GRPO under delay but remain below TIC-GRPO-style trajectory correction at 4B and 8B. These results indicate that the robustness of
group-mass capping is not restricted to fixed delays, but holds under heterogeneous, length-dependent staleness. 
 
\begin{table}[t]
\centering
\small
\setlength{\tabcolsep}{3.5pt}
\renewcommand{\arraystretch}{1.25}
\caption{Performance under simulated asynchronous delays. Each benchmark cell is greedy/avg@$n$, with $n=16$ for MATH500, Minerva, and AMC23, and $n=8$ for GSM8K, Gaokao, and OlympiadBench. \emph{Avg. perf.} is the average over the six benchmarks, reported as mean greedy/mean avg@$n$.}
\label{tab:async-transfer}
\resizebox{\linewidth}{!}{%
\begin{tabular}{lcccccccc}
\toprule
\textbf{Method}
& \textbf{Avg. Delay}
& \textbf{MATH500}
& \textbf{GSM8K}
& \textbf{Gaokao}
& \textbf{Olympiad}
& \textbf{Minerva}
& \textbf{AMC23}
& \textbf{Avg. perf.} \\
\midrule
\multicolumn{9}{c}{\textit{Qwen3-1.7B}} \\
\midrule
GRPO
& 158.70
& 48.20 \,/\, 31.04
& 72.18 \,/\, 47.90
& 39.74 \,/\, 26.27
& 14.69 \,/\, 6.77
& 16.18 \,/\, 8.32
& 22.50 \,/\, 9.53
& 35.58 \,/\, 21.64 \\
GAC
& 160.02
& 47.00 \,/\, 31.03
& 72.18 \,/\, 47.09
& 40.26 \,/\, 25.16
& 15.73 \,/\, 6.70
& 15.81 \,/\, 8.18
& 20.00 \,/\, 11.09
& 35.16 \,/\, 21.54 \\
VCPO
& 157.87
& 65.20 \,/\, 54.55
& 80.21 \,/\, 67.76
& 55.06 \,/\, 45.06
& 27.60 \,/\, 20.49
& 26.47 \,/\, 18.31
& 27.50 \,/\, 26.72
& 47.01 \,/\, 38.82 \\
TIC-GRPO
& 159.60
& 62.20 \,/\, 53.61
& \textbf{82.34} \,/\, 67.14
& 53.25 \,/\, 43.02
& 27.60 \,/\, 19.51
& \textbf{26.84} \,/\, 17.53
& 32.50 \,/\, 26.25
& 47.45 \,/\, 37.84 \\
\ours \textbf{GMC-GRPO}
& 159.75
& \textbf{66.00} \,/\, \textbf{57.29}
& 81.65 \,/\, \textbf{72.66}
& \textbf{55.32} \,/\, \textbf{47.73}
& \textbf{28.19} \,/\, \textbf{22.61}
& 26.47 \,/\, \textbf{20.11}
& \textbf{35.00} \,/\, \textbf{28.44}
& \textbf{48.77} \,/\, \textbf{41.47} \\
\midrule
\multicolumn{9}{c}{\textit{Qwen3-4B}} \\
\midrule
GRPO
& 137.92
& 57.40 \,/\, 43.26
& 74.30 \,/\, 51.56
& 48.57 \,/\, 34.35
& 22.11 \,/\, 9.66
& 18.01 \,/\, 11.99
& 35.00 \,/\, 17.19
& 42.57 \,/\, 28.00 \\
GAC
& 138.85
& 58.00 \,/\, 42.39
& 77.79 \,/\, 55.32
& 50.39 \,/\, 33.64
& 21.81 \,/\, 10.55
& 16.18 \,/\, 12.22
& 35.00 \,/\, 18.91
& 43.19 \,/\, 28.84 \\
VCPO
& 139.50
& 61.00 \,/\, 48.39
& 76.88 \,/\, 64.30
& 46.75 \,/\, 36.30
& 29.38 \,/\, 13.61
& 26.84 \,/\, 16.98
& 37.50 \,/\, 21.88
& 46.39 \,/\, 33.58 \\
TIC-GRPO
& 138.87
& 76.40 \,/\, 68.31
& 90.75 \,/\, 81.70
& \textbf{65.19} \,/\, 55.68
& 42.88 \,/\, 33.33
& \textbf{36.03} \,/\, 27.55
& \textbf{57.50} \,/\, 43.91
& \textbf{61.46} \,/\, 51.75 \\
\ours \textbf{GMC-GRPO}
& 141.01
& \textbf{77.60} \,/\, \textbf{72.54}
& \textbf{91.13} \,/\, \textbf{87.03}
& \textbf{65.19} \,/\, \textbf{60.91}
& \textbf{43.18} \,/\, \textbf{37.72}
& 34.19 \,/\, \textbf{30.10}
& 52.50 \,/\, \textbf{48.91}
& 60.63 \,/\, \textbf{56.20} \\
\midrule
\multicolumn{9}{c}{\textit{Qwen3-8B}} \\
\midrule
GRPO 
& 140.90
& 61.80 \,/\, 49.41
& 88.70 \,/\, 72.74
& 55.06 \,/\, 38.83
& 27.30 \,/\, 16.49
& 21.69 \,/\, 16.96
& 25.00 \,/\, 22.03
& 46.59 \,/\, 36.08 \\
GAC
& 141.7
& 61.20 \,/\, 48.34
& 88.10 \,/\, 72.09
& 54.55 \,/\, 39.29
& 26.41 \,/\, 16.26
& 22.79 \,/\, 15.99
& 27.50 \,/\, 21.41
& 46.76 \,/\, 35.56 \\
VCPO
& 142.7
& 67.40 \,/\, 58.88
& 88.93 \,/\, 78.79
& 57.40 \,/\, 47.50
& 33.38 \,/\, 21.88
& 31.25 \,/\, 23.09
& 47.50 \,/\, 31.56
& 54.31 \,/\, 43.62 \\
TIC-GRPO
& 141.6
& \textbf{79.20} \,/\, 75.14
& 92.87 \,/\, 87.92
& 68.31 \,/\, 61.20
& \textbf{45.10} \,/\, 38.41
& \textbf{40.07} \,/\, \textbf{34.81}
& \textbf{60.00} \,/\, 50.47
& 64.26 \,/\, 57.99 \\
\ours \textbf{GMC-GRPO}
& 141.1
& 78.80 \,/\, \textbf{75.22}
& \textbf{93.33} \,/\, \textbf{91.37}
& \textbf{69.09} \,/\, \textbf{63.64}
& 44.96 \,/\, \textbf{40.34}
& \textbf{40.07} \,/\, 34.35
& \textbf{60.00} \,/\, \textbf{52.81}
& \textbf{64.37} \,/\, \textbf{59.62} \\
\bottomrule
\end{tabular}%
}
\end{table}

\section{Conclusions and Limitations}
\vspace{-0.4em}

This work studies how rollout staleness affects the convergence of GRPO-style algorithms in asynchronous LLM training and introduces GMC-GRPO to improve learning from delayed rollouts. Our theory and experiments show that group-level capping improves robustness to stale rollouts while maintaining performance on fresh data. Within the proposed class of weighted gradient estimators, our analysis provides a foundation for understanding and improving delay tolerance. Future work can extend this framework to more general weighting rules, such as those that use information beyond importance ratios, to further improve the balance between second-moment control and bias.

\bibliography{iclr2027_conference}

\begin{thebibliography}{52}
\providecommand{\natexlab}[1]{#1}
\providecommand{\url}[1]{\texttt{#1}}
\expandafter\ifx\csname urlstyle\endcsname\relax
  \providecommand{\doi}[1]{doi: #1}\else
  \providecommand{\doi}{doi: \begingroup \urlstyle{rm}\Url}\fi

\bibitem[Arnal et~al.(2026)Arnal, Cabannes, Cohen, Kempe, and Munos]{arnal2026efficient}
Charles Arnal, Vivien Cabannes, Taco Cohen, Julia Kempe, and Remi Munos.
\newblock Efficient rl training for llms with experience replay.
\newblock \emph{arXiv preprint arXiv:2604.08706}, 2026.

\bibitem[{Art of Problem Solving}(2024)]{aops-amc}
{Art of Problem Solving}.
\newblock {AMC} problems and solutions.
\newblock \url{https://artofproblemsolving.com/wiki/index.php/AMC_Problems_and_Solutions}, 2024.
\newblock Accessed: 2026-08-29.

\bibitem[Bartoldson et~al.(2026)Bartoldson, Venkatraman, Diffenderfer, Jain, Ben-Nun, Lee, Kim, Obando~Ceron, Bengio, and Kailkhura]{bartoldson2026trajectory}
Brian Bartoldson, Siddarth Venkatraman, James Diffenderfer, Moksh Jain, Tal Ben-Nun, Seanie Lee, Minsu Kim, Johan Obando~Ceron, Yoshua Bengio, and Bhavya Kailkhura.
\newblock Trajectory balance with asynchrony: Decoupling exploration and learning for fast, scalable llm post-training.
\newblock \emph{Advances in Neural Information Processing Systems}, 38:\penalty0 113901--113931, 2026.

\bibitem[Chen et~al.(2025)Chen, Li, Gong, Jiang, Fei, Yang, Shan, Yu, Wang, Zhu, et~al.]{chen2025minimax}
Aili Chen, Aonian Li, Bangwei Gong, Binyang Jiang, Bo~Fei, Bo~Yang, Boji Shan, Changqing Yu, Chao Wang, Cheng Zhu, et~al.
\newblock Minimax-m1: Scaling test-time compute efficiently with lightning attention.
\newblock \emph{arXiv preprint arXiv:2506.13585}, 2025.

\bibitem[Cobbe et~al.(2021)Cobbe, Kosaraju, Bavarian, Chen, Jun, Kaiser, Plappert, Tworek, Hilton, Nakano, et~al.]{cobbe2021training}
Karl Cobbe, Vineet Kosaraju, Mohammad Bavarian, Mark Chen, Heewoo Jun, Lukasz Kaiser, Matthias Plappert, Jerry Tworek, Jacob Hilton, Reiichiro Nakano, et~al.
\newblock Training verifiers to solve math word problems.
\newblock \emph{arXiv preprint arXiv:2110.14168}, 2021.

\bibitem[Dettmers et~al.(2021)Dettmers, Lewis, Shleifer, and Zettlemoyer]{dettmers20218}
Tim Dettmers, Mike Lewis, Sam Shleifer, and Luke Zettlemoyer.
\newblock 8-bit optimizers via block-wise quantization.
\newblock \emph{arXiv preprint arXiv:2110.02861}, 2021.

\bibitem[Du et~al.(2026)Du, Luo, Zheng, Yuan, Xia, Shi, Jin, He, Zou, Liang, et~al.]{du2026pact}
Zhenbang Du, Jun Luo, Zhiwei Zheng, Xiangchi Yuan, Kejing Xia, Dachuan Shi, Qirui Jin, Qijia He, Shaofeng Zou, Yingbin Liang, et~al.
\newblock Pact: Privileged trace co-training for multi-turn tool-use agents.
\newblock \emph{arXiv preprint arXiv:2606.16215}, 2026.

\bibitem[Dwyer et~al.(2025)Dwyer, Sobey, and Chapman]{dwyer2025s}
Madeleine Dwyer, Adam Sobey, and Adriane Chapman.
\newblock It's not you, it's clipping: A soft trust-region via probability smoothing for llm rl.
\newblock \emph{arXiv preprint arXiv:2509.21282}, 2025.

\bibitem[Foster et~al.(2025)Foster, Mhammedi, and Rohatgi]{foster2025good}
Dylan~J Foster, Zakaria Mhammedi, and Dhruv Rohatgi.
\newblock Is a good foundation necessary for efficient reinforcement learning? the computational role of the base model in exploration.
\newblock \emph{arXiv preprint arXiv:2503.07453}, 2025.

\bibitem[Fu et~al.(2026)Fu, Gao, Shen, Zhu, Mei, He, Xu, Wei, Mei, Wang, et~al.]{fu2026areal}
Wei Fu, Jiaxuan Gao, Xujie Shen, Chen Zhu, Zhiyu Mei, Chuyi He, Shusheng Xu, Guo Wei, Jun Mei, Jiashu Wang, et~al.
\newblock Areal: A large-scale asynchronous reinforcement learning system for language reasoning.
\newblock \emph{Advances in Neural Information Processing Systems}, 38:\penalty0 36256--36282, 2026.

\bibitem[Gao et~al.(2025)Gao, Zheng, Chen, Dang, Liu, Yu, Yang, Bai, Zhou, and Lin]{gao2025soft}
Chang Gao, Chujie Zheng, Xiong-Hui Chen, Kai Dang, Shixuan Liu, Bowen Yu, An~Yang, Shuai Bai, Jingren Zhou, and Junyang Lin.
\newblock Soft adaptive policy optimization.
\newblock \emph{arXiv preprint arXiv:2511.20347}, 2025.

\bibitem[Guo et~al.(2025)Guo, Yang, Zhang, Song, Wang, Zhu, Xu, Zhang, Ma, Bi, et~al.]{guo2025deepseek}
Daya Guo, Dejian Yang, Haowei Zhang, Junxiao Song, Peiyi Wang, Qihao Zhu, Runxin Xu, Ruoyu Zhang, Shirong Ma, Xiao Bi, et~al.
\newblock Deepseek-r1: Incentivizing reasoning capability in llms via reinforcement learning.
\newblock \emph{arXiv preprint arXiv:2501.12948}, 2025.

\bibitem[He et~al.(2024)He, Luo, Bai, Hu, Thai, Shen, Hu, Han, Huang, Zhang, et~al.]{he2024olympiadbench}
Chaoqun He, Renjie Luo, Yuzhuo Bai, Shengding Hu, Zhen Thai, Junhao Shen, Jinyi Hu, Xu~Han, Yujie Huang, Yuxiang Zhang, et~al.
\newblock Olympiadbench: A challenging benchmark for promoting agi with olympiad-level bilingual multimodal scientific problems.
\newblock In \emph{Proceedings of the 62nd Annual Meeting of the Association for Computational Linguistics (Volume 1: Long Papers)}, pp.\  3828--3850, 2024.

\bibitem[Hendrycks et~al.(2021)Hendrycks, Burns, Kadavath, Arora, Basart, Tang, Song, and Steinhardt]{hendrycks2021measuring}
Dan Hendrycks, Collin Burns, Saurav Kadavath, Akul Arora, Steven Basart, Eric Tang, Dawn Song, and Jacob Steinhardt.
\newblock Measuring mathematical problem solving with the math dataset.
\newblock \emph{arXiv preprint arXiv:2103.03874}, 2021.

\bibitem[Hu et~al.(2026)Hu, Liu, Miao, Xiao, Zang, Zheng, Huang, Ding, Zhang, Yang, et~al.]{hu2026dora}
Tianhao Hu, Xiangcheng Liu, Yuchun Miao, Youshao Xiao, Hongyu Zang, Yang Zheng, Xuan Huang, Jinrui Ding, Yufei Zhang, Yu~Yang, et~al.
\newblock Dora: A scalable asynchronous reinforcement learning system for language model training.
\newblock \emph{arXiv preprint arXiv:2604.26256}, 2026.

\bibitem[Huang et~al.(2025{\natexlab{a}})Huang, Block, Foster, Rohatgi, Zhang, Simchowitz, Ash, and Krishnamurthy]{huang2025self}
Audrey Huang, Adam Block, Dylan Foster, Dhruv Rohatgi, Cyril Zhang, Max Simchowitz, Jordan Ash, and Akshay Krishnamurthy.
\newblock Self-improvement in language models: The sharpening mechanism.
\newblock In \emph{International Conference on Learning Representations}, volume 2025, pp.\  76687--76739, 2025{\natexlab{a}}.

\bibitem[Huang et~al.(2025{\natexlab{b}})Huang, Zhan, Xie, Lee, Sun, Krishnamurthy, and Foster]{huang2025correcting}
Audrey Huang, Wenhao Zhan, Tengyang Xie, Jason Lee, Wen Sun, Akshay Krishnamurthy, and Dylan Foster.
\newblock Correcting the mythos of kl-regularization: Direct alignment without overoptimization via chi-squared preference optimization.
\newblock In \emph{International Conference on Learning Representations}, volume 2025, pp.\  92647--92701, 2025{\natexlab{b}}.

\bibitem[Huang et~al.(2026)Huang, Zhang, Hu, Yang, and Han]{huang2026stable}
Luke~J Huang, Zhuoyang Zhang, Qinghao Hu, Shang Yang, and Song Han.
\newblock Stable asynchrony: Variance-controlled off-policy rl for llms.
\newblock \emph{arXiv preprint arXiv:2602.17616}, 2026.

\bibitem[Jaech et~al.(2024)Jaech, Kalai, Lerer, Richardson, El-Kishky, Low, Helyar, Madry, Beutel, Carney, et~al.]{jaech2024openai}
Aaron Jaech, Adam Kalai, Adam Lerer, Adam Richardson, Ahmed El-Kishky, Aiden Low, Alec Helyar, Aleksander Madry, Alex Beutel, Alex Carney, et~al.
\newblock Openai o1 system card.
\newblock \emph{arXiv preprint arXiv:2412.16720}, 2024.

\bibitem[Jain et~al.(2025)Jain, Gu, Li, Yan, Zhang, Wang, Solar-Lezama, Sen, and Stoica]{jain2025livecodebench}
Naman Jain, Alex Gu, Wen-Ding Li, Fanjia Yan, Tianjun Zhang, Sida Wang, Armando Solar-Lezama, Koushik Sen, and Ion Stoica.
\newblock Livecodebench: Holistic and contamination free evaluation of large language models for code.
\newblock In \emph{International Conference on Learning Representations}, volume 2025, pp.\  58791--58831, 2025.

\bibitem[Joulani et~al.(2013)Joulani, Gyorgy, and Szepesv{\'a}ri]{joulani2013online}
Pooria Joulani, Andras Gyorgy, and Csaba Szepesv{\'a}ri.
\newblock Online learning under delayed feedback.
\newblock In \emph{International conference on machine learning}, pp.\  1453--1461. PMLR, 2013.

\bibitem[Koloskova et~al.(2022)Koloskova, Stich, and Jaggi]{koloskova2022sharper}
Anastasiia Koloskova, Sebastian~U Stich, and Martin Jaggi.
\newblock Sharper convergence guarantees for asynchronous sgd for distributed and federated learning.
\newblock \emph{Advances in Neural Information Processing Systems}, 35:\penalty0 17202--17215, 2022.

\bibitem[Kwon et~al.(2023)Kwon, Li, Zhuang, Sheng, Zheng, Yu, Gonzalez, Zhang, and Stoica]{kwon2023efficient}
Woosuk Kwon, Zhuohan Li, Siyuan Zhuang, Ying Sheng, Lianmin Zheng, Cody~Hao Yu, Joseph Gonzalez, Hao Zhang, and Ion Stoica.
\newblock Efficient memory management for large language model serving with pagedattention.
\newblock In \emph{Proceedings of the 29th symposium on operating systems principles}, pp.\  611--626, 2023.

\bibitem[Lambert et~al.(2024)Lambert, Morrison, Pyatkin, Huang, Ivison, Brahman, Miranda, Liu, Dziri, Lyu, et~al.]{lambert2024tulu}
Nathan Lambert, Jacob Morrison, Valentina Pyatkin, Shengyi Huang, Hamish Ivison, Faeze Brahman, Lester James~V Miranda, Alisa Liu, Nouha Dziri, Shane Lyu, et~al.
\newblock Tulu 3: Pushing frontiers in open language model post-training.
\newblock \emph{arXiv preprint arXiv:2411.15124}, 2024.

\bibitem[Lewkowycz et~al.(2022)Lewkowycz, Andreassen, Dohan, Dyer, Michalewski, Ramasesh, Slone, Anil, Schlag, Gutman-Solo, et~al.]{lewkowycz2022solving}
Aitor Lewkowycz, Anders Andreassen, David Dohan, Ethan Dyer, Henryk Michalewski, Vinay Ramasesh, Ambrose Slone, Cem Anil, Imanol Schlag, Theo Gutman-Solo, et~al.
\newblock Solving quantitative reasoning problems with language models.
\newblock \emph{Advances in neural information processing systems}, 35:\penalty0 3843--3857, 2022.

\bibitem[Li et~al.(2025)Li, Wu, and Shen]{li20253po}
Xiaocan Li, Shiliang Wu, and Zheng Shen.
\newblock A-3po: Accelerating asynchronous llm training with staleness-aware proximal policy approximation.
\newblock \emph{arXiv preprint arXiv:2512.06547}, 2025.

\bibitem[Liao et~al.(2024)Liao, Li, Luo, Jing, and Fan]{liao2024mario}
Minpeng Liao, Chengxi Li, Wei Luo, Wu~Jing, and Kai Fan.
\newblock Mario: Math reasoning with code interpreter output-a reproducible pipeline.
\newblock In \emph{Findings of the Association for Computational Linguistics: ACL 2024}, pp.\  905--924, 2024.

\bibitem[Lightman et~al.(2024)Lightman, Kosaraju, Burda, Edwards, Baker, Lee, Leike, Schulman, Sutskever, and Cobbe]{lightman2024let}
Hunter Lightman, Vineet Kosaraju, Yuri Burda, Harrison Edwards, Bowen Baker, Teddy Lee, Jan Leike, John Schulman, Ilya Sutskever, and Karl Cobbe.
\newblock Let's verify step by step.
\newblock In \emph{International Conference on Learning Representations}, volume 2024, pp.\  39578--39601, 2024.

\bibitem[Liu et~al.(2025)Liu, Chen, Li, Qi, Pang, Du, Lee, and Lin]{liu2025understanding}
Zichen Liu, Changyu Chen, Wenjun Li, Penghui Qi, Tianyu Pang, Chao Du, Wee~Sun Lee, and Min Lin.
\newblock Understanding r1-zero-like training: A critical perspective.
\newblock \emph{arXiv preprint arXiv:2503.20783}, 2025.

\bibitem[Maranjyan et~al.(2025)Maranjyan, Tyurin, and Richt{\'a}rik]{maranjyan2025ringmaster}
Artavazd Maranjyan, Alexander Tyurin, and Peter Richt{\'a}rik.
\newblock Ringmaster asgd: The first asynchronous sgd with optimal time complexity.
\newblock \emph{arXiv preprint arXiv:2501.16168}, 2025.

\bibitem[Mishchenko et~al.(2022)Mishchenko, Bach, Even, and Woodworth]{mishchenko2022asynchronous}
Konstantin Mishchenko, Francis Bach, Mathieu Even, and Blake~E Woodworth.
\newblock Asynchronous sgd beats minibatch sgd under arbitrary delays.
\newblock \emph{Advances in Neural Information Processing Systems}, 35:\penalty0 420--433, 2022.

\bibitem[Noukhovitch et~al.(2025)Noukhovitch, Huang, Xhonneux, Hosseini, Agarwal, and Courville]{noukhovitch2025asynchronous}
Michael Noukhovitch, Shengyi Huang, Sophie Xhonneux, Arian Hosseini, Rishabh Agarwal, and Aaron Courville.
\newblock Asynchronous rlhf: Faster and more efficient off-policy rl for language models.
\newblock In \emph{International Conference on Learning Representations}, volume 2025, pp.\  4003--4029, 2025.

\bibitem[Ouyang et~al.(2022)Ouyang, Wu, Jiang, Almeida, Wainwright, Mishkin, Zhang, Agarwal, Slama, Ray, et~al.]{ouyang2022training}
Long Ouyang, Jeffrey Wu, Xu~Jiang, Diogo Almeida, Carroll Wainwright, Pamela Mishkin, Chong Zhang, Sandhini Agarwal, Katarina Slama, Alex Ray, et~al.
\newblock Training language models to follow instructions with human feedback.
\newblock \emph{Advances in neural information processing systems}, 35:\penalty0 27730--27744, 2022.

\bibitem[Pang et~al.(2025)Pang, Luo, and Jin]{pang2025tic}
Lei Pang, Jun Luo, and Ruinan Jin.
\newblock Tic-grpo: Provable and efficient optimization for reinforcement learning from human feedback.
\newblock \emph{arXiv preprint arXiv:2508.02833}, 2025.

\bibitem[Ritter et~al.(2026)Ritter, Oertell, Guo, Chang, Brantley, and Sun]{ritter2026llms}
Daniel Ritter, Owen Oertell, Bradley Guo, Jonathan Chang, Kiant{\'e} Brantley, and Wen Sun.
\newblock Llms can learn to reason via off-policy rl.
\newblock \emph{arXiv preprint arXiv:2602.19362}, 2026.

\bibitem[Roux et~al.(2025)Roux, Bellemare, Lebensold, Bergeron, Greaves, Fr{\'e}chette, Pelletier, Thibodeau-Laufer, Toth, and Work]{roux2025tapered}
Nicolas~Le Roux, Marc~G Bellemare, Jonathan Lebensold, Arnaud Bergeron, Joshua Greaves, Alex Fr{\'e}chette, Carolyne Pelletier, Eric Thibodeau-Laufer, S{\'a}ndor Toth, and Sam Work.
\newblock Tapered off-policy reinforce: Stable and efficient reinforcement learning for llms.
\newblock \emph{arXiv preprint arXiv:2503.14286}, 2025.

\bibitem[Shao et~al.(2024)Shao, Wang, Zhu, Xu, Song, Bi, Zhang, Zhang, Li, Wu, et~al.]{shao2024deepseekmath}
Zhihong Shao, Peiyi Wang, Qihao Zhu, Runxin Xu, Junxiao Song, Xiao Bi, Haowei Zhang, Mingchuan Zhang, YK~Li, Yang Wu, et~al.
\newblock Deepseekmath: Pushing the limits of mathematical reasoning in open language models.
\newblock \emph{arXiv preprint arXiv:2402.03300}, 2024.

\bibitem[Sheng et~al.(2025)Sheng, Zhang, Ye, Wu, Zhang, Zhang, Peng, Lin, and Wu]{sheng2025hybridflow}
Guangming Sheng, Chi Zhang, Zilingfeng Ye, Xibin Wu, Wang Zhang, Ru~Zhang, Yanghua Peng, Haibin Lin, and Chuan Wu.
\newblock Hybridflow: A flexible and efficient rlhf framework.
\newblock In \emph{Proceedings of the Twentieth European Conference on Computer Systems}, pp.\  1279--1297, 2025.

\bibitem[Tan et~al.(2023)Tan, Zhu, and Zhang]{tan2023abs}
Qiao Tan, Feng Zhu, and Jingjing Zhang.
\newblock Abs: Adaptive bounded staleness converges faster and communicates less.
\newblock \emph{arXiv preprint arXiv:2301.08895}, 2023.

\bibitem[Tian et~al.(2026)Tian, Xie, and Wei]{tian2026off}
Minghao Tian, Yunfei Xie, and Chen Wei.
\newblock How off-policy can grpo be? mu-grpo for efficient llm reinforcement learning.
\newblock \emph{arXiv preprint arXiv:2605.17570}, 2026.

\bibitem[Tyurin et~al.(2026)Tyurin, Spiridonov, and Rudenko]{tyurin2026asynchronous}
Alexander Tyurin, Andrei Spiridonov, and Varvara Rudenko.
\newblock Asynchronous policy gradient aggregation for efficient distributed reinforcement learning.
\newblock In \emph{International Conference on Learning Representations}, volume 2026, pp.\  8826--8857, 2026.

\bibitem[Wang et~al.(2025)Wang, Xie, Wang, Huang, Li, and Wang]{wang2025infinite}
Liangyu Wang, Huanyi Xie, Xinhai Wang, Tianjin Huang, Mengdi Li, and Di~Wang.
\newblock Infinite sampling: Efficient and stable grouped rl training for large language models.
\newblock \emph{arXiv preprint arXiv:2506.22950}, 2025.

\bibitem[Xi et~al.(2026)Xi, Guo, Nan, Zhou, Shen, Chen, Liu, Huang, Deng, Zhang, et~al.]{xi2026bapo}
Zhiheng Xi, Xin Guo, Yang Nan, Enyu Zhou, Junrui Shen, Wenxiang Chen, Jiaqi Liu, Jixuan Huang, Xun Deng, Zhihao Zhang, et~al.
\newblock Bapo: Stabilizing off-policy reinforcement learning for llms via balanced policy optimization with adaptive clipping.
\newblock In \emph{International Conference on Learning Representations}, volume 2026, pp.\  126204--126228, 2026.

\bibitem[Xie et~al.(2025)Xie, Foster, Krishnamurthy, Rosset, Awadallah, and Rakhlin]{xie2025exploratory}
Tengyang Xie, Dylan Foster, Akshay Krishnamurthy, Corby Rosset, Ahmed~H Awadallah, and Alexander Rakhlin.
\newblock Exploratory preference optimization: Harnessing implicit q*-approximation for sample-efficient rlhf.
\newblock In \emph{International Conference on Learning Representations}, volume 2025, pp.\  43632--43669, 2025.

\bibitem[Xu et~al.(2026)Xu, Su, Tian, Diao, Qian, and Wu]{xu2026gac}
Haofeng Xu, Junwei Su, Yukun Tian, Lansong Diao, Zhengping Qian, and Chuan Wu.
\newblock Gac: Stabilizing asynchronous rl training for llms via gradient alignment control.
\newblock \emph{arXiv preprint arXiv:2603.01501}, 2026.

\bibitem[Yang et~al.(2025)Yang, Li, Yang, Zhang, Hui, Zheng, Yu, Gao, Huang, Lv, et~al.]{yang2025qwen3}
An~Yang, Anfeng Li, Baosong Yang, Beichen Zhang, Binyuan Hui, Bo~Zheng, Bowen Yu, Chang Gao, Chengen Huang, Chenxu Lv, et~al.
\newblock Qwen3 technical report.
\newblock \emph{arXiv preprint arXiv:2505.09388}, 2025.

\bibitem[Yu et~al.(2026)Yu, Zhang, Zhu, Yuan, Zuo, Yue, Dai, Fan, Liu, Liu, et~al.]{yu2026dapo}
Qiying Yu, Zheng Zhang, Ruofei Zhu, Yufeng Yuan, Xiaochen Zuo, Yu~Yue, Weinan Dai, Tiantian Fan, Gaohong Liu, Lingjun Liu, et~al.
\newblock Dapo: An open-source llm reinforcement learning system at scale.
\newblock \emph{Advances in Neural Information Processing Systems}, 38:\penalty0 113222--113244, 2026.

\bibitem[Zeng et~al.(2026)Zeng, Lv, Hou, Du, Zheng, Chen, Yin, Ge, Huang, Xie, et~al.]{zeng2026glm}
Aohan Zeng, Xin Lv, Zhenyu Hou, Zhengxiao Du, Qinkai Zheng, Bin Chen, Da~Yin, Chendi Ge, Chenghua Huang, Chengxing Xie, et~al.
\newblock Glm-5: from vibe coding to agentic engineering.
\newblock \emph{arXiv preprint arXiv:2602.15763}, 2026.

\bibitem[Zhao et~al.(2026)Zhao, Xie, Zheng, Gong, Lu, Yang, Hu, and Chen]{zhao2026deconstructing}
Guanqun Zhao, Zijun Xie, Binbin Zheng, Enlei Gong, Jiafeng Lu, Yehan Yang, Aoqi Hu, and Zeyu Chen.
\newblock Deconstructing off-policy ratios: Entropy-scaled trust regions for asynchronous reinforcement learning.
\newblock \emph{arXiv preprint arXiv:2607.22186}, 2026.

\bibitem[Zheng et~al.(2026)Zheng, Zhao, and Chen]{zheng2026prosperity}
Haizhong Zheng, Jiawei Zhao, and Beidi Chen.
\newblock Prosperity before collapse: How far can off-policy rl reach with stale data on llms?
\newblock In \emph{International Conference on Learning Representations}, volume 2026, pp.\  82657--82679, 2026.

\bibitem[Zhong et~al.(2024)Zhong, Zhang, Wu, Liu, Chen, Wan, Hu, Xia, Ming, Zhu, et~al.]{zhong2024rlhfuse}
Yinmin Zhong, Zili Zhang, Bingyang Wu, Shengyu Liu, Yukun Chen, Changyi Wan, Hanpeng Hu, Lei Xia, Ranchen Ming, Yibo Zhu, et~al.
\newblock Rlhfuse: Efficient rlhf training for large language models with inter-and intra-stage fusion.
\newblock \emph{arXiv preprint arXiv:2409.13221}, 1, 2024.

\bibitem[Zhou et~al.(2026)Zhou, Ye, Xu, Zhu, Yang, Gong, and Shi]{zhou2026demystifying}
Hongyi Zhou, Kai Ye, Erhan Xu, Jin Zhu, Ying Yang, Shijin Gong, and Chengchun Shi.
\newblock Demystifying group relative policy optimization: Its policy gradient is a u-statistic.
\newblock \emph{arXiv preprint arXiv:2603.01162}, 2026.

\end{thebibliography}
\bibliographystyle{iclr2027_conference}

\appendix

\section*{Appendix Contents}
\begin{enumerate}[
  label=\textbf{\Alph*.},
  leftmargin=2.5em,
  itemsep=1pt,
  topsep=3pt
]
  \item \hyperref[app:related work]{Related Work}\dotfill\pageref{app:related work}
  \item \hyperref[app:experiments]{Experiment Details and Supplemental Experiments}\dotfill\pageref{app:experiments}
  \begin{enumerate}[
    label=\textbf{\Alph{enumi}.\arabic*.},
    leftmargin=3.5em,
    itemsep=0pt,
    topsep=1pt
  ]
    \item \hyperref[sec:experiment-setup]{Detailed Experiment Setup}\dotfill\pageref{sec:experiment-setup}
    \item \hyperref[additional-experiment-results]{Supplemental Experiment Results}\dotfill\pageref{additional-experiment-results}
  \end{enumerate}
  \item \hyperref[sec:algorithm]{Mixed-Delay Setting, GMC-GRPO and TIC-GRPO Algorithms}\dotfill\pageref{sec:algorithm}
  \item \hyperref[sec:assumptions]{Technical Preliminaries}\dotfill\pageref{sec:assumptions}
  \item \hyperref[sec:weighted-class]{Optimization over a Second-Moment-Controlled Weighting Class}\dotfill\pageref{sec:weighted-class}
  \item \hyperref[sec:lemmas]{Convergence Analysis of GMC-GRPO}\dotfill\pageref{sec:lemmas}
    \item \hyperref[app:vanilla]{Convergence Analysis of TIC-GRPO}\dotfill\pageref{app:vanilla}
  \item \hyperref[sec:main-specializations]{From the Appendix Bounds to the Main-Text Rates}\dotfill\pageref{sec:main-specializations}
\end{enumerate}

\begin{table}[!ht]
\centering
\caption{Summary of the main notation used in the paper.}
\label{tab:notation}
\small
\renewcommand{\arraystretch}{1.12}
\begin{tabular}{@{}>{\raggedright\arraybackslash}p{0.15\textwidth}>{\raggedright\arraybackslash}p{0.76\textwidth}@{}}
\toprule
\textbf{Notation} & \textbf{Description} \\
\midrule
$\mathcal X,\cD,\cA$ & Prompt space, training prompt distribution, and token vocabulary. \\
$x,y,y_t,y_{<t}$ & Prompt, response, token at position $t$, and prefix $(y_1,\ldots,y_{t-1})$. \\
$T$ & Maximum response length; also the normalization factor in each gradient update. \\
$\theta,\pi_\theta(y\mid x)$ & Policy parameter in $\mathbb R^d$ and trajectory likelihood $\prod_{t=1}^T\pi_\theta(y_t\mid x,y_{<t})$. \\
$r_\phi(x,y),R$ & Verifier score and its upper bound: $r_\phi(x,y)\in[0,R]$. \\
$J_x(\theta),J(\theta),J^\star$ & Expected reward for prompt $x$, its average over $x\sim\cD$, and $\sup_\theta J(\theta)$; see \cref{eq:rl-objective}. \\
\midrule
$n,N,\eta$ & Learner update index $n\in\{0,\ldots,N-1\}$, $N$ total updates, and step size $\eta>0$. \\
$B,G,b,i$ & Groups per mini-batch, responses per group ($G\ge2$), group index $b\in[B]$, and response index $i\in[G]$. \\
$\cM_n$ & Mini-batch of $B$ completed prompt groups consumed at iteration $n$. \\
$\theta_n,\theta_{n-\tau_n^{(b)}}$ & Current learner parameter and behavior parameter used to generate group $b$. \\
$\tau_n^{(b)}$ & Number of learner updates between the launch and consumption of group $b$; $\tau_n^{(b)}\in\{0,\ldots,n\}$. \\
$\overline\tau_k$ & Averaged $k$th delay moment: $\frac1{N}\sum_{n=0}^{N-1}\E[\frac1B\sum_{b=1}^B(\tau_n^{(b)})^k]$, $k\ge1$. \\
$\widehat A^{(b,i)}$ & Leave-one-out advantage $r^{(b,i)}-\frac1{G-1}\sum_{j\ne i}r^{(b,j)}$, where $r^{(b,i)}=r_\phi(x^{(b)},y^{(b,i)})$. \\
\midrule
$\rho_n^{(b,i)}$ & Current-to-behavior trajectory likelihood ratio $\pi_{\theta_n}(y^{(b,i)}\mid x^{(b)})/\pi_{\theta_{n-\tau_n^{(b)}}}(y^{(b,i)}\mid x^{(b)})$. \\
$\overline\rho_n^{(b)}$ & Average raw importance ratio in group $b$: $G^{-1}\sum_{i=1}^G\rho_n^{(b,i)}$. \\
$\eps,\ceps$ & Threshold parameter $\eps>0$ (the cap is $1+\eps$), and $\ceps=\log(1+\eps)-\eps/(1+\eps)$. \\
$w_n^{(b,i)}$ & Correction weight, held fixed during differentiation,
satisfying $0\le w_n^{(b,i)}\le1$ and
$G^{-1}\sum_i w_n^{(b,i)}\rho_n^{(b,i)}\le1+\eps$. \\
$\alpha_n^{(b),\star}$ & Common GMC scaling factor $\min\{1,(1+\eps)/\overline\rho_n^{(b)}\}$. \\
$\widehat\rho_n^{(b,i)}$ & GMC effective importance ratio $\alpha_n^{(b),\star}\rho_n^{(b,i)}$. \\
$g_n^{\mathrm{GMC}},g_n^{\mathrm{TIC}},g_n^w$ & Update estimators for GMC, TIC, and a general weighted correction; see \cref{eq:gmcgrad-main}, \cref{eq:appvangrad}, and \cref{eq:weighted-update}. \\
\midrule
$\cF_n^-,\cH_n$ & Information before observing the returned mini-batch, and $\cH_n=\sigma(\cF_n^-,\{x^{(b)},\tau_n^{(b)}\}_{b=1}^B)$ after revealing its prompts and delays. \\
$b_n$ & Conditional update bias $\E[g_n\mid\cF_n^-]-T^{-1}\nabla J(\theta_n)$ for the estimator $g_n$ under study. \\
$L,\mathcal C$ & Token-score Lipschitz constant and token-coverage constant ($\pi_\theta(y_t\mid x,y_{<t})\ge1/\mathcal C$); see Assumptions~\ref{as:lip} and~\ref{as:coverage}. \\
$D_n^{(b)},\chi_n^{2,(b)}$ & Trajectory KL and chi-square divergences from $\pi_{\theta_n}(\cdot\mid x^{(b)})$ to $\pi_{\theta_{n-\tau_n^{(b)}}}(\cdot\mid x^{(b)})$; see Appendix~\ref{sec:algorithm}. \\
$\delta$ & Local-overlap bound: $|\rho_n^{(b,i)}-1|\le\delta$ almost surely, for the refinement in Corollary~\ref{cor:gmc-local-overlap}. \\
\bottomrule
\end{tabular}
\end{table}

\section{Related Work}\label{app:related work}
\textbf{Asynchronous RL Systems for LLM Post-Training.}
LLM RL pipelines typically alternate between autoregressive rollout generation
and policy optimization, incurring idle time from variable response lengths and
repeated switches between inference and training \citep{zhong2024rlhfuse,sheng2025hybridflow}. Recent work improves this
pipeline along two main directions. Asynchronous RLHF and LLM RL systems
decouple rollout generation from policy optimization so that the two stages can
proceed concurrently \citep{noukhovitch2025asynchronous,fu2026areal,hu2026dora}.
Complementary approaches improve rollout-side utilization through continuous
grouped sampling or scale exploration and learning through asynchronous replay
\citep{wang2025infinite,bartoldson2026trajectory}. This decoupling, however,
causes the learner to consume trajectories generated by lagged behavior
policies, making rollout staleness an intrinsic off-policy optimization issue.
Rather than proposing another systems architecture, we aim at finding robust algorithms against data staleness in asynchronous RL systems: we analyze GRPO-style optimization under
stochastic rollout delays and design an importance correction that controls the
resulting bias without sacrificing stability.

\textbf{Off-Policy Correction for Stale LLM Rollouts.}
Stale rollouts can induce heavy-tailed importance ratios, allowing a small number of trajectories to dominate the gradient estimate and destabilize training.
Learning from such data therefore requires a careful choice of how the update is corrected. Existing corrections differ mainly in where they intervene. A common strategy acts locally
on individual ratios: TIC-GRPO clips a trajectory once its ratio exceeds the
clipping threshold \citep{pang2025tic}, while related methods replace fixed
clipping with tapered weights, adaptive or asymmetric thresholds, and
entropy-scaled trust regions
\citep{roux2025tapered,xi2026bapo,zhao2026deconstructing, li20253po}. A second strategy
controls batch- or update-level statistics through second-moment constraints,
variance-aware step sizes, or gradient-alignment control
\citep{zheng2026prosperity,huang2026stable,xu2026gac}. A third changes how stale
experience is reused through staged training with selective filtering, replay
buffers, or objectives designed for lagged inference policies
\citep{tian2026off,arnal2026efficient,ritter2026llms}. Together, these methods
show that stale data can remain informative when its effect on the update is
controlled. Our work instead moves from individual correction to
group-level control. We identify a cap on corrected importance mass that
preserves gradient second-moment bound and derive a uniform rescaling rule
that reduces delay-induced bias without selectively clipping trajectories.

\textbf{Theory of GRPO-Style Optimization under Delay.}
The convergence theory of asynchronous optimization is well developed, with rigorous and tight convergence analysis that matches the lower bound under various settings \citep{joulani2013online,koloskova2022sharper,mishchenko2022asynchronous,tyurin2026asynchronous}.
In parallel, a growing line of work establishes theoretical guarantees for
RL-based LLM post-training \citep{xie2025exploratory,huang2025correcting,foster2025good,huang2025self}. Despite this progress, most existing RL-for-LLM theories are developed within the natural policy gradient and RLHF frameworks, and rigorous convergence guarantees for widely used GRPO-style algorithms remain
limited, particularly under asynchronous rollout generation.
\citet{zhou2026demystifying} characterize the
GRPO gradient as a U-statistic and analyze its finite-sample error and
dependence on group size, while \citet{pang2025tic} introduce TIC-GRPO and
provide the first rigorous convergence analysis of GRPO-style methods. More
recently, \citet{xu2026gac} analyze asynchronous GRPO through a generic
biased-gradient framework, but leave the staleness-induced bias and higher-order
remainder terms as abstract quantities in the final bound. \textbf{To the best of our knowledge, we are the first to provide an
explicit delay-dependent convergence analysis of GRPO-style algorithms in asynchronous RL systems.}

\section{Experiment Details and supplemental Experiments}
\label{app:experiments}

\subsection{Detailed Experiment Setup}\label{sec:experiment-setup}

\textbf{Models and training data.}
We run the experiments with the Qwen3-1.7B-Base, Qwen3-4B-Base and Qwen3-8B-Base models
\citep{yang2025qwen3}. All methods update the full actor; we use neither LoRA
adapters nor an explicit KL-divergence penalty. The training set is constructed
from the seven subject-specific training splits of MATH
\citep{hendrycks2021measuring}: algebra, counting and probability, geometry,
intermediate algebra, number theory, prealgebra, and precalculus. After removing
examples for which the reference answer cannot be extracted, the resulting
training set contains 7,498 problems. Training and evaluation use the same
two-shot prompt, which instructs the model to reason step by step and place its
final answer in \verb|\boxed{}|.

\textbf{Shared training configuration.}
We generate rollouts with vLLM \citep{kwon2023efficient} in BF16, using temperature
$1.0$ and top-$p=1.0$. Prompts and completions are truncated at 1,024 and 4,096
tokens, respectively. Each rollout mini-batch contains eight prompts and eight
responses per prompt, for a total of 64 trajectories. Except for the zero-delay
runs in the fixed-delay suite described below, we set
\texttt{inner\_steps=1}, so these 64 trajectories produce one actor update.

We optimize with 8-bit AdamW \citep{dettmers20218}, a learning rate
of $10^{-6}$, $\beta_1=0.9$, $\beta_2=0.999$, and zero weight decay. The learning
rate follows a cosine schedule with a 3\% warmup, and the gradient norm is
clipped at $1.0$. 
We use a mix of NVIDIA RTX 6000 Ada Generation GPUs and NVIDIA A100 Tensor Core GPUs for training, depending on
resource availability.

\textbf{Controlled fixed delays.}
The fixed-delay study uses Qwen3-1.7B-Base and sweeps the delay over
$D\in\{0,32,64,128,256,512\}$ optimizer updates. For $D=0$, we use
\texttt{inner\_steps=4}, so each rollout batch comprises four 64-trajectory
optimizer mini-batches, for 256 responses in total. For every $D>0$, we use
\texttt{inner\_steps=1} and maintain a FIFO queue of $D$ 64-trajectory rollout
batches. The queue is initialized with batches generated by the initial policy.
In these delayed runs, at each learner iteration, the oldest batch is consumed
in one optimizer update and a new batch from the current policy is appended,
yielding a steady-state behavior-policy lag of $D$ optimizer updates after the
initial fill transition.
All remaining optimization, rollout, and sequence-length settings match the
shared configuration above.

\begin{figure}[t]
    \centering
    \includegraphics[width=1\linewidth]{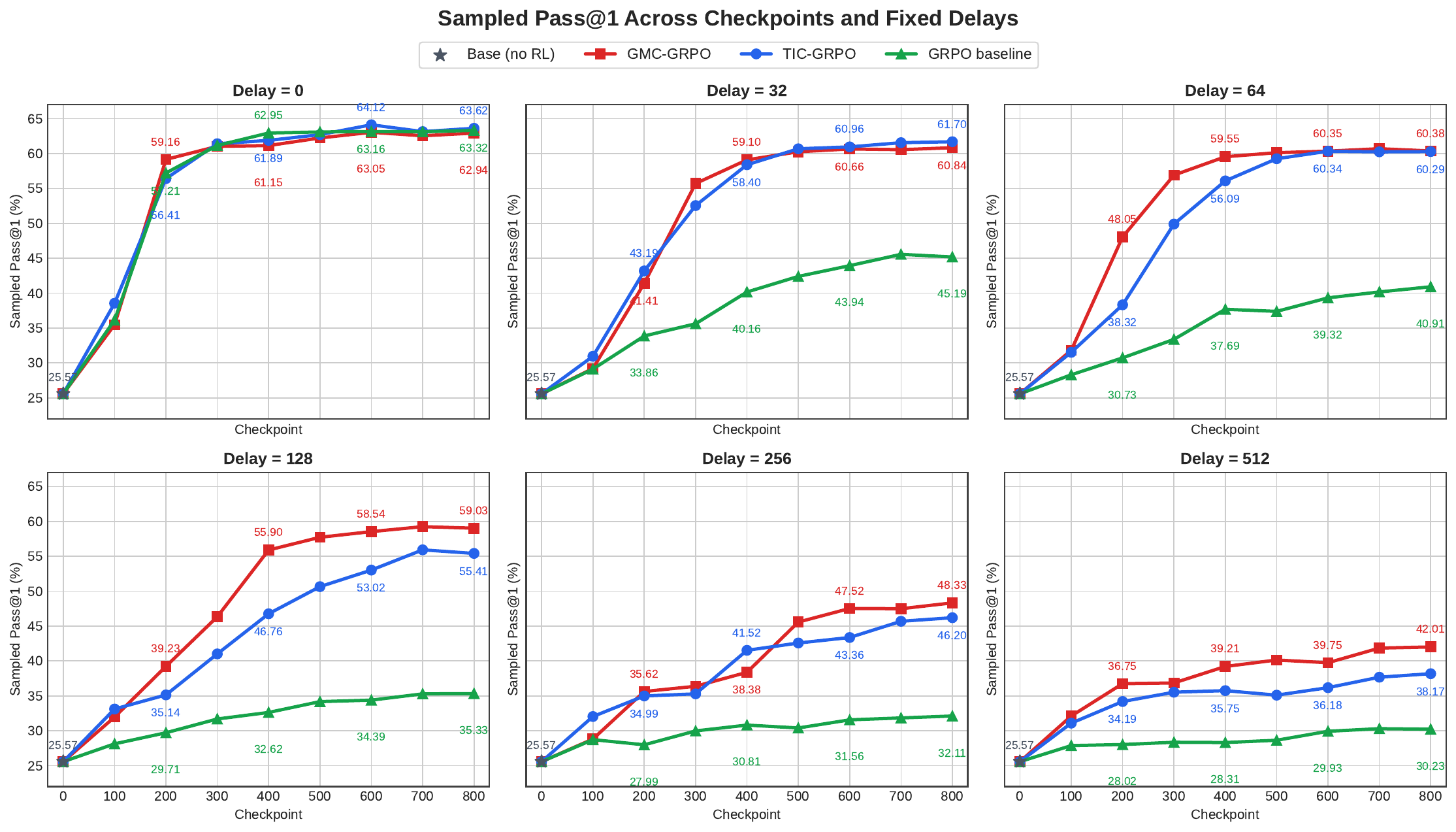}
    \caption{A full version of Figure \ref{fig:fixed-delay} with delay={0, 32, 64, 128, 256, 512}.}
    \label{fig:fixed-delay-full}
\end{figure}

\begin{figure}[t]
    \centering
    \includegraphics[width=1\linewidth]{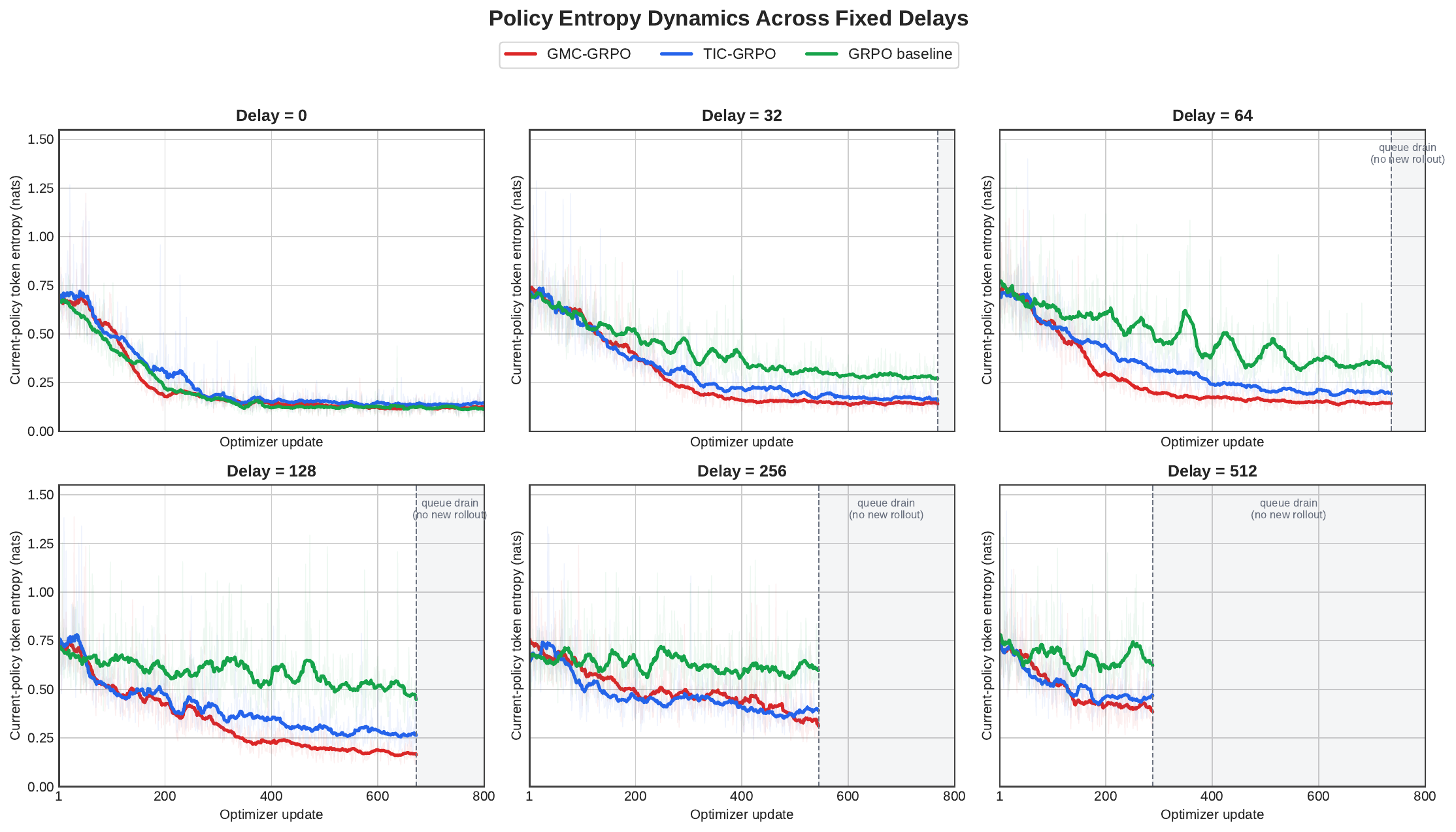}
    \caption{Current-policy token entropy across optimizer updates under fixed
    rollout delays. Faint curves show the per-update measurements, and solid
    curves show centered rolling means with a window of 25 updates. Dashed
    vertical lines mark queue drain, after which no new rollouts are generated.
    GMC-GRPO generally reaches a stable low-entropy regime earlier as the delay
    increases, while all methods behave similarly without delay.}
    \label{fig:fixed-delay-entropy}
\end{figure}

\textbf{Simulated asynchronous delays.}
The second suite models the coupling between response length and asynchronous
rollout latency. We maintain a hidden in-flight buffer and a virtual clock
$\tau$. Launching a 64-trajectory rollout batch advances the clock by
$C_{\mathrm{gen}}=85$. If the batch has mean completion length $L$, it becomes
visible at
\[
  \tau_{\mathrm{ready}}
  = \tau_{\mathrm{launch}} + \lambda L,
  \qquad \lambda=28.
\]
Each optimizer update advances the clock by $C_{\mathrm{update}}=1$, and ready
batches are exposed in completion-time order. Thus, longer batches tend to be
consumed under older learner policies. A batch generated at optimizer step $g$
and consumed at step $t$ has realized delay $d=t-g$; its approximate
steady-state mean is
\[
  \E[d]
  \simeq
  \frac{\lambda\,\E[L]}{C_{\mathrm{gen}}+C_{\mathrm{update}}}
  = \frac{28\,\E[L]}{86}.
\]
Consequently, $\lambda$ controls the delay distribution, while the measured
mean delay---approximately 160 updates in the reported experiments---also
depends on the model's response-length dynamics.

At initialization, we launch rollouts from policy 0 until the first batch is
visible. If the visible queue later becomes empty, we launch the minimum number
of additional batches needed to expose the next sample; both warmup and refill
use a safety cap of 400 launches. For
Qwen3-1.7B, rollout generation and actor updates execute sequentially on one
GPU. For Qwen3-4B, one GPU serves vLLM rollouts and a second GPU performs actor
optimization, with rollout generation overlapped with learner updates. This
systems-level overlap does not alter the sample order or the simulated-delay
schedule.

\begin{remark}[\textbf{Scope of the asynchronous-sampling assumption}]
The fixed-delay suite satisfies Assumption~\ref{as:async} exactly, since every group's delay is fixed in advance and independent of its responses. The simulated suite intentionally relaxes the conditional-independence part of the assumption: because delay is determined by response length, conditioning on a long delay tilts the returned responses toward longer completions. Our theorems therefore do not formally cover this setting. The key mechanism behind Assumption~3 is nevertheless preserved. The pipeline is lossless, so response length changes only \emph{when} a rollout is consumed, not \emph{whether} it is consumed, and each importance ratio uses the exact behavior policy that generated the rollout. As a result, the length--delay coupling largely averages out over the course of training, unlike completion-time filtering, which systematically removes long responses. Moreover, this coupling makes the problem harder rather than easier: long responses are both more stale and more prone to extreme importance ratios. We therefore view the simulated suite as a \textbf{stress test }beyond the idealized assumption, and GMC-GRPO's robustness there suggests that its advantage does not hinge on this idealization.
\end{remark}

\textbf{Method-specific settings.}
Vanilla GRPO \citep{shao2024deepseekmath} uses the two-sided clipping range
$\epsilon=0.2$ and the unnormalized group-centered advantage
$r_i-\overline r_g$. VCPO \citep{huang2026stable} uses
effective-sample-size-guided step-size scaling and its closed-form off-policy
minimum-variance baseline. GAC \citep{xu2026gac} uses cosine-similarity
thresholds $0.05$ and $0.30$. TIC-GRPO
\citep{pang2025tic} and GMC-GRPO both use leave-one-out advantages,
$\epsilon_{\mathrm{high}}=0.28$. All other training, rollout, data-order, and delay parameters are held
fixed across methods within each suite.

\textbf{Evaluation.}
For the simulated asynchronous study, we evaluate the step-800 checkpoint on
six mathematical-reasoning benchmarks: MATH-500 (500 problems)
\citep{hendrycks2021measuring,lightman2024let}, GSM8K (1,319)
\citep{cobbe2021training}, Gaokao2023-Math-En (385) \citep{liao2024mario}, the
\texttt{OE\_TO\_maths\_en\_COMP} subset of OlympiadBench (674)
\citep{he2024olympiadbench}, Minerva Math (272) \citep{lewkowycz2022solving},
and AMC23 (40) \citep{aops-amc}. Greedy accuracy uses temperature zero
and one output per problem. Sampled evaluation uses temperature $1.0$,
top-$p=1.0$. We draw 16 outputs per problem for
MATH-500, Minerva Math, and AMC23, and eight for GSM8K,
Gaokao2023-Math-En, and OlympiadBench. Each main-table entry is reported as
\emph{greedy/avg@$n$}, where avg@$n$ is the mean correctness across all $n$
sampled outputs rather than pass@any. The final column is the unweighted
macro-average over the six benchmarks, computed separately for greedy accuracy
and avg@$n$.

The fixed-delay study evaluates GRPO, TIC-GRPO, and GMC-GRPO on MATH-500 at
checkpoint 800. We use greedy decoding and 16 sampled responses per problem to
compute the unbiased pass@$k$ estimators for
$k\in\{1,2,4,8,16\}$. Both suites use the same 1,024-token
prompt limit, 4,096-token generation limit, two-shot prompt, and
last-complete-\verb|\boxed{}| answer scorer as in training.

\subsection{Supplemental Experiment Results}\label{additional-experiment-results}

\subsubsection{Additional experiment data and training dynamics}
\begin{table*}[t]
\centering
\caption{Performance on MATH500 under different fixed behavior-policy delays (Qwen3-1.7B-Base, checkpoint 800).}
\label{tab:math500-fixed-delay-seed1-final}
\begin{tabular}{llrrrrrr}
\toprule
Delay & Method & Greedy & pass@1 & pass@2 & pass@4 & pass@8 & pass@16 \\
\midrule
-- & Base (no RL) & 44.40 & 25.57 & 37.52 & 50.00 & 61.65 & 72.40 \\
\midrule
\multirow{3}{*}{0}
  & GMC-GRPO & 66.40 & 62.94 & 72.39 & 79.25 & 84.64 & 89.00 \\
  & TIC-GRPO & 66.80 & 63.62 & 72.59 & 79.47 & 84.90 & 89.00 \\
  & GRPO     & 66.20 & 63.32 & 72.41 & 79.25 & 84.84 & 88.80 \\
\cmidrule(lr){1-8}
\multirow{3}{*}{32}
  & GMC-GRPO & 64.80 & 60.84 & 70.47 & 77.85 & 83.89 & 88.60 \\
  & TIC-GRPO & 66.40 & 61.70 & 71.09 & 78.38 & 84.24 & 88.80 \\
  & GRPO     & 56.20 & 45.19 & 58.01 & 69.04 & 77.62 & 83.60 \\
\cmidrule(lr){1-8}
\multirow{3}{*}{64}
  & GMC-GRPO & 65.20 & 60.38 & 69.97 & 77.30 & 83.30 & 88.00 \\
  & TIC-GRPO & 65.40 & 60.29 & 70.21 & 77.42 & 82.92 & 86.80 \\
  & GRPO     & 52.00 & 40.91 & 53.93 & 65.43 & 74.47 & 81.20 \\
\cmidrule(lr){1-8}
\multirow{3}{*}{128}
  & GMC-GRPO & 68.40 & 59.03 & 69.57 & 77.34 & 83.45 & 88.00 \\
  & TIC-GRPO & 63.20 & 55.41 & 66.86 & 75.64 & 82.59 & 87.80 \\
  & GRPO     & 51.40 & 35.33 & 47.89 & 59.87 & 70.46 & 78.60 \\
\cmidrule(lr){1-8}
\multirow{3}{*}{256}
  & GMC-GRPO & 59.60 & 48.33 & 61.64 & 72.02 & 79.85 & 85.40 \\
  & TIC-GRPO & 56.00 & 46.20 & 59.78 & 71.12 & 80.05 & 86.20 \\
  & GRPO     & 47.60 & 32.11 & 44.78 & 57.10 & 68.17 & 77.60 \\
\cmidrule(lr){1-8}
\multirow{3}{*}{512}
  & GMC-GRPO & 56.40 & 42.01 & 55.97 & 67.78 & 76.91 & 83.40 \\
  & TIC-GRPO & 53.80 & 38.17 & 51.07 & 62.59 & 72.47 & 80.60 \\
  & GRPO     & 46.80 & 30.23 & 42.73 & 55.03 & 66.16 & 76.00 \\
\bottomrule
\end{tabular}
\vspace{2pt}
\begin{minipage}{0.98\linewidth}
\footnotesize Greedy uses temperature 0. pass@$k$ uses the unbiased estimator from $n=16$ samples per question (temperature 1.0, top-$p$ 1.0), averaged over 500 questions.
\end{minipage}
\end{table*}

Our theoretical results concern optimization convergence rather than only final
task accuracy. To connect these guarantees to observable training dynamics,
Figure~\ref{fig:fixed-delay-entropy} reports current-policy token entropy, which
is the most direct diagnostic among our logged statistics of how quickly the
policy distribution stabilizes. Without delay, all three methods exhibit
similar entropy decay, consistent with their nearly identical Pass@1
trajectories in Figure~\ref{fig:fixed-delay}. As the delay increases, GMC-GRPO
typically enters a stable low-entropy regime earlier than TIC-GRPO and vanilla
GRPO. Although entropy is not the formal stationarity measure used in our
theory, its faster decay and earlier stabilization, together with the Pass@1
curves, provide empirical evidence consistent with the predicted faster
convergence of GMC-GRPO under stale rollouts.

Table~\ref{tab:math500-fixed-delay-seed1-final} complements
Figure~\ref{fig:fixed-delay} by reporting the complete checkpoint-800 results,
including greedy accuracy and pass@$k$ for $k\in\{1,2,4,8,16\}$. The three
methods remain nearly indistinguishable at zero delay. Under stale rollouts,
GMC-GRPO consistently improves over vanilla GRPO and is generally competitive
with or stronger than TIC-GRPO, with the clearest gains at delays 128 and 512.
Thus, the trend in the main-text Pass@1 curves extends to greedy decoding and a
broader range of sampling budgets.

\subsubsection{GMC-GRPO benefits from larger group sizes}
\label{sec:group-size-benefits}

Corollary~\ref{cor:gmc-local-overlap} shows that larger groups reduce the
delay-dependent term in the GMC-GRPO convergence bound. To examine this benefit empirically, we vary the group size over
$G\in\{4,8,32\}$ while keeping all other experimental settings identical to
the fixed-delay experiment with delay $128$.
Figure~\ref{fig:group-size-reward} shows that GMC-GRPO establishes a reward
advantage over TIC-GRPO progressively earlier as the group size increases:
the curves begin to separate around step $300$ for $G=4$, around step $200$
for $G=8$, and around step $100$ for $G=32$.
This earlier separation indicates that GMC-GRPO benefits more promptly from
larger groups, providing empirical evidence consistent with the faster
convergence predicted by Corollary~\ref{cor:gmc-local-overlap}.

The training-reward gap also becomes smaller at larger group sizes toward the
end of training, with the two curves nearly meeting for $G=32$.
This pattern is consistent with both methods approaching a reward plateau:
GMC-GRPO makes faster progress early on, while TIC-GRPO catches up as training
continues. Thus, the benefit of larger groups is most apparent in the learning
dynamics, even when the final training rewards are similar.

Table~\ref{tab:group-size-checkpoints} complements these training curves with
MATH500 evaluations at checkpoints $200$, $300$, $400$, and $800$.
At checkpoint $200$, GMC-GRPO's sampled Pass@1 advantage over TIC-GRPO
increases from $-1.11$ percentage points at $G=4$ to $3.79$ at $G=8$ and
$7.10$ at $G=32$, showing that the earlier training gains also translate into
better evaluation performance. At the final checkpoint, GMC-GRPO retains
higher sampled Pass@1 at all three group sizes and matches or exceeds
TIC-GRPO in greedy accuracy. In particular, at $G=32$, both methods achieve
$66.00\%$ greedy accuracy, but GMC-GRPO still achieves higher sampled Pass@1
($61.18\%$ versus $59.80\%$).

\begin{figure}[t]
    \centering
    \includegraphics[width=1\linewidth]{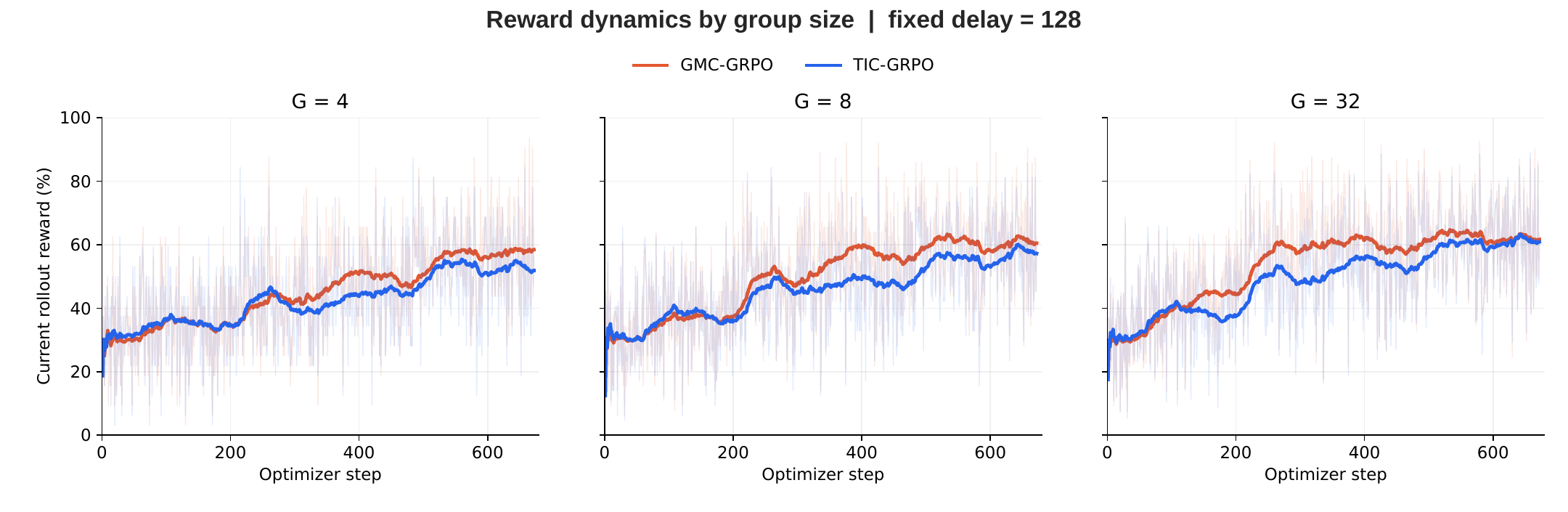}
    \caption{Training-reward dynamics for GMC-GRPO and TIC-GRPO with group
    sizes $G\in\{4,8,32\}$ under the fixed-delay setting with delay $128$.
    All other experimental settings are unchanged. Faint curves show
    individual measurements, and solid curves show smoothed trends.
    GMC-GRPO establishes a reward advantage earlier as the group size
    increases, while the gap narrows toward the end of training,
    particularly for $G=32$.}
    \label{fig:group-size-reward}
\end{figure}

\begin{table}[t]
\centering
\small
\setlength{\tabcolsep}{8pt}
\caption{MATH500 evaluation of Qwen3-1.7B-Base at checkpoints $200$, $300$,
$400$, and $800$ under the fixed-delay setting with delay $128$.
All settings except group size match the fixed-delay experiment.
Each method entry reports greedy accuracy / sampled Pass@1 (\%), with
sampled Pass@1 computed as mean correctness over 16 responses per problem
(avg@16). $\Delta$ denotes the sampled Pass@1 of GMC-GRPO minus that of
TIC-GRPO, measured in percentage points (pp).}
\label{tab:group-size-checkpoints}
\begin{tabular}{ccccc}
\toprule
Checkpoint & $G$ & GMC-GRPO & TIC-GRPO & $\Delta$ Pass@1 (pp) \\
\midrule
200 & 4  & 51.40 / 33.70 & 51.60 / 34.81 & $-1.11$ \\
200 & 8  & 55.40 / 39.23 & 50.20 / 35.44 & $+3.79$ \\
200 & 32 & 62.80 / 46.51 & 53.60 / 39.41 & $+7.10$ \\
\midrule
300 & 4  & 57.00 / 39.02 & 54.80 / 36.05 & $+2.97$ \\
300 & 8  & 61.40 / 46.34 & 55.20 / 41.01 & $+5.33$ \\
300 & 32 & 65.40 / 56.34 & 56.00 / 42.10 & $+14.24$ \\
\midrule
400 & 4  & 62.60 / 48.52 & 58.60 / 42.29 & $+6.23$ \\
400 & 8  & 67.00 / 55.90 & 60.00 / 46.76 & $+9.14$ \\
400 & 32 & 64.00 / 58.55 & 62.80 / 51.41 & $+7.14$ \\
\midrule
800 & 4  & 67.00 / 56.17 & 63.00 / 53.91 & $+2.26$ \\
800 & 8  & 68.40 / 59.03 & 63.20 / 55.41 & $+3.62$ \\
800 & 32 & 66.00 / 61.18 & 66.00 / 59.80 & $+1.38$ \\
\bottomrule
\end{tabular}
\end{table}

\subsubsection{Comparison against A-3PO.}
\begin{table}[t]
\centering
\small
\setlength{\tabcolsep}{3.5pt}
\renewcommand{\arraystretch}{1.25}
\caption{Comparison of A-3PO and GMC-GRPO under simulated asynchronous delays.
Each benchmark cell is greedy/avg@$n$, with $n=16$ for MATH500, Minerva,
and AMC23, and $n=8$ for GSM8K, Gaokao, and OlympiadBench.
\emph{Avg. perf.} is the average over the six benchmarks, reported as
mean greedy/mean avg@$n$.
For Qwen3-1.7B and Qwen3-4B, the better result for each metric is bolded.
A-3PO exhibited training collapse in our Qwen3-8B experiment.}
\label{tab:a3po-comparison}
\resizebox{\linewidth}{!}{%
\begin{tabular}{lcccccccc}
\toprule
\textbf{Method}
& \textbf{Avg. Delay}
& \textbf{MATH500}
& \textbf{GSM8K}
& \textbf{Gaokao}
& \textbf{Olympiad}
& \textbf{Minerva}
& \textbf{AMC23}
& \textbf{Avg. perf.} \\
\midrule
\multicolumn{9}{c}{\textit{Qwen3-1.7B}} \\
\midrule
A-3PO
& 160.80
& \textbf{67.40} \,/\, \textbf{61.26}
& \textbf{81.88} \,/\, \textbf{76.42}
& 53.77 \,/\, \textbf{51.69}
& \textbf{30.71} \,/\, \textbf{25.56}
& \textbf{28.68} \,/\, \textbf{24.20}
& 32.50 \,/\, \textbf{32.50}
& \textbf{49.16} \,/\, \textbf{45.27} \\
\ours \textbf{GMC-GRPO}
& 159.75
& 66.00 \,/\, 57.29
& 81.65 \,/\, 72.66
& \textbf{55.32} \,/\, 47.73
& 28.19 \,/\, 22.61
& 26.47 \,/\, 20.11
& \textbf{35.00} \,/\, 28.44
& 48.77 \,/\, 41.47 \\
\midrule
\multicolumn{9}{c}{\textit{Qwen3-4B}} \\
\midrule
A-3PO
& 138.34
& 70.80 \,/\, 69.16
& 90.14 \,/\, \textbf{88.55}
& 59.48 \,/\, 58.38
& 36.80 \,/\, 35.16
& 31.62 \,/\, 29.78
& 50.00 \,/\, 46.88
& 56.47 \,/\, 54.65 \\
\ours \textbf{GMC-GRPO}
& 141.01
& \textbf{77.60} \,/\, \textbf{72.54}
& \textbf{91.13} \,/\, 87.03
& \textbf{65.19} \,/\, \textbf{60.91}
& \textbf{43.18} \,/\, \textbf{37.72}
& \textbf{34.19} \,/\, \textbf{30.10}
& \textbf{52.50} \,/\, \textbf{48.91}
& \textbf{60.63} \,/\, \textbf{56.20} \\
\midrule
\multicolumn{9}{c}{\textit{Qwen3-8B}} \\
\midrule
A-3PO
& -
& \multicolumn{7}{c}{\textit{Training collapsed}} \\
\ours \textbf{GMC-GRPO}
& 141.1
& 78.80 \,/\, 75.22
& 93.33 \,/\, 91.37
& 69.09 \,/\, 63.64
& 44.96 \,/\, 40.34
& 40.07 \,/\, 34.35
& 60.00 \,/\, 52.81
& 64.37 \,/\, 59.62 \\
\bottomrule
\end{tabular}%
}
\end{table}

\begin{figure}[t]
    \centering
    \includegraphics[width=0.8\linewidth]{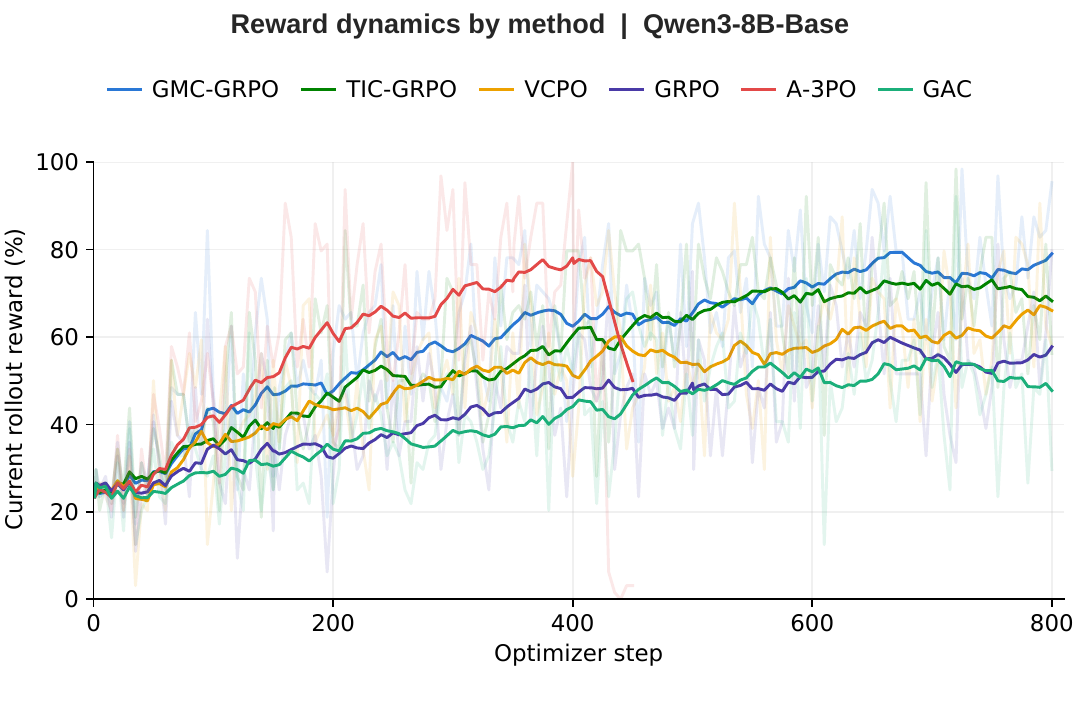}
    \caption{Training reward on Qwen3-8B under simulated asynchronous delays.
    Faint curves show individual measurements, and solid curves show smoothed
    trends. A-3PO initially improves rapidly, but its recorded reward falls to
    near zero around steps 430--440. GMC-GRPO continues to improve over the
    remaining training steps.}
    \label{fig:a3po-training-collapse}
\end{figure}

In this section, we present an independent comparison between GMC-GRPO and A-3PO \citep{li20253po}, which achieves the best average performance on Qwen3-1.7B but suffers training collapse on Qwen3-8B in our experiments. We therefore report this comparison separately and discuss the observed instability in detail.

To make use of stale rollouts without overly restricting policy updates,
A-3PO follows decoupled policy optimization \citep{fu2026areal}: it uses a
proximal anchor for clipping while retaining importance correction relative to
the behavior policy. Suppressing token arguments $(y_t\mid x,y_{<t})$, the
decoupled surrogate is
\begin{equation}\label{eq:a3po-decoupled}
L_{\mathrm{decoupled}}^{\mathrm{CLIP}}(\theta)
=\mathbb{E}\!\left[
\frac{\pi_{\mathrm{prox}}}{\pi_{\theta_{\mathrm{behav}}}}
\min\!\left\{
\frac{\pi_\theta}{\pi_{\mathrm{prox}}}\widehat A,\,
\operatorname{clip}\!\left(
\frac{\pi_\theta}{\pi_{\mathrm{prox}}},1-\epsilon,1+\epsilon
\right)\widehat A
\right\}\right],
\end{equation}
where the expectation is over sampled rollout tokens and $\widehat A$ is the
group-based advantage. A-3PO approximates the proximal anchor by
\[
\log\pi_{\mathrm{prox}}
=\alpha\log\pi_{\theta_{\mathrm{behav}}}
 +(1-\alpha)\log\pi_\theta,
\qquad
\alpha=\frac{1}{\max\{1,\tau\}},
\]
This design is reasonable when delays are small: the policy shift is limited,
and retaining more learning signals from stale rollouts can accelerate learning.
Under large delays, however, $\alpha$ approaches zero, so the ratio used for
clipping can remain close to one and the update can resemble
\textit{unclipped GRPO}. Since the outer importance factor
$\pi_{\mathrm{prox}}/\pi_{\theta_{\mathrm{behav}}}$ remains unclipped,
large importance weights can destabilize updates and even lead to training collapse in extreme cases.

This is also demonstrated in our experiments: Table~\ref{tab:a3po-comparison} shows that A-3PO outperforms GMC-GRPO on
Qwen3-1.7B in mean greedy/mean avg@$n$ accuracy (49.16/45.27 versus 48.77/41.47).
This advantage is consistent with prior evidence that hard clipping can discard
useful gradients and restrict policy improvement
\citep{dwyer2025s, zheng2026prosperity}. Retaining more of these signals can
therefore improve performance when extreme importance ratios are rare.

The Qwen3-8B experiment illustrates the accompanying stability challenge.
In Figure~\ref{fig:a3po-training-collapse}, A-3PO initially learns faster but
its training reward collapses around step 430, whereas GMC-GRPO continues to
improve and reaches mean greedy/mean avg@$n$ accuracy of 64.37/59.62.
This trajectory resembles the prosperity-before-collapse behavior reported by
\citet{zheng2026prosperity}, where removing the trust region exposes useful
stale-data signals but also permits destabilizing updates. The comparison
highlights the motivation for GMC: preserving learning signals must be coupled
with control of update magnitude. GMC caps the average importance mass within
each group and preserves the relative contributions of its responses,
controlling the update second moment under the assumptions of
Proposition~\ref{prop:weighted-second-moment}.

\section{Mixed-Delay Setting, GMC-GRPO and TIC-GRPO Algorithms}\label{sec:algorithm}

TIC-GRPO was originally proposed for synchronous training
\citep{pang2025tic}, so Algorithm~\ref{alg:tic-sync} first restates this
original synchronous form for reference. Algorithm~\ref{alg:async} then gives the asynchronous versions of GMC-GRPO and TIC-GRPO analyzed in this paper. The two asynchronous methods share the same rollout
and learner schedule and differ only in their importance correction.
Each completed group stores its prompt,
responses, rewards, behavior-policy version $s$, and behavior
log-probabilities $\ell^{(i)}=\log\pi_{\theta_s}(y^{(i)}\mid x)$.
For a group $b$ consumed at iteration $n$, both algorithms compute
\[
\rho_n^{(b,i)}
=\exp\!\left(\log\pi_{\theta_n}(y^{(b,i)}\mid x^{(b)})
-\ell^{(b,i)}\right).
\]
For GMC-GRPO, the learner forms $g_n^{\mathrm{GMC}}$ in
\cref{eq:gmcgrad-main} using the group-rescaled ratios
$\widehat\rho_n^{(b,i)}$. For TIC-GRPO, it instead uses the individually
truncated ratios $\rho_n^{(b,i)}\mathbf1\{\rho_n^{(b,i)}\le1+\eps\}$
to form $g_n^{\mathrm{TIC}}$, as defined in
\cref{eq:appvangroup,eq:appvangrad}. In either case, the learner takes the
ascent step $\theta_{n+1}=\theta_n+\eta g_n$. Both methods use the
leave-one-out advantages defined in Section~\ref{sec:prelim-rl}.
Behavior log-probabilities and advantages are held fixed during each update.

In the synchronous schedule, $K$ mini-batches share one behavior-policy
snapshot: rollout collection finishes before optimization begins, and the
next global batch is generated only after these updates finish. Thus,
synchronous training can still use stale data during successive mini-batch
updates. In the asynchronous schedule, rollout workers and the learner run
concurrently. The learner needs only $B$ completed groups to update; it does
not wait for all workers or require a common behavior-policy version.

\begin{algorithm}[t]
\caption{TIC-GRPO with synchronous rollouts}
\label{alg:tic-sync}
\begin{algorithmic}[1]
\Require initial parameter $\theta_0$; step size $\eta$; total updates $N$;
group size $G\ge2$; mini-batch size $B$; mini-batches per global batch $K\ge1$;
clipping threshold $1+\eps$; prompt distribution $\cD$; reward $r_\phi$.
\State $n\gets0$.
\While{$n<N$}
  \State freeze the behavior-policy snapshot $\theta_s\gets\theta_n$ with $s\gets n$;
  set $K_n\gets\min\{K,N-n\}$.
  \State draw $K_nB$ prompts from $\cD$ and generate $G$ responses per prompt
  under the fixed policy $\pi_{\theta_s}$.
  \State wait until all $K_nB$ groups are generated and scored by $r_\phi$;
  store their behavior log-probabilities and version $s$.
  \State partition the completed global batch into $K_n$ mini-batches
  of $B$ whole groups each.
  \For{each mini-batch in the partition}
    \State denote its groups by $b\in[B]$ and set $s_n^{(b)}\gets s$,
    $\tau_n^{(b)}\gets n-s$.
    \State form $\widehat A^{(b,i)}$ from the rewards of all $G$ responses
    in each group.
    \State compute $\rho_n^{(b,i)}$ from the current policy and stored
    behavior log-probabilities for all $b\in[B]$, $i\in[G]$.
    \State form $g_n^{\mathrm{TIC}}$ using \cref{eq:appvangrad} and update
    $\theta_{n+1}\gets\theta_n+\eta g_n^{\mathrm{TIC}}$.
    \State $n\gets n+1$.
  \EndFor
\EndWhile
\end{algorithmic}
\end{algorithm}

\begin{algorithm}[t]
\caption{TIC-GRPO and GMC-GRPO with asynchronous rollouts}
\label{alg:async}
\label{alg:tic-async}
\label{alg:gmc}
\begin{algorithmic}[1]
\Require method $\mathsf{method}\in\{\mathrm{TIC},\mathrm{GMC}\}$;
initial parameter $\theta_0$; step size $\eta$; total updates $N$;
group size $G\ge2$; mini-batch size $B$; threshold $1+\eps$;
rollout workers over $\cD$ with reward $r_\phi$; shared buffer $\mathcal Q$.
\State initialize $\mathcal Q\gets\varnothing$ and
publish the snapshot $(0,\theta_0)$.
\Statex \textbf{Run the following worker and learner loops concurrently.}
\Statex \textbf{Each rollout worker:}
\While{training is active}
  \State read a published snapshot $(s,\theta_s)$ and draw $x\sim\cD$.
  \State generate $y^{(1)},\ldots,y^{(G)}\sim\pi_{\theta_s}(\cdot\mid x)$,
  keeping $\theta_s$ fixed throughout this group.
  \State compute $r^{(i)}\gets r_\phi(x,y^{(i)})$ and store
  $\ell^{(i)}\gets\log\pi_{\theta_s}(y^{(i)}\mid x)$ for all $i\in[G]$.
  \State append $(x,\{y^{(i)},r^{(i)},\ell^{(i)}\}_{i=1}^G,s)$ to
  $\mathcal Q$ as one completed group.
\EndWhile
\Statex \textbf{Learner:}
\For{$n=0,1,\ldots,N-1$}
  \State once $|\mathcal Q|\ge B$, pop $B$ completed groups $\cM_n$;
  their behavior-policy versions need not match.
  \State denote each group's stored version by $s_n^{(b)}$ and set
  $\tau_n^{(b)}\gets n-s_n^{(b)}$.
  \State form $\widehat A^{(b,i)}$ from the rewards of all $G$ responses
  in each group.
  \State compute $\rho_n^{(b,i)}$ from the current policy and stored
  behavior log-probabilities for all $b\in[B]$, $i\in[G]$.
  \If{$\mathsf{method}=\mathrm{TIC}$}
    \State form $g_n\gets g_n^{\mathrm{TIC}}$ using \cref{eq:appvangrad}.
  \Else \Comment{GMC-GRPO}
    \For{each group $b\in[B]$}
      \State compute $\overline\rho_n^{(b)}\gets
      G^{-1}\sum_{i=1}^G\rho_n^{(b,i)}$ using log-sum-exp.
      \State set $\widehat\rho_n^{(b,i)}\gets
      \rho_n^{(b,i)}/\max\{1,\overline\rho_n^{(b)}/(1+\eps)\}$
      for all $i\in[G]$.
    \EndFor
    \State form $g_n\gets g_n^{\mathrm{GMC}}$ using \cref{eq:gmcgrad-main}.
  \EndIf
  \State update $\theta_{n+1}\gets\theta_n+\eta g_n$.
  \State publish $(n+1,\theta_{n+1})$ without waiting for ongoing rollouts;
  in-progress groups retain their launch snapshots.
\EndFor
\State signal the rollout workers to stop.
\end{algorithmic}
\end{algorithm}

We use the asynchronous sampling model and notation introduced in
Section~\ref{sec:prelim-async}. In particular, $\tau_n^{(b)}$ is the delay of
group $b$, $\rho_n^{(b,i)}$ is the delayed importance ratio in
\cref{eq:delayed-ratio}, and $\widehat A^{(b,i)}$ is the leave-one-out
advantage defined in Section~\ref{sec:prelim-rl}. Let $\cM_n$ denote the
mini-batch of $B$ completed groups returned at learner iteration $n$.
To aggregate the mixed delays over the training horizon, for any integer
$k\ge 1$, define the averaged $k$th delay moment
\begin{equation}
\label{eq:average-delay-moment}
\overline\tau_k
:=\frac{1}{N}\sum_{n=0}^{N-1}
\E\!\left[\frac{1}{B}\sum_{b=1}^B
\bigl(\tau_n^{(b)}\bigr)^k\right].
\end{equation}
Thus, $\overline\tau_2$ and $\overline\tau_4$ denote the averaged second and
fourth delay moments, respectively. The second moment controls intermediate
parameter-drift terms, while the fourth moment appears in the final convergence
bounds after the delay-induced bias is squared.

For the appendix proofs, we additionally write the average importance ratio in
group $b$ as
$\overline\rho_n^{(b)}:=G^{-1}\sum_{i=1}^G\rho_n^{(b,i)}$. Softmax policies
have full support, so every ratio is finite and
$\overline\rho_n^{(b)}>0$ almost surely. Using the common scaling factor from
Proposition~\ref{prop:gmc-retained-mass}, the GMC effective ratio admits the
equivalent form
\begin{equation}\label{eq:gmcratio}
\widehat\rho_n^{(b,i)}
:=
\frac{\rho_n^{(b,i)}}
{\max\{1,\overline\rho_n^{(b)}/(1+\eps)\}}.
\end{equation}
This form is used in the groupwise decomposition and moment bounds below.

The GMC branch of Algorithm~\ref{alg:async} applies this groupwise rescaling
within the shared asynchronous training loop. The behavior log-probabilities
stored with each completed group are sufficient to compute every delayed
importance ratio at consumption time.

\begin{remark}[Autodifferentiation implementation]
\Cref{eq:gmcgrad-main} defines the update directly. In a
surrogate-loss implementation,
$\max\{1,\overline\rho_n^{(b)}/(1+\eps)\}$ and
$\widehat A^{(b,i)}$ must be treated as constants using stop-gradient. The
numerator ratio supplies the policy gradient. Differentiating through the common
group denominator defines a different estimator and is not covered by the
theorems below.
\end{remark}

\paragraph{Proof-specific conditioning and divergence notation.}
The filtration pair $\cF_n^-\subseteq\cH_n$ is defined in
Section~\ref{sec:prelim-async}, and Assumption~\ref{as:async} specifies the
corresponding conditional sampling properties. For each returned group, define
the current-to-behavior divergences
\begin{align*}
D_n^{(b)}
&:=\KL\!\left(
\pi_{\theta_n}(\cdot\mid x^{(b)})\,\|\,
\pi_{\theta_{n-\tau_n^{(b)}}}(\cdot\mid x^{(b)})
\right),\\
\chi_n^{2,(b)}
&:=\chi^2\!\left(
\pi_{\theta_n}(\cdot\mid x^{(b)})\,\|\,
\pi_{\theta_{n-\tau_n^{(b)}}}(\cdot\mid x^{(b)})
\right).
\end{align*}
Both divergences, as well as $J_{x^{(b)}}(\theta_n)$, are
$\cH_n$-measurable and hence fixed inside conditional expectations given
$\cH_n$. Because $\cF_n^-\subseteq\cH_n$, every integrable random variable
$Z$ satisfies
\[
\E[Z\mid\cF_n^-]=\E\!\left[\E[Z\mid\cH_n]\mid\cF_n^-\right].
\]
The proofs first condition on $\cH_n$ to hold the prompt, delay, and behavior
policy fixed, and then average back to $\cF_n^-$ to recover a statement about
the learner iterate.

With the algorithm and conditioning convention fixed, the next section
collects the analytic tools shared by the GMC-GRPO and TIC-GRPO proofs.

\section{Technical preliminaries}\label{sec:assumptions}

The convergence proofs for both algorithms rely on a common set of analytic
tools. Starting from Assumption~\ref{as:lip}, we establish trajectory-score
bounds, control trajectory KL divergence by the squared parameter distance,
and prove smoothness of the reward objective. We then use this smoothness
bound to prove Lemma~\ref{lem:master-descent}, which reduces convergence
analysis to controlling the update second moment and the squared conditional
bias. The following sections supply these controls for GMC-GRPO and TIC-GRPO.

\begin{lemma}[Trajectory score bounds]\label{lem:score-properties}
Under Assumption~\ref{as:lip}, for every prompt $x$, parameter $\theta$, and
length-$T$ response $y$,
\begin{align}
\E_{y\sim\pi_\theta(\cdot\mid x)}
\norm{\nabla\log\pi_\theta(y\mid x)}_2^2
&\le2LT\log|\cA|,
\label{eq:scoremoment}\\
\norm{\nabla\log\pi_\theta(y\mid x)}_2^2
&\le-2LT\log\pi_\theta(y\mid x).
\label{eq:selfbound}
\end{align}
If Assumption~\ref{as:coverage} also holds, then
\begin{equation}\label{eq:coverage-score}
\norm{\nabla\log\pi_\theta(y\mid x)}_2
\le T\sqrt{2L\log\mathcal C}.
\end{equation}
\end{lemma}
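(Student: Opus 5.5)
The plan is to decompose the trajectory score into token-level scores, $\nabla\log\pi_\theta(y\mid x)=\sum_{t=1}^T s_t$ with $s_t:=\nabla_\theta\log\pi_\theta(y_t\mid x,y_{<t})$, and prove each of the three bounds at the token level before assembling them. The key token-level tool is a self-bounding inequality derived from Assumption~\ref{as:lip}. Lipschitzness of the token score in $\theta$ makes the token log-likelihood $f(\theta):=\log\pi_\theta(y_t\mid x,y_{<t})$ an $L$-smooth function. Since $f\le 0$ everywhere, the descent lemma applied at the point $\theta+\frac1L\nabla f(\theta)$ gives
\[
0\ \ge\ f\!\left(\theta+\tfrac1L\nabla f(\theta)\right)\ \ge\ f(\theta)+\tfrac{1}{2L}\norm{\nabla f(\theta)}_2^2 .
\]
Hence $\norm{s_t}_2^2\le -2L\log\pi_\theta(y_t\mid x,y_{<t})$ for every token and context.

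For \eqref{eq:selfbound}, I would apply Cauchy--Schwarz to the trajectory sum, $\norm{\sum_t s_t}_2^2\le T\sum_t\norm{s_t}_2^2$. Inserting the token self-bound then gives $T\sum_t\bigl(-2L\log\pi_\theta(y_t\mid x,y_{<t})\bigr)=-2LT\log\pi_\theta(y\mid x)$. For \eqref{eq:coverage-score}, Assumption~\ref{as:coverage} gives $-\log\pi_\theta(y_t\mid\cdot)\le\log\mathcal C$ directly. Each $\norm{s_t}_2$ is therefore at most $\sqrt{2L\log\mathcal C}$, and the triangle inequality yields $T\sqrt{2L\log\mathcal C}$.

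For \eqref{eq:scoremoment}, I would use the same Cauchy--Schwarz step, but take expectations token by token rather than using the martingale structure. This gives
\[
\E\norm{\nabla\log\pi_\theta(y\mid x)}_2^2\le 2LT\sum_t\E\bigl[-\log\pi_\theta(y_t\mid x,y_{<t})\bigr].
\]
Conditioning on the prefix $y_{<t}$, the inner expectation is the entropy of the next-token distribution, which is at most $\log|\cA|$. That would produce $2LT^2\log|\cA|$, one factor of $T$ too many.

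The main obstacle is therefore recovering a single factor of $T$, and my first attempt would exploit orthogonality. Token scores form a martingale difference sequence under $\pi_\theta$, since $\E[s_t\mid y_{<t}]=0$. Consequently the cross terms vanish and $\E\norm{\sum_t s_t}_2^2=\sum_t\E\norm{s_t}_2^2$ exactly. Combined with the token self-bound and the entropy bound, this gives $\sum_t 2L\,\E[-\log\pi_\theta(y_t\mid\cdot)]\le 2LT\log|\cA|$, which is the claimed bound. So \eqref{eq:scoremoment} should use orthogonality rather than Cauchy--Schwarz, while \eqref{eq:selfbound} is pathwise and must use Cauchy--Schwarz.
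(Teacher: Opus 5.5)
Your proposal is correct and follows the paper's proof essentially step for step. Both arguments rest on the token-level self-bound $\norm{s_t}_2^2\le-2L\log\pi_\theta(y_t\mid x,y_{<t})$, obtained from the descent lemma and the sign of the log-likelihood. From there you, like the paper, use Cauchy--Schwarz over the $T$ tokens for the pathwise bound and coverage plus the triangle inequality for the uniform bound. For the second-moment bound, both use martingale-difference orthogonality, which the paper verifies via $\E[s_t\mid x,y_{<t}]=\sum_{a\in\cA}\nabla\pi_\theta(a\mid x,y_{<t})=0$, together with the conditional entropy bound $\log|\cA|$.
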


\begin{proof}
Fix a token position $t$ and its prefix $(x,y_{<t})$, and write
$f_t(\theta):=-\log\pi_\theta(y_t\mid x,y_{<t})$. Assumption~\ref{as:lip}
states that $\nabla f_t$ is $L$-Lipschitz. The descent lemma therefore gives,
for every vector $v$,
\[
 f_t(\theta+v)\le f_t(\theta)+\ip{\nabla f_t(\theta)}{v}
+\frac L2\norm{v}_2^2.
\]
Choose $v=-L^{-1}\nabla f_t(\theta)
=L^{-1}\nabla\log\pi_\theta(y_t\mid x,y_{<t})$. Since $f_t$ is
nonnegative,
\begin{align*}
0
&\le-\log\pi_{\theta+L^{-1}\nabla\log\pi_\theta(y_t\mid x,y_{<t})}
      (y_t\mid x,y_{<t})\\
&\le-\log\pi_\theta(y_t\mid x,y_{<t})
   -\frac1{2L}\norm{\nabla\log\pi_\theta(y_t\mid x,y_{<t})}_2^2,
\end{align*}
Rearranging the last inequality shows that each token score satisfies
\begin{equation}\label{eq:token-score-bound}
\norm{\nabla\log\pi_\theta(y_t\mid x,y_{<t})}_2^2
\le-2L\log\pi_\theta(y_t\mid x,y_{<t}).
\end{equation}
Now let $s_t:=\nabla\log\pi_\theta(y_t\mid x,y_{<t})$. The autoregressive
factorization gives $\nabla\log\pi_\theta(y\mid x)=\sum_{t=1}^Ts_t$, so
Cauchy--Schwarz and \cref{eq:token-score-bound} yield
\[
\norm{\nabla\log\pi_\theta(y\mid x)}_2^2
\le T\sum_{t=1}^T\norm{s_t}_2^2
\le-2LT\sum_{t=1}^T\log\pi_\theta(y_t\mid x,y_{<t})
=-2LT\log\pi_\theta(y\mid x),
\]
which proves \cref{eq:selfbound}.

For the expected bound, first expand the conditional score identity:
\[
\E[s_t\mid x,y_{<t}]
=\sum_{a\in\cA}\pi_\theta(a\mid x,y_{<t})
\nabla\log\pi_\theta(a\mid x,y_{<t})
=\sum_{a\in\cA}\nabla\pi_\theta(a\mid x,y_{<t})
=\nabla 1=0.
\]
If $s<t$, then $s_s$ is determined by $(x,y_{<t})$. The tower property
therefore gives
$\E\ip{s_s}{s_t}=\E\ip{s_s}{\E[s_t\mid x,y_{<t}]}=0$; hence distinct token
scores are orthogonal in expectation. Expanding the square and applying
\cref{eq:token-score-bound},
\begin{align*}
\E_{y\sim\pi_\theta(\cdot\mid x)}
\norm{\nabla\log\pi_\theta(y\mid x)}_2^2
&=\E\norm{\sum_{t=1}^Ts_t}_2^2
=\sum_{t=1}^T\E\norm{s_t}_2^2\\
&\le2L\sum_{t=1}^T\E\!\left[-\log\pi_\theta(y_t\mid x,y_{<t})\right]
\le2LT\log|\cA|,
\end{align*}
where the last inequality uses
\[
\E[-\log\pi_\theta(y_t\mid x,y_{<t})]
=\E_{y_{<t}}\!\left[H\bigl(\pi_\theta(\cdot\mid x,y_{<t})\bigr)\right]
\le\log|\cA|.
\]
This proves \cref{eq:scoremoment}. Finally, Assumption~\ref{as:coverage}
implies $-\log\pi_\theta(y_t\mid x,y_{<t})\le\log\mathcal C$.
Combining this with \cref{eq:token-score-bound} and the triangle inequality,
\[
\norm{\nabla\log\pi_\theta(y\mid x)}_2
\le\sum_{t=1}^T\norm{s_t}_2
\le T\sqrt{2L\log\mathcal C},
\]
which is \cref{eq:coverage-score}.
\end{proof}

\begin{lemma}[Trajectory KL is quadratically controlled]\label{lem:trajectory-kl}
Under Assumption~\ref{as:lip}, for every prompt $x$ and all $\theta,\theta'$,
\begin{equation}\label{eq:trajectory-kl}
\KL\bigl(\pi_\theta(\cdot\mid x)\,\|\,
\pi_{\theta'}(\cdot\mid x)\bigr)
\le\frac{TL}{2}\norm{\theta-\theta'}_2^2.
\end{equation}
\end{lemma}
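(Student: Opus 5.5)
The idea is to use the autoregressive factorization to write the trajectory KL as a sum of expected token-level KL divergences (the chain rule). Each token-level KL is then bounded by a Bregman-divergence argument driven by Assumption~\ref{as:lip}.

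\emph{Chain rule.} Write $P_t(\cdot):=\pi_\theta(\cdot\mid x,y_{<t})$ and $Q_t(\cdot):=\pi_{\theta'}(\cdot\mid x,y_{<t})$. Since $\log\pi_\theta(y\mid x)=\sum_t\log P_t(y_t)$, taking expectations under $y\sim\pi_\theta(\cdot\mid x)$ and applying the tower property gives
\[
\KL\bigl(\pi_\theta(\cdot\mid x)\,\|\,\pi_{\theta'}(\cdot\mid x)\bigr)
=\sum_{t=1}^T\E_{y_{<t}\sim\pi_\theta}\bigl[\KL(P_t\,\|\,Q_t)\bigr].
\]
It therefore suffices to prove, for every fixed context, that $\KL(P_t\|Q_t)\le\frac L2\norm{\theta-\theta'}_2^2$. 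Summing over the $T$ positions then yields the claimed $\frac{TL}{2}\norm{\theta-\theta'}_2^2$.

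\emph{Token-level bound.} Fix a context and set $\phi(u):=\E_{a\sim P_t}[-\log\pi_u(a\mid x,y_{<t})]$, viewed as a function of the parameter $u$. This is the cross-entropy, so $\phi(\theta')-\phi(\theta)=\KL(P_t\|Q_t)$. Its gradient at $u=\theta$ is $-\E_{a\sim P_t}\nabla\log\pi_\theta(a\mid\cdot)=0$ by the score identity already used in Lemma~\ref{lem:score-properties}. Moreover, $\nabla\phi$ is $L$-Lipschitz, because it is an average over $a$ of the maps $u\mapsto-\nabla\log\pi_u(a\mid\cdot)$, each of which is $L$-Lipschitz by Assumption~\ref{as:lip}. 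The descent lemma then gives
\[
\phi(\theta')\le\phi(\theta)+\ip{\nabla\phi(\theta)}{\theta'-\theta}+\frac L2\norm{\theta'-\theta}_2^2,
\]
and since the linear term vanishes, $\KL(P_t\|Q_t)\le\frac L2\norm{\theta-\theta'}_2^2$.

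\emph{Expected difficulty.} The main point needing care is that the expectation in $\phi$ is taken under the fixed distribution $P_t$, which does not depend on $u$. Because of this, smoothness of $\phi$ follows directly from token-level smoothness of the scores, with no concavity or convexity assumption on the log-likelihood. Beyond that, the argument needs only the finiteness of $\cA$, which justifies exchanging the sum over tokens with differentiation in the score identity. The remaining steps are routine.
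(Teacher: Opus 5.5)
Your proof is correct and is essentially the paper's argument: the KL chain rule over the $T$ positions, a descent-lemma bound from the $L$-Lipschitz token score, and the score identity to kill the linear term. The only cosmetic difference is the order of operations: the paper applies the smoothness inequality pointwise for each token $a$ and then takes the conditional expectation, while you first average into the cross-entropy $\phi$ under the fixed $P_t$ and then apply the descent lemma. The two are equivalent.
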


\begin{proof}
For a fixed prefix $h=(x,y_{<t})$ and token $a$, the $L$-Lipschitz score
implies the two-sided smoothness bound
\[
\log\pi_\theta(a\mid h)-\log\pi_{\theta'}(a\mid h)
\le\ip{\nabla\log\pi_\theta(a\mid h)}{\theta-\theta'}
+\frac L2\norm{\theta-\theta'}_2^2.
\]
The autoregressive KL chain rule sums this inequality over token positions:
\begin{align*}
\KL\bigl(\pi_\theta(\cdot\mid x)\,\|\,\pi_{\theta'}(\cdot\mid x)\bigr)
&=\E_{y\sim\pi_\theta(\cdot\mid x)}
\sum_{t=1}^T\log\frac{\pi_\theta(y_t\mid x,y_{<t})}
{\pi_{\theta'}(y_t\mid x,y_{<t})}\\
&\le\sum_{t=1}^T\E_{y\sim\pi_\theta(\cdot\mid x)}
\left[
\ip{\nabla\log\pi_\theta(y_t\mid x,y_{<t})}{\theta-\theta'}
+\frac L2\norm{\theta-\theta'}_2^2
\right]\\
&=\frac{TL}{2}\norm{\theta-\theta'}_2^2.
\end{align*}
For the last equality, condition on $(x,y_{<t})$ in each linear term. The score
identity proved above gives
\[
\E\!\left[
\ip{\nabla\log\pi_\theta(y_t\mid x,y_{<t})}{\theta-\theta'}
\,\middle|\,x,y_{<t}\right]
=\ip{\sum_{a\in\cA}\nabla\pi_\theta(a\mid x,y_{<t})}{\theta-\theta'}=0.
\]
Thus only the $L\norm{\theta-\theta'}_2^2/2$ remainder remains at each of the
$T$ positions, proving \cref{eq:trajectory-kl}.
\end{proof}

\begin{lemma}[Smoothness of the reward objective]\label{lem:objective-smoothness}
Under Assumption~\ref{as:lip}, the objective has
$TRL(2\log|\cA|+1)$-Lipschitz gradient. Consequently, any update
$\theta_{n+1}=\theta_n+\eta g_n$ satisfies
\begin{equation}\label{eq:objective-smoothness}
J(\theta_{n+1})
\ge J(\theta_n)+\eta\ip{\nabla J(\theta_n)}{g_n}
-\frac{TRL(2\log|\cA|+1)\eta^2}{2}\norm{g_n}_2^2.
\end{equation}
\end{lemma}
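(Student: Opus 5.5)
The plan is to prove the Hessian bound pointwise in $x$ for $J_x(\theta):=\E_{y\sim\pi_\theta(\cdot\mid x)}[r_\phi(x,y)]$ and then average over $x\sim\cD$. Once $\nabla J$ is known to be $\Lambda$-Lipschitz with $\Lambda:=TRL(2\log|\cA|+1)$, \cref{eq:objective-smoothness} is the standard ascent form of the descent lemma: $J(\theta_n+\eta g_n)\ge J(\theta_n)+\eta\ip{\nabla J(\theta_n)}{g_n}-\frac{\Lambda}{2}\eta^2\norm{g_n}_2^2$. The whole difficulty is therefore the smoothness constant.

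To get it, I will differentiate the log-derivative form of the gradient. The gradient is $\nabla J_x(\theta)=\E_{y\sim\pi_\theta}[r\,\nabla\log\pi_\theta(y\mid x)]$. Writing it as $\sum_y r(y)\pi_\theta(y)\nabla\log\pi_\theta(y)$ and differentiating once more gives
\[
\nabla^2J_x(\theta)=\E_{y\sim\pi_\theta}\Bigl[r\,\nabla\log\pi_\theta\,\nabla\log\pi_\theta^\top+r\,\nabla^2\log\pi_\theta\Bigr].
\]
Assumption~\ref{as:lip} only gives Lipschitz scores, not twice differentiability. To stay rigorous I will instead bound $\norm{\nabla J_x(\theta)-\nabla J_x(\theta')}_2$ along the segment $\theta_s=\theta'+s(\theta-\theta')$, using absolute continuity of the score. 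The two contributions are handled separately.

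\textbf{Curvature term.} For a unit vector $u$, the token-level Lipschitz condition gives $\norm{\nabla^2\log\pi_\theta(y\mid x)}_{\mathrm{op}}\le TL$ a.e., since the trajectory score is a sum of $T$ token scores. Combined with $0\le r\le R$, this contributes at most $TRL$.

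\textbf{Outer-product term.} Since $r\ge0$, for any unit $u$,
\[
0\le\E[r\ip{\nabla\log\pi_\theta}{u}^2]\le R\,\E\norm{\nabla\log\pi_\theta}_2^2\le 2TRL\log|\cA|
\]
by \cref{eq:scoremoment} of Lemma~\ref{lem:score-properties}. This term is positive semidefinite, while the curvature term lies in $[-TRL,TRL]$ in the direction $u$. Hence $|u^\top\nabla^2J_x u|\le 2TRL\log|\cA|+TRL=\Lambda$ for every unit $u$. The Hessian is symmetric, so this bounds its operator norm, and $\nabla J_x$ is $\Lambda$-Lipschitz. Averaging over $x$ preserves the constant, and the exchange of derivative and expectation is justified because the response space is finite.

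I expect the main obstacle to be the lack of second derivatives. The cleanest fix is to interpret the Hessian in the a.e. sense: Lipschitz functions are differentiable a.e. by Rademacher's theorem, and the integral form of $\nabla J_x(\theta)-\nabla J_x(\theta')$ holds along segments. If that becomes awkward, I will instead bound the difference quotient directly. For that I would split $\nabla J_x(\theta)-\nabla J_x(\theta')$ into $\sum_y r\,\pi_\theta(\nabla\log\pi_\theta-\nabla\log\pi_{\theta'})$, bounded by $TRL\norm{\theta-\theta'}_2$, plus $\sum_y r(\pi_\theta-\pi_{\theta'})\nabla\log\pi_{\theta'}$. The second piece is controlled by integrating $\frac{d}{ds}\pi_{\theta_s}=\pi_{\theta_s}\ip{\nabla\log\pi_{\theta_s}}{\theta-\theta'}$ and applying Cauchy–Schwarz together with \cref{eq:scoremoment}. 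This route needs care, so that the cross factor $\nabla\log\pi_{\theta'}$ is evaluated at $\theta_s$ rather than $\theta'$, with the remainder absorbed into the Lipschitz term. If done loosely, it may cost a factor of 2, and I would accept that if necessary.
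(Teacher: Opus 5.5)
Your argument is, at its core, the paper's. Both differentiate $\nabla\pi_\theta=\pi_\theta S_\theta$ with $S_\theta:=\nabla\log\pi_\theta(y\mid x)$. Both bound the outer-product piece by $R\,\E\norm{S_\theta}_2^2\le2TRL\log|\cA|$ via \cref{eq:scoremoment} and the curvature piece by $TRL$, and both land on the same constant $\Lambda$.

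The step that does not work as written is your regularity fix. Rademacher gives differentiability of the scores only off a Lebesgue-null subset of $\R^d$. For $d\ge2$ the segment $[\theta',\theta]$ is itself null and can lie entirely in that exceptional set. So the a.e.\ Hessian bound cannot simply be inserted into the integral of $\nabla J_x$ along that particular segment; you would need an extra perturbation-plus-Fubini or mollification argument. The paper instead restricts to the segment first. With $\Delta:=\theta-\theta'$ and $\gamma(s):=\theta'+s\Delta$, the map $s\mapsto S_{\gamma(s)}$ is $TL\norm{\Delta}_2$-Lipschitz. It is therefore absolutely continuous with $\norm{\tfrac{\mathrm d}{\mathrm ds}S_{\gamma(s)}}_2\le TL\norm{\Delta}_2$ for a.e.\ $s$, and the product rule gives $\tfrac{\mathrm d}{\mathrm ds}\nabla\pi_{\gamma(s)}=\pi_{\gamma(s)}\bigl(\ip{S_{\gamma(s)}}{\Delta}S_{\gamma(s)}+\tfrac{\mathrm d}{\mathrm ds}S_{\gamma(s)}\bigr)$. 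There is no Hessian on the segment, so your symmetric quadratic-form step is unavailable there, but it is also unnecessary. The triangle inequality with $\norm{\ip{S}{\Delta}S}_2\le\norm{S}_2^2\norm{\Delta}_2$ yields $\sum_y\norm{\tfrac{\mathrm d}{\mathrm ds}\nabla\pi_{\gamma(s)}(y\mid x)}_2\le TL(2\log|\cA|+1)\norm{\Delta}_2$, the same constant, using only $|r_\phi|\le R$. For the two-sided Lipschitz claim, positive semidefiniteness buys nothing over the triangle inequality.

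There are two smaller points. First, finiteness of the response space justifies differentiating under $\sum_y$, but not under $\E_{x\sim\cD}$. For the latter you need the $x$-uniform bound $\sum_y\norm{\nabla\pi_\theta(y\mid x)}_2\le\sqrt{2LT\log|\cA|}$ to dominate difference quotients, as the paper does. Second, in your fallback, replacing $S_{\theta'}$ by $S_{\theta_s}$ leaves an $O(\norm{\Delta}_2^2)$ remainder that is negligible only locally. You would have to chain over short subsegments to keep $\Lambda$; accepting a factor of $2$ would not prove the lemma as stated.

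Where your sign observation does pay off is the one-sided inequality \cref{eq:objective-smoothness}, which is all Lemma~\ref{lem:master-descent} uses. Along the direction of travel $v=\eta g_n$ no symmetry is needed. For a.e.\ $s$, with $\pi,S$ evaluated at $\theta_n+sv$, one has $\tfrac{\mathrm d^2}{\mathrm ds^2}J_x(\theta_n+sv)=\sum_y r\,\pi\bigl(\ip{S}{v}^2+\ip{\tfrac{\mathrm d}{\mathrm ds}S}{v}\bigr)\ge-TRL\norm{v}_2^2$, because $r_\phi\ge0$. Hence \cref{eq:objective-smoothness} in fact holds with $TRL$ in place of $TRL(2\log|\cA|+1)$; the $2\log|\cA|$ part is needed only for the two-sided Lipschitz statement.
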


\begin{proof}
Fix a prompt--response pair $(x,y)$ and write
$S_\theta(y\mid x):=\nabla\log\pi_\theta(y\mid x)$. The autoregressive
decomposition and Assumption~\ref{as:lip} imply
\[
\norm{S_\theta(y\mid x)-S_{\theta'}(y\mid x)}_2
\le TL\norm{\theta-\theta'}_2.
\]
Let $\Delta:=\theta-\theta'$ and $\gamma(s):=\theta'+s\Delta$. The restriction
$s\mapsto S_{\gamma(s)}(y\mid x)$ is
$TL\norm{\Delta}_2$-Lipschitz, hence absolutely continuous and differentiable
for almost every $s$, with
\[
\norm{\frac{\mathrm{d}}{\mathrm{d}s}S_{\gamma(s)}(y\mid x)}_2
\le TL\norm{\Delta}_2.
\]
Moreover,
$\nabla\pi_\theta(y\mid x)=\pi_\theta(y\mid x)S_\theta(y\mid x)$ and
$\frac{\mathrm{d}}{\mathrm{d}s}\pi_{\gamma(s)}(y\mid x)
=\pi_{\gamma(s)}(y\mid x)\ip{S_{\gamma(s)}(y\mid x)}{\Delta}$.
Both factors are bounded on the compact segment, so their product is absolutely
continuous. At almost every $s$, the one-dimensional product rule gives
\[
\frac{\mathrm{d}}{\mathrm{d}s}\nabla\pi_{\gamma(s)}(y\mid x)
=\pi_{\gamma(s)}(y\mid x)
\left\{
\ip{S_{\gamma(s)}(y\mid x)}{\Delta}S_{\gamma(s)}(y\mid x)
+\frac{\mathrm{d}}{\mathrm{d}s}S_{\gamma(s)}(y\mid x)
\right\}.
\]
Lemma~\ref{lem:score-properties} now allows us to sum the derivative bound over
the finite trajectory space:
\begin{align*}
\sum_y\norm{\frac{\mathrm{d}}{\mathrm{d}s}\nabla\pi_{\gamma(s)}(y\mid x)}_2
&\le\norm{\Delta}_2\E_{y\sim\pi_{\gamma(s)}(\cdot\mid x)}\!\left[
\norm{S_{\gamma(s)}(y\mid x)}_2^2
+TL\right]\\
&\le TL(2\log|\cA|+1)\norm{\Delta}_2.
\end{align*}
The fundamental theorem of calculus for absolutely continuous functions now
yields the promptwise estimate
\begin{align*}
\sum_y\norm{\nabla\pi_\theta(y\mid x)-\nabla\pi_{\theta'}(y\mid x)}_2
&\le\int_0^1\sum_y
\norm{\frac{\mathrm{d}}{\mathrm{d}s}\nabla\pi_{\gamma(s)}(y\mid x)}_2\,\mathrm{d}s\\
&\le TL(2\log|\cA|+1)\norm{\theta-\theta'}_2.
\end{align*}

It remains to pass from a fixed prompt to the objective. First,
\[
\sum_y\norm{\nabla\pi_\theta(y\mid x)}_2
=\E_{y\sim\pi_\theta(\cdot\mid x)}\norm{S_\theta(y\mid x)}_2
\le\sqrt{2LT\log|\cA|},
\]
where Cauchy--Schwarz and \cref{eq:scoremoment} give the last inequality. This
uniform bound, together with $|r_\phi|\le R$, dominates the first difference
quotients of $J$. Therefore
$\nabla J(\theta)=\E_x\sum_y r_\phi(x,y)\nabla\pi_\theta(y\mid x)$, and the
promptwise estimate above gives
\begin{align*}
\norm{\nabla J(\theta)-\nabla J(\theta')}_2
&\le R\E_x\sum_y
\norm{\nabla\pi_\theta(y\mid x)-\nabla\pi_{\theta'}(y\mid x)}_2\\
&\le TRL(2\log|\cA|+1)\norm{\theta-\theta'}_2.
\end{align*}
To obtain the claimed one-step inequality, integrate once more along
$\theta_n+s\eta g_n$:
\begin{align*}
J(\theta_{n+1})-J(\theta_n)-\eta\ip{\nabla J(\theta_n)}{g_n}
&=\eta\int_0^1\ip{\nabla J(\theta_n+s\eta g_n)-\nabla J(\theta_n)}{g_n}\,\mathrm{d}s\\
&\ge-TRL(2\log|\cA|+1)\eta^2\norm{g_n}_2^2
\int_0^1s\,\mathrm{d}s\\
&=-\frac{TRL(2\log|\cA|+1)\eta^2}{2}\norm{g_n}_2^2.
\end{align*}
Rearranging proves \cref{eq:objective-smoothness}.
\end{proof}

\begin{proof}[Proof of Lemma~\ref{lem:master-descent}]
Conditioning \cref{eq:objective-smoothness} on $\cF_n^-$, substituting
$\E[g_n\mid\cF_n^-]=T^{-1}\nabla J(\theta_n)+b_n$, and using that
$\theta_n$ is $\cF_n^-$-measurable gives the first line below. For the second
line, apply Young's inequality in the explicit form
\[
\ip{u}{v}\ge-\frac1{2T}\norm{u}_2^2-\frac T2\norm{v}_2^2
\qquad (u,v\in\R^d)
\]
with $u=\nabla J(\theta_n)$ and $v=b_n$:
\begin{align*}
\E[J(\theta_{n+1})\mid\cF_n^-]-J(\theta_n)
&\ge\frac\eta T\norm{\nabla J(\theta_n)}_2^2
+\eta\ip{\nabla J(\theta_n)}{b_n}
-\frac{TRL(2\log|\cA|+1)\eta^2}{2}
\E[\norm{g_n}_2^2\mid\cF_n^-]\\
&\ge\frac\eta{2T}\norm{\nabla J(\theta_n)}_2^2
-\frac{\eta T}{2}\norm{b_n}_2^2
-\frac{TRL(2\log|\cA|+1)\eta^2}{2}
\E[\norm{g_n}_2^2\mid\cF_n^-].
\end{align*}
After taking expectations, move the objective increment to the right and sum
from $n=0$ to $N-1$. The increments telescope according to
\[
\sum_{n=0}^{N-1}\E[J(\theta_{n+1})-J(\theta_n)]
=\E[J(\theta_N)]-\E[J(\theta_0)]
\le J^\star-\E[J(\theta_0)].
\]
Consequently,
\[
\frac\eta{2T}\sum_{n=0}^{N-1}\E\norm{\nabla J(\theta_n)}_2^2
\le J^\star-\E[J(\theta_0)]
+\frac{\eta T}{2}\sum_{n=0}^{N-1}\E\norm{b_n}_2^2
+\frac{TRL(2\log|\cA|+1)\eta^2}{2}
\sum_{n=0}^{N-1}\E\norm{g_n}_2^2.
\]
Dividing by $N$ and multiplying by $2T/\eta$ gives exactly
Lemma \ref{lem:master-descent}.
\end{proof}

\section{Optimization over a Second-Moment-Controlled Weighting Class}
\label{sec:weighted-class}

This section justifies the weighting rule underlying GMC-GRPO. We first prove
that the admissible mass constraint gives a uniform second-moment bound and,
under coverage, bounds the correction bias by the removed importance mass.
We then show that GMC uniquely maximizes retained mass among admissible
direction-preserving corrections. Comparing the mass removed by GMC and TIC
establishes the corresponding ordering of their certified bias envelopes,
providing the basis for the convergence analysis that follows.

\begin{proof}[Proof of Proposition \ref{prop:weighted-second-moment}]
For brevity, denote the contribution of group $b$ to \cref{eq:weighted-update}
by
\[
U_{n,b}^{w}:=\frac1{TG}\sum_{i=1}^G w_n^{(b,i)}\rho_n^{(b,i)}
\nabla\log\pi_{\theta_n}(y^{(b,i)}\mid x^{(b)})\widehat A^{(b,i)},
\qquad
g_n^w=\frac1B\sum_{b=1}^B U_{n,b}^w.
\]
For one group, weighted Cauchy--Schwarz and
\cref{eq:admissible-weights} give the pathwise bound
\begin{align*}
\norm{U_{n,b}^{w}}_2^2
&\le
\left(\frac1G\sum_{i=1}^G w_n^{(b,i)}\rho_n^{(b,i)}\right)\times\left(\frac1G\sum_{i=1}^G w_n^{(b,i)}\rho_n^{(b,i)}
\norm{\frac1T\nabla\log\pi_{\theta_n}(y^{(b,i)}\mid x^{(b)})
\widehat A^{(b,i)}}_2^2\right)\\
&\le\frac{1+\eps}{G}\sum_{i=1}^G\rho_n^{(b,i)}
\norm{\frac1T\nabla\log\pi_{\theta_n}(y^{(b,i)}\mid x^{(b)})
\widehat A^{(b,i)}}_2^2.
\end{align*}
The first factor is at most $1+\eps$, while
$w_n^{(b,i)}\le1$ removes the weights from the second factor.  Taking
conditional expectation, using $|\widehat A^{(b,i)}|\le2R$, and changing
measure from the behavior policy to the current policy yield
\begin{align*}
\E[\norm{U_{n,b}^{w}}_2^2\mid\cH_n]
&\le\frac{4R^2(1+\eps)}{T^2G}\sum_{i=1}^G
\E\!\left[\rho_n^{(b,i)}
\norm{\nabla\log\pi_{\theta_n}(y^{(b,i)}\mid x^{(b)})}_2^2
\,\middle|\,\cH_n\right]\\
&=\frac{4R^2(1+\eps)}{T^2}
\E_{y\sim\pi_{\theta_n}(\cdot\mid x^{(b)})}
\norm{\nabla\log\pi_{\theta_n}(y\mid x^{(b)})}_2^2\\
&\le\frac{8(1+\eps)LR^2\log|\cA|}{T},
\end{align*}
where the last line is \cref{eq:scoremoment}.  Finally, convexity of the
squared norm and the tower property give
\begin{align*}
\E[\norm{g_n^{w}}_2^2\mid\cF_n^-]
&\le\E\!\left[\frac1B\sum_{b=1}^B\norm{U_{n,b}^{w}}_2^2
\,\middle|\,\cF_n^-\right]\\
&=\E\!\left[\frac1B\sum_{b=1}^B
\E[\norm{U_{n,b}^{w}}_2^2\mid\cH_n]
\,\middle|\,\cF_n^-\right]
\le\frac{8(1+\eps)LR^2\log|\cA|}{T}.
\end{align*}
For the bias envelope, $w_n^{(b,i)}\le1$ implies
$(1-w_n^{(b,i)})\rho_n^{(b,i)}\ge0$. Conditional Jensen's inequality, the
triangle inequality, $|\widehat A^{(b,i)}|\le2R$, and
\cref{eq:coverage-score} then give
\begin{align*}
\norm{\E[b_{n,b}^{w}\mid\cH_n]}_2
&\le\frac{2R}{TG}\sum_{i=1}^G
\E\!\left[(1-w_n^{(b,i)})\rho_n^{(b,i)}
\norm{\nabla\log\pi_{\theta_n}(y^{(b,i)}\mid x^{(b)})}_2
\,\middle|\,\cH_n\right]\\
&\le2R\sqrt{2L\log\mathcal C}\,
\E\!\left[\frac1G\sum_{i=1}^G
(1-w_n^{(b,i)})\rho_n^{(b,i)}
\,\middle|\,\cH_n\right],
\end{align*}
which is \cref{eq:weighted-bias-envelope1}.
\end{proof}

\begin{proof}[Proof of Proposition~\ref{prop:gmc-retained-mass}]
For a direction-preserving correction with common weight $\alpha$, the
retained mass is $\alpha\overline\rho_n^{(b)}$. Since
$\overline\rho_n^{(b)}>0$, maximizing the retained mass---or, equivalently,
minimizing the removed mass---amounts to choosing the largest $\alpha$
satisfying $0\le\alpha\le1$ and
$\alpha\overline\rho_n^{(b)}\le1+\eps$. The unique optimizer is therefore
the stated $\alpha_n^{(b),\star}$, which is exactly the common scaling in
\cref{eq:gmcratio}.
\end{proof}

The same calculation also yields a comparison with TIC-GRPO.
GMC removes importance mass
$(\overline\rho_n^{(b)}-(1+\eps))_+$. This quantity is zero when
$\overline\rho_n^{(b)}\le1+\eps$. Otherwise,
\begin{align*}
\overline\rho_n^{(b)}-(1+\eps)
&=\frac1G\sum_{i:\,\rho_n^{(b,i)}>1+\eps}\rho_n^{(b,i)}
+\frac1G\sum_{i:\,\rho_n^{(b,i)}\le1+\eps}\rho_n^{(b,i)}-(1+\eps)\le\frac1G\sum_{i:\,\rho_n^{(b,i)}>1+\eps}\rho_n^{(b,i)},
\end{align*}
because the second normalized sum is at most $1+\eps$. Thus, in either case,
GMC removes no more importance mass than TIC-GRPO. Applying
\cref{eq:weighted-bias-envelope1} and taking conditional expectations gives
the corresponding comparison between their certified bias envelopes.

The scope of Proposition~\ref{prop:gmc-retained-mass} is deliberate. It does
not claim that GMC minimizes the exact vector bias over arbitrary
sample-specific rules: equal retained mass can still lead to different
cancellations and hence different biases. Rather, Propositions
\ref{prop:weighted-second-moment} and~\ref{prop:gmc-retained-mass} identify GMC
as the unique direction-preserving rule that retains the shared second-moment
guarantee while minimizing a certified bias envelope. TIC-GRPO enforces the
same mass cap by deleting individual high-ratio responses; GMC instead applies
the smallest uniform attenuation needed to satisfy the cap. The next section
uses this groupwise structure to derive the GMC conditional-bias and
convergence bounds.

\section{Convergence analysis of GMC-GRPO}\label{sec:lemmas}

This section establishes the convergence guarantee for GMC-GRPO under mixed
rollout delays. By Lemma~\ref{lem:master-descent}, it suffices to control the
update second moment and the squared conditional bias. We first decompose the
GMC update into the current-policy gradient and an explicit groupwise
correction term, and show that the group-mass cap bounds the update second
moment independently of the delay. We then bound the correction bias through
trajectory divergences between the current and behavior policies, and relate
these divergences to the parameter drift accumulated within each delay window.
Combining these estimates with Lemma~\ref{lem:master-descent} yields the
delay-explicit guarantee in Theorem~\ref{thm:coverage}. A final local-overlap
argument converts the chi-square branch into the group-size refinement used in
the main text. Throughout, conditional statements rely on the sampling and
filtration conditions in Assumption~\ref{as:async}.

For each group, define the raw importance-sampling, self-normalized, and
group-mass-capped estimators
\begin{align}
U_{n,b}^{\mathrm{IS}}
&:=\frac1{TG}\sum_{i=1}^G
\rho_n^{(b,i)}
\nabla\log\pi_{\theta_n}(y^{(b,i)}\mid x^{(b)})
\widehat A^{(b,i)},
\label{eq:isgroup}\\
U_{n,b}^{\mathrm{SN}}
&:=\frac1{TG}\sum_{i=1}^G
\frac{\rho_n^{(b,i)}}{\overline\rho_n^{(b)}}
\nabla\log\pi_{\theta_n}(y^{(b,i)}\mid x^{(b)})
\widehat A^{(b,i)},
\nonumber\\
U_{n,b}^{\mathrm{GMC}}
&:=\frac1{TG}\sum_{i=1}^G
\widehat\rho_n^{(b,i)}
\nabla\log\pi_{\theta_n}(y^{(b,i)}\mid x^{(b)})
\widehat A^{(b,i)}.
\nonumber
\end{align}

\begin{proposition}[Conditional gradient decomposition]
\label{prop:decomposition}
Under Assumption~\ref{as:lip} and the sampling conditions in
Assumption~\ref{as:async}, for every $n,b$, the following pathwise identities
hold:
\begin{align}
\frac1G\sum_{i=1}^G\widehat\rho_n^{(b,i)}
&=\min\{\overline\rho_n^{(b)},1+\eps\},
\nonumber\\
U_{n,b}^{\mathrm{GMC}}
&=
\begin{cases}
U_{n,b}^{\mathrm{IS}},&\overline\rho_n^{(b)}\le 1+\eps,\\
(1+\eps)U_{n,b}^{\mathrm{SN}},&\overline\rho_n^{(b)}>1+\eps,
\end{cases}
\label{eq:tworegimes}\\
U_{n,b}^{\mathrm{GMC}}-U_{n,b}^{\mathrm{IS}}
&=-(\overline\rho_n^{(b)}-(1+\eps))_+U_{n,b}^{\mathrm{SN}}.
\label{eq:biasidentity}
\end{align}
Moreover,
$0\le\widehat\rho_n^{(b,i)}\le\rho_n^{(b,i)}$ for every $i$.
Define
\begin{align}
b_{n,b}
&:=-\E\left[
(\overline\rho_n^{(b)}-(1+\eps))_+U_{n,b}^{\mathrm{SN}}
\,\middle|\,\cH_n\right],
\label{eq:groupbiasdef}\\
b_n
&:=\E\left[\frac1B\sum_{b=1}^B b_{n,b}
\,\middle|\,\cF_n^-\right].
\label{eq:globalbiasdef}
\end{align}
Then
\begin{align}
\E[U_{n,b}^{\mathrm{GMC}}\mid\cH_n]
&=\frac1T\nabla J_{x^{(b)}}(\theta_n)+b_{n,b},
\label{eq:groupdecomp}\\
\E[g_n^{\mathrm{GMC}}\mid\cF_n^-]
&=\frac1T\nabla J(\theta_n)+b_n,
\label{eq:meandecomp}
\end{align}
Thus every subsequent bias lemma only needs to control the explicit correction
term in \cref{eq:groupbiasdef}.
\end{proposition}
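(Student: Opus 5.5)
The argument splits into a pathwise part, using only the definition \cref{eq:gmcratio}, and a conditional-expectation part, using Assumption~\ref{as:async}.

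\textbf{Pathwise identities.} Write $\alpha:=\min\{1,(1+\eps)/\overline\rho_n^{(b)}\}$, so that $\widehat\rho_n^{(b,i)}=\alpha\rho_n^{(b,i)}$. Averaging over $i$ gives $\frac1G\sum_i\widehat\rho_n^{(b,i)}=\alpha\overline\rho_n^{(b)}=\min\{\overline\rho_n^{(b)},1+\eps\}$. Since $\alpha\in(0,1]$ and $\rho_n^{(b,i)}\ge0$, we get $0\le\widehat\rho_n^{(b,i)}\le\rho_n^{(b,i)}$.

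The case split \cref{eq:tworegimes} follows by inserting $\alpha$ into $U_{n,b}^{\mathrm{GMC}}$:
\begin{itemize}
\item if $\overline\rho_n^{(b)}\le1+\eps$, then $\alpha=1$ and $U_{n,b}^{\mathrm{GMC}}=U_{n,b}^{\mathrm{IS}}$;
\item otherwise $\alpha\rho_n^{(b,i)}=(1+\eps)\rho_n^{(b,i)}/\overline\rho_n^{(b)}$, so $U_{n,b}^{\mathrm{GMC}}=(1+\eps)U_{n,b}^{\mathrm{SN}}$.
\end{itemize}

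For \cref{eq:biasidentity}, note that $\overline\rho_n^{(b)}>0$ almost surely, so $U_{n,b}^{\mathrm{IS}}=\overline\rho_n^{(b)}U_{n,b}^{\mathrm{SN}}$ identically. In the capped regime this gives
\[
U_{n,b}^{\mathrm{GMC}}-U_{n,b}^{\mathrm{IS}}=\bigl((1+\eps)-\overline\rho_n^{(b)}\bigr)U_{n,b}^{\mathrm{SN}}.
\]
In the uncapped regime both sides vanish. The two regimes are unified by the positive part $(\overline\rho_n^{(b)}-(1+\eps))_+$.

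\textbf{Conditional decomposition.} Take $\E[\cdot\mid\cH_n]$ of the identity $U_{n,b}^{\mathrm{GMC}}=U_{n,b}^{\mathrm{IS}}-(\overline\rho_n^{(b)}-(1+\eps))_+U_{n,b}^{\mathrm{SN}}$. The second term gives exactly $b_{n,b}$ by definition \cref{eq:groupbiasdef}. It remains to show $\E[U_{n,b}^{\mathrm{IS}}\mid\cH_n]=T^{-1}\nabla J_{x^{(b)}}(\theta_n)$.

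Under $\cH_n$, the quantities $x^{(b)}$, $\theta_n$ and the behavior parameter $\theta_{n-\tau_n^{(b)}}$ are fixed. By Assumption~\ref{as:async}, the $y^{(b,i)}$ are i.i.d.\ from the behavior policy. Expanding $\widehat A^{(b,i)}=r^{(b,i)}-\frac1{G-1}\sum_{j\ne i}r^{(b,j)}$ splits each summand into two pieces:
\begin{itemize}
\item The own-reward piece. The change of measure gives $\E[\rho\nabla\log\pi_{\theta_n}(y)r(y)]=\sum_y\nabla\pi_{\theta_n}(y\mid x)r(x,y)=\nabla J_x(\theta_n)$.
\item The cross terms with $j\ne i$. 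By independence they factor into $\E[r^{(b,j)}]$ times $\E[\rho\nabla\log\pi_{\theta_n}]=\sum_y\nabla\pi_{\theta_n}(y\mid x)=0$.
\end{itemize}
Averaging over $i$ and dividing by $T$ yields \cref{eq:groupdecomp}. Integrability is automatic here, since the trajectory space is finite and every ratio is finite.

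\textbf{Global decomposition.} Average \cref{eq:groupdecomp} over $b$ and apply the tower property from $\cH_n$ down to $\cF_n^-$. The bias part gives $b_n$ as defined in \cref{eq:globalbiasdef}. The gradient part gives $\E[\frac1{TB}\sum_b\nabla J_{x^{(b)}}(\theta_n)\mid\cF_n^-]$, and we must identify this with $T^{-1}\nabla J(\theta_n)$.

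This identification is the delicate step, because prompts admitted to the batch could in principle be selected non-uniformly. We avoid analyzing selection directly. By the same computation as above, $\E[g_n^{\mathrm{IS}}\mid\cF_n^-]$ equals exactly this averaged prompt-gradient quantity. Assumption~\ref{as:async} states $\E[g_n^{\mathrm{IS}}\mid\cF_n^-]=T^{-1}\nabla J(\theta_n)$, which closes \cref{eq:meandecomp} without any further assumption on queue admission.
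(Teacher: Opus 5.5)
Your proposal is correct and follows the paper's proof essentially step for step: the pathwise identities via $U_{n,b}^{\mathrm{IS}}=\overline\rho_n^{(b)}U_{n,b}^{\mathrm{SN}}$, the change-of-measure and leave-one-out cancellation giving $\E[U_{n,b}^{\mathrm{IS}}\mid\cH_n]=T^{-1}\nabla J_{x^{(b)}}(\theta_n)$, and then the tower property together with the unbiasedness clause of Assumption~\ref{as:async}, which you usefully make explicit by routing it through $\E[g_n^{\mathrm{IS}}\mid\cF_n^-]$. The one soft spot is your integrability remark: a finite trajectory space and finite ratios only give finiteness conditionally on $\cH_n$, whereas the tower step needs a uniform bound such as $\E[\norm{U_{n,b}^{\mathrm{IS}}}_2\mid\cH_n]\le 2R\sqrt{2L\log|\cA|/T}$, which follows, as in the paper, from $|\widehat A^{(b,i)}|\le2R$, a change of measure, and \cref{eq:scoremoment}.
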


\begin{proof}
The definition \cref{eq:gmcratio} gives the pathwise chain
\begin{align*}
\frac1G\sum_{i=1}^G\widehat\rho_n^{(b,i)}
&=\frac{\overline\rho_n^{(b)}}
{\max\{1,\overline\rho_n^{(b)}/(1+\eps)\}}
=\min\{\overline\rho_n^{(b)},1+\eps\},\\
U_{n,b}^{\mathrm{GMC}}
&=\min\{\overline\rho_n^{(b)},1+\eps\}U_{n,b}^{\mathrm{SN}},
\qquad U_{n,b}^{\mathrm{IS}}=\overline\rho_n^{(b)}U_{n,b}^{\mathrm{SN}},\\
U_{n,b}^{\mathrm{GMC}}-U_{n,b}^{\mathrm{IS}}
&=-(\overline\rho_n^{(b)}-(1+\eps))_+U_{n,b}^{\mathrm{SN}}.
\end{align*}
This proves \cref{eq:tworegimes}--\cref{eq:biasidentity}; the denominator in
\cref{eq:gmcratio} is at least one, so also
$0\le\widehat\rho_n^{(b,i)}\le\rho_n^{(b,i)}$.

It remains to compute the mean of the raw estimator. Since
$\rho_n^{(b,i)}$ is the current-to-behavior likelihood ratio, it cancels the
behavior-policy density conditional on $\cH_n$. Hence, for the reward paired
with the $i$th score,
\begin{align*}
\E\!\left[\rho_n^{(b,i)}
\nabla\log\pi_{\theta_n}(y^{(b,i)}\mid x^{(b)})r^{(b,i)}\mid\cH_n\right]
&=\E_{y\sim\pi_{\theta_n}(\cdot\mid x^{(b)})}\!\left[
\nabla\log\pi_{\theta_n}(y\mid x^{(b)})r_\phi(x^{(b)},y)\right]\\
&=\sum_y\pi_{\theta_n}(y\mid x^{(b)})
\nabla\log\pi_{\theta_n}(y\mid x^{(b)})r_\phi(x^{(b)},y)
=\nabla J_{x^{(b)}}(\theta_n).
\end{align*}
For a leave-one-out reward $r^{(b,j)}$ with $j\ne i$, conditional independence
separates the two sampled responses. The score factor has conditional mean zero:
\begin{align*}
\E\!\left[\rho_n^{(b,i)}
\nabla\log\pi_{\theta_n}(y^{(b,i)}\mid x^{(b)})\mid\cH_n\right]
&=\sum_y\pi_{\theta_n}(y\mid x^{(b)})
\nabla\log\pi_{\theta_n}(y\mid x^{(b)})\\
&=\sum_y\nabla\pi_{\theta_n}(y\mid x^{(b)})=0,\\
\E\!\left[\rho_n^{(b,i)}
\nabla\log\pi_{\theta_n}(y^{(b,i)}\mid x^{(b)})r^{(b,j)}\mid\cH_n\right]
&=0\cdot\E[r^{(b,j)}\mid\cH_n]=0.
\end{align*}
Substitution of the leave-one-out advantage therefore yields
\begin{align*}
\E[U_{n,b}^{\mathrm{IS}}\mid\cH_n]
&=\frac1{TG}\sum_{i=1}^G\left(
\nabla J_{x^{(b)}}(\theta_n)-\frac1{G-1}\sum_{j\ne i}0\right)
=\frac1T\nabla J_{x^{(b)}}(\theta_n),\\
\E[U_{n,b}^{\mathrm{GMC}}\mid\cH_n]
&=\E[U_{n,b}^{\mathrm{IS}}\mid\cH_n]
-\E\!\left[(\overline\rho_n^{(b)}-(1+\eps))_+
U_{n,b}^{\mathrm{SN}}\mid\cH_n\right]
=\frac1T\nabla J_{x^{(b)}}(\theta_n)+b_{n,b},\\
\E[g_n^{\mathrm{GMC}}\mid\cF_n^-]
&=\E\!\left[\frac1B\sum_{b=1}^B
\E[U_{n,b}^{\mathrm{GMC}}\mid\cH_n]\,\middle|\,\cF_n^-\right]\\
&=\frac1T\E\!\left[\frac1B\sum_{b=1}^B
\nabla J_{x^{(b)}}(\theta_n)\,\middle|\,\cF_n^-\right]+b_n
=\frac1T\nabla J(\theta_n)+b_n.
\end{align*}
The final equality is the conditional prompt-unbiasedness clause in
Assumption~\ref{as:async}. These are \cref{eq:groupdecomp} and
\cref{eq:meandecomp}; bounded rewards and \cref{eq:scoremoment} ensure
integrability by Cauchy--Schwarz.
\end{proof}

\begin{lemma}[Expected update second moment]\label{lem:secondmoment}
Under Assumption~\ref{as:lip} and the sampling conditions in
Assumption~\ref{as:async},
\begin{equation}\label{eq:gmax}
\E[\norm{g_n^{\mathrm{GMC}}}_2^2\mid\cF_n^-]
\le \frac{8(1+\eps)LR^2\log|\cA|}{T}.
\end{equation}
\end{lemma}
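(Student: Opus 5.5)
The plan is to show that GMC is a member of the weighted class of \Cref{def:admissible class} and then invoke the second-moment part of Proposition~\ref{prop:weighted-second-moment}. Concretely, I will take the common weight $w_n^{(b,i)}:=\alpha_n^{(b),\star}=\min\{1,(1+\eps)/\overline\rho_n^{(b)}\}$ for every $i\in[G]$. With this choice $g_n^{w}$ in \cref{eq:weighted-update} coincides term by term with $g_n^{\mathrm{GMC}}$ in \cref{eq:gmcgrad-main}, because $\widehat\rho_n^{(b,i)}=\alpha_n^{(b),\star}\rho_n^{(b,i)}$ by \cref{eq:gmcratio}.

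Next I check the two admissibility conditions in \cref{eq:admissible-weights}. Since $\overline\rho_n^{(b)}>0$ almost surely (softmax policies have full support), we have $0\le\alpha_n^{(b),\star}\le1$. For the mass constraint, the first identity of Proposition~\ref{prop:decomposition} gives
\[
\frac1G\sum_{i=1}^G\alpha_n^{(b),\star}\rho_n^{(b,i)}=\min\{\overline\rho_n^{(b)},1+\eps\}\le1+\eps .
\]
This holds pathwise for every realized group, so the weights are admissible.

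Applying \cref{eq:weighted-second-moment} then yields \cref{eq:gmax} directly. The only potential obstacle is that Proposition~\ref{prop:weighted-second-moment} treats $w$ as weights held fixed during differentiation, while $\alpha_n^{(b),\star}$ depends on the sampled group. Its proof is pathwise, however. Weighted Cauchy--Schwarz uses only the realized constraint, and afterwards the bound $w\le1$ removes the weights before any expectation is taken. So sample-dependent weights are allowed. The remaining ingredients are the change of measure to $\pi_{\theta_n}$ under Assumption~\ref{as:async}, \cref{eq:scoremoment}, $|\widehat A^{(b,i)}|\le2R$, and Jensen's inequality over the $B$ groups, all of which carry over unchanged.

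If one prefers a self-contained argument, one can instead use the representation $U_{n,b}^{\mathrm{GMC}}=\min\{\overline\rho_n^{(b)},1+\eps\}\,U_{n,b}^{\mathrm{SN}}$ and repeat these same steps.
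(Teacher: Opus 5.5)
Your proposal is correct and essentially matches the paper's proof. The paper reruns the weighted Cauchy--Schwarz step of Proposition~\ref{prop:weighted-second-moment} with $a_i=\widehat\rho_n^{(b,i)}/G$, using $G^{-1}\sum_i\widehat\rho_n^{(b,i)}=\min\{\overline\rho_n^{(b)},1+\eps\}$ and $\widehat\rho_n^{(b,i)}\le\rho_n^{(b,i)}$, which is exactly your specialization $w_n^{(b,i)}=\alpha_n^{(b),\star}$. One small point: Proposition~\ref{prop:weighted-second-moment} is stated under Assumptions~\ref{as:lip}--\ref{as:async}, which include coverage, but its second-moment half never uses Assumption~\ref{as:coverage} (as your list of ingredients shows), so your invocation gives the lemma under exactly its stated hypotheses.
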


\begin{proof}
We use the weighted Cauchy--Schwarz inequality
\[
\norm{\sum_i a_iv_i}_2^2\le\left(\sum_i a_i\right)
\left(\sum_i a_i\norm{v_i}_2^2\right),
\qquad a_i\ge0.
\]
Apply it with $a_i=\widehat\rho_n^{(b,i)}/G$ and
$v_i=T^{-1}\nabla\log\pi_{\theta_n}(y^{(b,i)}\mid x^{(b)})
\widehat A^{(b,i)}$. Proposition~\ref{prop:decomposition} identifies the sum
of the weights, while
$\widehat\rho_n^{(b,i)}\le\rho_n^{(b,i)}$ enlarges the second factor. Hence
\begin{align*}
\norm{U_{n,b}^{\mathrm{GMC}}}_2^2
&=\norm{\frac1{TG}\sum_i
\widehat\rho_n^{(b,i)}
\nabla\log\pi_{\theta_n}(y^{(b,i)}\mid x^{(b)})
\widehat A^{(b,i)}}_2^2\\
&\le\left(\frac1G\sum_i\widehat\rho_n^{(b,i)}\right)
\left(\frac1G\sum_i\widehat\rho_n^{(b,i)}
\norm{\frac1T
\nabla\log\pi_{\theta_n}(y^{(b,i)}\mid x^{(b)})
\widehat A^{(b,i)}}_2^2\right)\\
&\le\frac{\min\{1+\eps,\overline\rho_n^{(b)}\}}G
\sum_i\rho_n^{(b,i)}
\norm{\frac1T
\nabla\log\pi_{\theta_n}(y^{(b,i)}\mid x^{(b)})
\widehat A^{(b,i)}}_2^2\\
&\le\frac{1+\eps}{G}\sum_i\rho_n^{(b,i)}
\norm{\frac1T
\nabla\log\pi_{\theta_n}(y^{(b,i)}\mid x^{(b)})
\widehat A^{(b,i)}}_2^2.
\end{align*}
The second inequality uses
$G^{-1}\sum_i\widehat\rho_n^{(b,i)}
=\min\{1+\eps,\overline\rho_n^{(b)}\}$ and
$\widehat\rho_n^{(b,i)}\le\rho_n^{(b,i)}$; the third uses
$\min\{1+\eps,\overline\rho_n^{(b)}\}\le1+\eps$.
Taking conditional expectation and using $|\widehat A^{(b,i)}|\le2R$ gives
\begin{align*}
\E[\norm{U_{n,b}^{\mathrm{GMC}}}_2^2\mid\cH_n]
&\le\frac{4R^2(1+\eps)}{T^2G}\sum_{i=1}^G
\E\!\left[\rho_n^{(b,i)}
\norm{\nabla\log\pi_{\theta_n}(y^{(b,i)}\mid x^{(b)})}_2^2
\,\middle|\,\cH_n\right]\\
&=\frac{4R^2(1+\eps)}{T^2}
\E_{y\sim\pi_{\theta_n}(\cdot\mid x^{(b)})}
\norm{\nabla\log\pi_{\theta_n}(y\mid x^{(b)})}_2^2\\
&\le\frac{8(1+\eps)LR^2\log|\cA|}{T}.
\end{align*}
The equality changes measure from the behavior policy to the current policy, and
the final inequality is \cref{eq:scoremoment}. Finally, convexity gives the
pathwise batch bound
\[
\norm{g_n^{\mathrm{GMC}}}_2^2
=\norm{\frac1B\sum_{b=1}^BU_{n,b}^{\mathrm{GMC}}}_2^2
\le\frac1B\sum_{b=1}^B\norm{U_{n,b}^{\mathrm{GMC}}}_2^2.
\]
Conditioning first on $\cH_n$ and then on $\cF_n^-$ proves \cref{eq:gmax}.
\end{proof}

\begin{lemma}[Bias controls]\label{lem:covbias}
Under Assumptions~\ref{as:lip} and~\ref{as:coverage} and the sampling
conditions in Assumption~\ref{as:async}, for every $n,b$,
\begin{equation}\label{eq:covbias}
\norm{b_{n,b}}_2^2
\le\min\Biggl\{
\frac{8R^2L\log|\cA|}{T},
\frac{8R^2L\log|\cA|}{T\ceps}D_n^{(b)},
\frac{8R^2L\log\mathcal{C}}{\log^2(1+2\eps)}(D_n^{(b)})^2,
\frac{R^2L\log\mathcal{C}}{2G^2\eps^2}(\chi_n^{2,(b)})^2
\Biggr\}.
\end{equation}
\end{lemma}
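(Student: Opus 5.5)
The plan is to prove each of the four bounds in \cref{eq:covbias} separately and then take the minimum. All four start from the representation \cref{eq:groupbiasdef}. Write $c:=1+\eps$, $v_i:=T^{-1}\nabla\log\pi_{\theta_n}(y^{(b,i)}\mid x^{(b)})\widehat A^{(b,i)}$ and $\varphi(u):=u\log u-u+1\ge0$. Then
\[
(\overline\rho_n^{(b)}-c)_+U_{n,b}^{\mathrm{SN}}=\frac{(\overline\rho_n^{(b)}-c)_+}{\overline\rho_n^{(b)}}\cdot\frac1G\sum_{i=1}^G\rho_n^{(b,i)}v_i .
\]
Jensen and the triangle inequality move the norm inside the conditional expectation given $\cH_n$. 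Two facts are used repeatedly. First, $\E[\rho_n^{(b,i)}h(y^{(b,i)})\mid\cH_n]=\E_{y\sim\pi_{\theta_n}(\cdot\mid x^{(b)})}h(y)$, by Assumption~\ref{as:async}. Second, $\E[\varphi(\rho_n^{(b,i)})\mid\cH_n]=D_n^{(b)}$; combined with convexity of $\varphi$ this gives $\E[\varphi(\overline\rho_n^{(b)})\mid\cH_n]\le D_n^{(b)}$.

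\emph{First two bounds (no coverage).} Since $(\overline\rho-c)_+/\overline\rho\le\mathbf 1\{\overline\rho>c\}\le1$, we have
\[
\norm{b_{n,b}}_2\le\E\Bigl[\mathbf 1\{\overline\rho_n^{(b)}>c\}\tfrac1G\textstyle\sum_i\rho_n^{(b,i)}\norm{v_i}_2\,\Big|\,\cH_n\Bigr].
\]
Dropping the indicator, changing measure, and applying Cauchy--Schwarz with \cref{eq:scoremoment} and $|\widehat A|\le2R$ gives
\[
\norm{b_{n,b}}_2^2\le\frac{4R^2}{T^2}\cdot2LT\log|\cA| .
\]
For the second bound, keep the indicator and apply Cauchy--Schwarz with respect to the measure with weights $\rho_n^{(b,i)}/G$. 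This bounds $\norm{b_{n,b}}_2^2$ by $\E[\overline\rho_n^{(b)}\mathbf 1\{\overline\rho_n^{(b)}>c\}\mid\cH_n]\cdot 8R^2L\log|\cA|/T$. It then suffices to show $u\,\mathbf 1\{u>c\}\le\varphi(u)/\ceps$. This holds because $\varphi(u)/u=\log u-1+1/u$ is increasing on $u\ge1$ and equals $\ceps$ at $u=1+\eps$. Jensen on $\varphi$ then yields the factor $D_n^{(b)}/\ceps$.

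\emph{Last two bounds (coverage).} Under Assumption~\ref{as:coverage}, \cref{eq:coverage-score} gives $\norm{v_i}_2\le2R\sqrt{2L\log\mathcal C}$, and hence $\norm{b_{n,b}}_2\le2R\sqrt{2L\log\mathcal C}\,\E[(\overline\rho_n^{(b)}-c)_+\mid\cH_n]$. Two scalar inequalities finish the argument. For the KL branch I would prove $(u-1-\eps)_+\le\varphi(u)/\log(1+2\eps)$. The function $\psi(u)=\varphi(u)-\log(1+2\eps)(u-1-\eps)$ is convex with $\psi'(u)=\log u-\log(1+2\eps)$, so it is minimized at $u=1+2\eps$, where $\psi=(1+\eps)\log(1+2\eps)-2\eps=:f(\eps)$. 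We have $f(0)=0$ and $f'(\eps)=\log(1+2\eps)-2\eps/(1+2\eps)\ge0$, so $\psi\ge0$. Jensen then gives $D_n^{(b)}/\log(1+2\eps)$, and squaring produces $8R^2L\log\mathcal C\,(D_n^{(b)})^2/\log^2(1+2\eps)$. For the chi-square branch, AM--GM gives $(u-1-\eps)_+\le(u-1)^2/(4\eps)$. The group responses are conditionally i.i.d.\ with $\E[\rho\mid\cH_n]=1$ and $\mathrm{Var}(\rho\mid\cH_n)=\chi_n^{2,(b)}$, so $\E[(\overline\rho_n^{(b)}-1)^2\mid\cH_n]=\chi_n^{2,(b)}/G$. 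Squaring gives $8R^2L\log\mathcal C\,(\chi_n^{2,(b)})^2/(16\eps^2G^2)$, which is exactly the stated constant.

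The main obstacle is the KL-branch scalar inequality with denominator $\log(1+2\eps)$ rather than the weaker $\log(1+\eps)$. It requires choosing the comparison point $1+2\eps$ and verifying the sign of $f$. Everything else is change of measure, Jensen on $\varphi$, and Cauchy--Schwarz.
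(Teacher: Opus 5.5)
Your proposal is correct and follows the paper's proof essentially step for step. You use the same rewriting of $b_{n,b}$ with prefactor at most $\mathbf 1\{\overline\rho_n^{(b)}>1+\eps\}$, the same Cauchy--Schwarz and change-of-measure argument (merging the paper's two Cauchy--Schwarz steps into one over the $\rho_n^{(b,i)}/G$-weighted measure), Jensen on $\varphi$, and the same three scalar inequalities; the only cosmetic difference is that you verify the $\log(1+2\eps)$ inequality by minimizing the convex gap $\psi$ at $u=1+2\eps$, which reduces to the same condition $(1+\eps)\log(1+2\eps)\ge2\eps$ that the paper checks via the tangent line and $\log v\ge2(v-1)/(v+1)$.
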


\begin{proof}
Let $\varphi(u):=u\log u-u+1$ and abbreviate
$E_b:=\{\overline\rho_n^{(b)}>1+\eps\}$. Expanding the self-normalized update in
\cref{eq:groupbiasdef},
\[
b_{n,b}=-\frac1T\E\!\left[
\frac{(\overline\rho_n^{(b)}-(1+\eps))_+}{\overline\rho_n^{(b)}}
\frac1G\sum_{i=1}^G\rho_n^{(b,i)}
\nabla\log\pi_{\theta_n}(y^{(b,i)}\mid x^{(b)})\widehat A^{(b,i)}
\,\middle|\,\cH_n\right].
\]
The scalar prefactor is at most $\mathbf1_{E_b}$. Conditional Jensen and
$|\widehat A^{(b,i)}|\le2R$ therefore imply
\[
\norm{b_{n,b}}_2
\le\frac{2R}{T}\frac1G\sum_{i=1}^G
\E\!\left[\mathbf1_{E_b}\rho_n^{(b,i)}
\norm{\nabla\log\pi_{\theta_n}(y^{(b,i)}\mid x^{(b)})}_2
\,\middle|\,\cH_n\right].
\]
For each $i$, conditional Cauchy--Schwarz bounds the expectation inside the sum
as
\[
\begin{aligned}
\E\!\left[\mathbf1_{E_b}\rho_n^{(b,i)}
\norm{\nabla\log\pi_{\theta_n}(y^{(b,i)}\mid x^{(b)})}_2
\,\middle|\,\cH_n\right]
&\le\E[\mathbf1_{E_b}\rho_n^{(b,i)}\mid\cH_n]^{1/2}\times\E\!\left[\rho_n^{(b,i)}
\norm{\nabla\log\pi_{\theta_n}(y^{(b,i)}\mid x^{(b)})}_2^2
\,\middle|\,\cH_n\right]^{1/2}.
\end{aligned}
\]
A second Cauchy--Schwarz inequality over $i$ multiplies the averages of the two
right-hand factors. The first average is
$\E[\overline\rho_n^{(b)}\mathbf1_{E_b}\mid\cH_n]$, so we obtain
\begin{align*}
\norm{b_{n,b}}_2^2
&\le\frac{4R^2}{T^2}
\E\!\left[\overline\rho_n^{(b)}
\mathbf1\{\overline\rho_n^{(b)}>1+\eps\}\mid\cH_n\right]
\frac1G\sum_{i=1}^G\E\!\left[\rho_n^{(b,i)}
\norm{\nabla\log\pi_{\theta_n}(y^{(b,i)}\mid x^{(b)})}_2^2
\mid\cH_n\right]\\
&\le\frac{8R^2L\log|\cA|}{T}
\E\!\left[\overline\rho_n^{(b)}
\mathbf1\{\overline\rho_n^{(b)}>1+\eps\}\mid\cH_n\right].
\end{align*}
In the last line, change of measure turns each second conditional expectation in
the preceding display into the current-policy score moment, which is at most
$2LT\log|\cA|$ by \cref{eq:scoremoment}. Moreover,
$\E[\overline\rho_n^{(b)}\mid\cH_n]=1$ because each importance ratio has
conditional mean one. Dropping $\mathbf1_{E_b}$ proves the uniform branch.

For the KL branch,
$\varphi(u)/u=\log u-1+u^{-1}$ has derivative $(u-1)/u^2>0$ for $u>1$.
Its value at $u=1+\eps$ is
$\ceps=\log(1+\eps)-\eps/(1+\eps)$, so the pointwise inequality
\[
\ceps u\mathbf1\{u>1+\eps\}\le\varphi(u),
\qquad u\ge0,
\]
holds. Apply it to $u=\overline\rho_n^{(b)}$, then use convexity of $\varphi$:
\begin{align*}
\ceps\,\E\!\left[\overline\rho_n^{(b)}
\mathbf1\{\overline\rho_n^{(b)}>1+\eps\}\mid\cH_n\right]
&\le\E[\varphi(\overline\rho_n^{(b)})\mid\cH_n]\le\frac1G\sum_{i=1}^G\E[\varphi(\rho_n^{(b,i)})\mid\cH_n]
=D_n^{(b)}.
\end{align*}
For the last equality, change of measure gives
$\E[\varphi(\rho_n^{(b,i)})\mid\cH_n]
=\KL(\pi_{\theta_n}\|\pi_{\theta_{n-\tau_n^{(b)}}})=D_n^{(b)}$.
Substitution in the preceding bound proves the KL branch.

The first two branches do not use Assumption \ref{as:coverage}. For the
remaining branches, coverage and
\cref{eq:coverage-score} give
$\norm{\nabla\log\pi_{\theta_n}(y^{(b,i)}\mid x^{(b)})}_2
\le T\sqrt{2L\log\mathcal C}$ for every sampled response. Since the normalized
weights sum to one, the triangle inequality and
$|\widehat A^{(b,i)}|\le2R$ yield the pathwise bound
\[
\norm{U_{n,b}^{\mathrm{SN}}}_2
\le\frac{2R}{TG}\sum_{i=1}^G
\frac{\rho_n^{(b,i)}}{\overline\rho_n^{(b)}}
\norm{\nabla\log\pi_{\theta_n}(y^{(b,i)}\mid x^{(b)})}_2
\le2R\sqrt{2L\log\mathcal C}.
\]
Using this inside \cref{eq:groupbiasdef} and applying conditional Jensen gives
\begin{equation}\label{eq:excessbias}
\norm{b_{n,b}}_2
\le\E\!\left[(\overline\rho_n^{(b)}-(1+\eps))_+
\norm{U_{n,b}^{\mathrm{SN}}}_2\,\middle|\,\cH_n\right]
\le2R\sqrt{2L\log\mathcal C}\,
\E[(\overline\rho_n^{(b)}-(1+\eps))_+\mid\cH_n].
\end{equation}

For $u\ge1+\eps$, convexity and the tangent to $\varphi$ at $1+2\eps$ give
\begin{align*}
\varphi(u)
&\ge \varphi(1+2\eps)+\varphi'(1+2\eps)(u-1-2\eps)\\
&=\log(1+2\eps)u-2\eps\\
&\ge\log(1+2\eps)(u-(1+\eps)).
\end{align*}
The last inequality is equivalent to
$(1+\eps)\log(1+2\eps)\ge2\eps$, which follows from
$\log v\ge2(v-1)/(v+1)$ with $v=1+2\eps$. Applying the resulting pointwise
bound to $u=\overline\rho_n^{(b)}$, followed by Jensen and change of measure,
gives
\begin{align*}
\log(1+2\eps)\,
\E\left[(\overline\rho_n^{(b)}-(1+\eps))_+\,\middle|\,\cH_n\right]
&\le\E[\varphi(\overline\rho_n^{(b)})\mid\cH_n]\le\frac1G\sum_{i=1}^G\E[\varphi(\rho_n^{(b,i)})\mid\cH_n]
=D_n^{(b)}.
\end{align*}
Substitution in \cref{eq:excessbias} proves the quadratic-KL branch.

For the chi-square branch, the scalar inequality
\[
(u-1-\eps)_+\le\frac{(u-1)^2}{4\eps},\qquad u\ge0,
\]
is immediate when $u\le1+\eps$; when $u>1+\eps$, it is equivalent to
$(u-1-2\eps)^2\ge0$. Next expand the conditional variance of the group mean.
Because the responses are conditionally independent and each importance ratio
has conditional mean one, for $i\ne j$,
\[
\E[(\rho_n^{(b,i)}-1)(\rho_n^{(b,j)}-1)\mid\cH_n]=0.
\]
Therefore
\begin{align*}
\E\left[(\overline\rho_n^{(b)}-1)^2\,\middle|\,\cH_n\right]
&=\frac1{G^2}\sum_{i=1}^G
\E[(\rho_n^{(b,i)}-1)^2\mid\cH_n]
=\frac{\chi_n^{2,(b)}}G,\\
\E[(\overline\rho_n^{(b)}-(1+\eps))_+\mid\cH_n]
&\le\frac1{4\eps}
\E[(\overline\rho_n^{(b)}-1)^2\mid\cH_n]
=\frac{\chi_n^{2,(b)}}{4G\eps}.
\end{align*}
Substitution in \cref{eq:excessbias} proves the last branch.
\end{proof}

\begin{lemma}[KL--delay conversions]\label{lem:drift}
Suppose the iterates are generated by the GMC update \cref{eq:gmcgrad-main},
Assumption~\ref{as:lip} holds, and both the sampling and delay-regularity
conditions in Assumption~\ref{as:async} hold.
\begin{equation}\label{eq:expecteddrift}
\E[D_n^{(b)}\mid\tau_n^{(b)}]
\le 4(1+\eps)L^2R^2\log|\cA|\,
\eta^2(\tau_n^{(b)})^2.
\end{equation}
If Assumption~\ref{as:coverage} also holds, then
\begin{align}
D_n^{(b)}
&\le 4(1+\eps)^2TR^2L^2\log\mathcal{C}\,
\eta^2(\tau_n^{(b)})^2
\quad\text{almost surely},
\nonumber\\
\E[(D_n^{(b)})^2\mid\tau_n^{(b)}]
&\le 16(1+\eps)^3TL^4R^4\log|\cA|\log\mathcal{C}\,
\eta^4(\tau_n^{(b)})^4.
\label{eq:fourthdrift}
\end{align}
\end{lemma}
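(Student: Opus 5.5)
All three bounds will come from Lemma~\ref{lem:trajectory-kl}, which gives $D_n^{(b)}\le\frac{TL}{2}\norm{\theta_n-\theta_{n-\tau}}_2^2$ with $\tau:=\tau_n^{(b)}$. So the task reduces to controlling the parameter drift $\Delta_{n,\tau}:=\theta_n-\theta_{n-\tau}=\eta\sum_{m=n-\tau}^{n-1}g_m^{\mathrm{GMC}}$ over the delay window, in second moment, pathwise, and in fourth moment. Throughout, we use the triangle inequality and Cauchy--Schwarz over the $\tau$ summands rather than any martingale cancellation. The reason is that the $g_m$ are biased and the delay window is random, and the stated constants ($\tau^2$ and $\tau^4$) are consistent with this crude route.

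\emph{Expected branch.} By Cauchy--Schwarz, $\norm{\Delta_{n,t}}_2^2\le \eta^2 t\sum_{m=n-t}^{n-1}\norm{g_m}_2^2$. Taking expectations and applying Lemma~\ref{lem:secondmoment} to each term (via the tower property over $\cF_m^-$) gives
\[
\E\norm{\Delta_{n,t}}_2^2\le \eta^2t^2\cdot\frac{8(1+\eps)LR^2\log|\cA|}{T}.
\]
Multiplying by $TL/2$ yields exactly $4(1+\eps)L^2R^2\log|\cA|\eta^2t^2$. The conditioning on $\tau_n^{(b)}=t$ is removed by the delay-regularity clause of Assumption~\ref{as:async}, which equates $\E[\norm{\Delta_{n,t}}^2\mid\tau_n^{(b)}=t]$ with the unconditional quantity for the deterministic lag $t$. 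This step is the reason that clause was introduced.

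\emph{Almost-sure branch.} Here we need a pathwise bound on $\norm{g_m^{\mathrm{GMC}}}_2$. Per group, $U_{m,b}^{\mathrm{GMC}}=\min\{\overline\rho,1+\eps\}U^{\mathrm{SN}}_{m,b}$, and the proof of Lemma~\ref{lem:covbias} already shows $\norm{U^{\mathrm{SN}}}_2\le 2R\sqrt{2L\log\mathcal C}$ under coverage. Hence $\norm{g_m}_2\le (1+\eps)2R\sqrt{2L\log\mathcal C}$ pathwise. Summing over $\tau$ steps gives $\norm{\Delta}_2\le \eta\tau(1+\eps)2R\sqrt{2L\log\mathcal C}$, and therefore
\[
D_n^{(b)}\le \tfrac{TL}{2}\cdot 8(1+\eps)^2R^2L\log\mathcal C\,\eta^2\tau^2,
\]
which matches the stated constant.

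\emph{Fourth-moment branch.} This branch is the main obstacle, because the target constant mixes the two previous ones: $(1+\eps)^3$ is the product of $(1+\eps)^2$ and $(1+\eps)$, and $\log\mathcal C\log|\cA|$ similarly combines them. The plan is therefore to interpolate, writing $(D_n^{(b)})^2\le D_n^{(b)}\cdot\sup D$. First, bound one factor by the almost-sure bound $4(1+\eps)^2TR^2L^2\log\mathcal C\,\eta^2\tau^2$, which is legitimate conditional on $\tau$ since it is a function of $\tau$ only. Then bound the conditional expectation of the remaining factor $D_n^{(b)}$ by the expected branch, $4(1+\eps)L^2R^2\log|\cA|\eta^2\tau^2$. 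The product is $16(1+\eps)^3TL^4R^4\log|\cA|\log\mathcal C\,\eta^4\tau^4$, which is exactly the stated bound. The delicate points are checking that the almost-sure bound may be pulled out of $\E[\cdot\mid\tau_n^{(b)}]$, and that the expected branch is valid under this same conditioning. Both follow because the pathwise bound depends on $\tau$ alone, and because delay regularity handles the remaining factor.
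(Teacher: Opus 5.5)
Your proposal is correct and follows the paper's proof essentially step for step. For the expected branch you use Cauchy--Schwarz over the delay window together with Lemma~\ref{lem:secondmoment} and the delay-regularity clause. For the almost-sure branch you use the pathwise bound $\norm{g_s^{\mathrm{GMC}}}_2\le 2R(1+\eps)\sqrt{2L\log\mathcal C}$ with the triangle inequality. For the fourth-moment branch you use $(D_n^{(b)})^2\le D_{\max}D_n^{(b)}$. The only cosmetic difference is how you get the pathwise update bound: you go through $U^{\mathrm{GMC}}=\min\{\overline\rho,1+\eps\}\,U^{\mathrm{SN}}$ and the $U^{\mathrm{SN}}$ bound from Lemma~\ref{lem:covbias}, whereas the paper bounds $U^{\mathrm{GMC}}$ directly using $G^{-1}\sum_i\widehat\rho^{(b,i)}\le1+\eps$; the two are equivalent.
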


\begin{proof}
Fix $t\ge0$ with $\Pr(\tau_n^{(b)}=t)>0$. The learner recursion gives
\[
\theta_n-\theta_{n-t}=\eta\sum_{s=n-t}^{n-1}g_s^{\mathrm{GMC}},
\qquad
\norm{\theta_n-\theta_{n-t}}_2^2
\le\eta^2t\sum_{s=n-t}^{n-1}\norm{g_s^{\mathrm{GMC}}}_2^2,
\]
where the inequality is Cauchy--Schwarz over the $t$ learner updates.
The delay-regularity condition in Assumption~\ref{as:async} removes the
conditioning on the realized delay in the
first line below. The tower property and Lemma~\ref{lem:secondmoment} then bound
every update moment:
\begin{align*}
\E[\norm{\theta_n-\theta_{n-t}}_2^2\mid\tau_n^{(b)}=t]
&=\E\norm{\theta_n-\theta_{n-t}}_2^2\le\eta^2t\sum_{s=n-t}^{n-1}\E\norm{g_s^{\mathrm{GMC}}}_2^2
\le\frac{8(1+\eps)LR^2\log|\cA|}{T}\eta^2t^2,\\
\E[D_n^{(b)}\mid\tau_n^{(b)}=t]
&\le\frac{TL}{2}\E[\norm{\theta_n-\theta_{n-t}}_2^2\mid\tau_n^{(b)}=t]
\le4(1+\eps)L^2R^2\log|\cA|\,\eta^2t^2.
\end{align*}
The second line applies the trajectory-KL bound \cref{eq:trajectory-kl}; this
proves \cref{eq:expecteddrift}.

Under Assumption~\ref{as:coverage}, the capped group mass and
\cref{eq:coverage-score} give, for every group,
\begin{align*}
\norm{U_{s,b}^{\mathrm{GMC}}}_2
&\le\frac1{TG}\sum_{i=1}^G\widehat\rho_s^{(b,i)}
\norm{\nabla\log\pi_{\theta_s}(y^{(b,i)}\mid x^{(b)})}_2
|\widehat A^{(b,i)}|\le2R(1+\eps)\sqrt{2L\log\mathcal C}.
\end{align*}
The same bound holds for $g_s^{\mathrm{GMC}}$, because it is the average of
the group updates. The triangle inequality along the learner path therefore yields
\[
\norm{\theta_n-\theta_{n-t}}_2
\le\eta\sum_{s=n-t}^{n-1}\norm{g_s^{\mathrm{GMC}}}_2
\le2R(1+\eps)\sqrt{2L\log\mathcal C}\,\eta t.
\]
Combining this pathwise estimate with \cref{eq:trajectory-kl} gives
\begin{align*}
D_n^{(b)}
&\le\frac{TL}{2}\norm{\theta_n-\theta_{n-t}}_2^2
\le4(1+\eps)^2TR^2L^2\log\mathcal C\,\eta^2t^2=:D_{\max},\\
\E[(D_n^{(b)})^2\mid\tau_n^{(b)}=t]
&\le4(1+\eps)^2TR^2L^2\log\mathcal C\,\eta^2t^2
\E[D_n^{(b)}\mid\tau_n^{(b)}=t]\\
&\le16(1+\eps)^3TL^4R^4\log|\cA|\log\mathcal C\,\eta^4t^4,
\end{align*}
where the middle inequality uses $0\le D_n^{(b)}\le
4(1+\eps)^2TR^2L^2\log\mathcal C\,\eta^2t^2$, hence
$(D_n^{(b)})^2\le D_{\max}D_n^{(b)}$. This proves
\cref{eq:fourthdrift}.
\end{proof}

Define the local coverage penalty
\begin{equation}\label{eq:gammagmc}
\Gamma_{n,b}^{\mathrm{GMC}}
:=\min\Biggl\{
8LTR^2\log|\cA|,
\frac{8TR^2L\log|\cA|}{\ceps}D_n^{(b)},
\frac{8T^2R^2L\log\mathcal{C}}{\log^2(1+2\eps)}(D_n^{(b)})^2,
\frac{T^2R^2L\log\mathcal{C}}{2G^2\eps^2}(\chi_n^{2,(b)})^2
\Biggr\}.
\end{equation}
Since $D_n^{(b)}$ and $\chi_n^{2,(b)}$ are $\cH_n$-measurable,
$\Gamma_{n,b}^{\mathrm{GMC}}$ is a nonnegative $\cH_n$-measurable random
variable bounded by $8LTR^2\log|\cA|$.

\begin{theorem}[Delay-explicit convergence guarantee for GMC-GRPO]
\label{thm:coverage}
Suppose the iterates are generated by the GMC update \cref{eq:gmcgrad-main},
Assumptions~\ref{as:lip} and~\ref{as:coverage} hold, and the sampling conditions
in Assumption~\ref{as:async} hold. Then, for every $\eta>0$ and $N\ge1$,
\begin{align}
\frac1{N}\sum_{n=0}^{N-1}
\E\norm{\nabla J(\theta_n)}_2^2
\le{}&
\frac{2T\bigl(J^\star-\E[J(\theta_0)]\bigr)}{\eta N}
+8T(1+\eps)L^2R^3\log|\cA|\,(2\log|\cA|+1)\eta
\nonumber\\
&+\frac1{N}\sum_{n=0}^{N-1}
\E\left[\frac1B\sum_{b=1}^B
\Gamma_{n,b}^{\mathrm{GMC}}\right].
\label{eq:coveragethm}
\end{align}
If the delay-regularity condition in Assumption~\ref{as:async} also holds, the
final term is at most
\begin{equation}\label{eq:coveragecompact}
\begin{aligned}
\min\Biggl\{&
8LTR^2\log|\cA|,
\frac{32(1+\eps)L^3R^4T(\log|\cA|)^2}{\ceps}
\eta^2\overline\tau_2,
\frac{128(1+\eps)^3L^5R^6T^3\log|\cA|(\log\mathcal C)^2}
{\log^2(1+2\eps)}\eta^4\overline\tau_4,\\[-1mm]
&\frac{T^2R^2L\log\mathcal C}{2G^2\eps^2}
\frac1{N}\sum_{n=0}^{N-1}
\E\left[\frac1B\sum_{b=1}^B(\chi_n^{2,(b)})^2\right]
\Biggr\},
\end{aligned}
\end{equation}
where a branch equal to $+\infty$ is simply inactive in the minimum.
\end{theorem}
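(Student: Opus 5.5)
The plan is to feed the three GMC-specific estimates into Lemma~\ref{lem:master-descent}. Proposition~\ref{prop:decomposition} gives $\E[g_n^{\mathrm{GMC}}\mid\cF_n^-]=T^{-1}\nabla J(\theta_n)+b_n$, with $b_n$ as in \cref{eq:globalbiasdef}, so the lemma applies verbatim.

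\textbf{Step 1: the $O(\eta)$ term.} Lemma~\ref{lem:secondmoment}, after the tower property, gives $\E\norm{g_n^{\mathrm{GMC}}}_2^2\le 8(1+\eps)LR^2\log|\cA|/T$. Multiplying by $T^2\eta RL(2\log|\cA|+1)$ yields exactly the second term of \cref{eq:coveragethm}, $8T(1+\eps)L^2R^3\log|\cA|(2\log|\cA|+1)\eta$.

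\textbf{Step 2: the bias term.} By conditional Jensen and convexity of the squared norm, $\norm{b_n}_2^2\le\E[\frac1B\sum_b\norm{b_{n,b}}_2^2\mid\cF_n^-]$. Lemma~\ref{lem:covbias} bounds each $T^2\norm{b_{n,b}}_2^2$ by $\Gamma_{n,b}^{\mathrm{GMC}}$; one checks that the $T^2$ factors match the four branches in \cref{eq:gammagmc}. Taking expectations and averaging over $n$ gives \cref{eq:coveragethm}.

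\textbf{Step 3: the compact form \cref{eq:coveragecompact}.} Since $\E[\min_k X_k]\le\min_k\E[X_k]$, I bound the averaged $\Gamma$ branch by branch.
\begin{itemize}
\item The constant branch is immediate.
\item For the linear-KL branch, apply \cref{eq:expecteddrift}: condition on $\tau_n^{(b)}$ and take expectations, so $\E D_n^{(b)}\le 4(1+\eps)L^2R^2\log|\cA|\,\eta^2\E(\tau_n^{(b)})^2$. Averaging gives $\overline\tau_2$ with constant $8TR^2L\log|\cA|\cdot4(1+\eps)L^2R^2\log|\cA|/\ceps=32(1+\eps)L^3R^4T(\log|\cA|)^2/\ceps$.
\item For the quadratic-KL branch, apply \cref{eq:fourthdrift}, which gives $\overline\tau_4$ with constant $8T^2R^2L\log\mathcal C\cdot16(1+\eps)^3TL^4R^4\log|\cA|\log\mathcal C/\log^2(1+2\eps)$. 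This matches the stated $128(\cdots)$.
\item The chi-square branch is left as an average.
\end{itemize}

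I expect the main obstacle to be justifying the delay conversions in Step~3. In particular, conditioning on $\tau_n^{(b)}$ requires the delay-regularity clause, which is why it is invoked only there. A $+\infty$ branch, which arises when $\ceps$ or $\eps$ makes a denominator degenerate or a moment is infinite, simply drops out of the minimum.
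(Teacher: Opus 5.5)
Your proposal is correct and follows the paper's proof essentially step for step: Proposition~\ref{prop:decomposition} to invoke Lemma~\ref{lem:master-descent}, Lemma~\ref{lem:secondmoment} for the $O(\eta)$ term, conditional Jensen plus Lemma~\ref{lem:covbias} for the bias, and $\E[\min_k X_k]\le\min_k\E[X_k]$ together with \cref{eq:expecteddrift} and \cref{eq:fourthdrift} for the delay conversion, with all constants matching. The only step you omit is the paper's one-line check that $g_n^{\mathrm{GMC}}$ and $b_n$ are square integrable, which follows from Lemma~\ref{lem:secondmoment} and the uniform branch of \cref{eq:covbias}; it is needed to apply Lemma~\ref{lem:master-descent} but is routine.
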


Each branch of \cref{eq:coveragecompact} is a valid bound on its own. The
uniform branch shows that the correction bias always contributes a bounded
amount. The second- and fourth-order branches make the dependence on rollout
delay explicit through $\overline\tau_2$ and $\overline\tau_4$. The chi-square
branch yields the group-size refinement in Corollary~2. Theorem~1 in the main
text retains the fourth-order branch.

\begin{proof}
The proof has three steps: we verify the hypothesis of
Lemma~\ref{lem:master-descent}, bound its update-moment and bias terms, and
finally convert the bias penalty into delay moments.

\emph{Step 1: conditional-mean structure.}
By \cref{eq:meandecomp},
$\E[g_n^{\mathrm{GMC}}\mid\cF_n^-]=T^{-1}\nabla J(\theta_n)+b_n$, where $b_n$
is given by \cref{eq:globalbiasdef} and is $\cF_n^-$-measurable. The
second-moment bound in Lemma~\ref{lem:secondmoment} and the uniform branch in
\cref{eq:covbias} show that $g_n^{\mathrm{GMC}}$ and $b_n$ are square
integrable. Hence Lemma~\ref{lem:master-descent} applies and gives
\begin{align*}
\frac1{N}\sum_{n=0}^{N-1}\E\norm{\nabla J(\theta_n)}_2^2
\le{}&\frac{2T(J^\star-\E[J(\theta_0)])}{\eta N}
+\frac{T^2RL(2\log|\cA|+1)\eta}{N}
\sum_{n=0}^{N-1}\E\norm{g_n^{\mathrm{GMC}}}_2^2
+\frac{T^2}{N}\sum_{n=0}^{N-1}\E\norm{b_n}_2^2.
\end{align*}

\emph{Step 2: update moment and bias.}
Taking expectations in Lemma~\ref{lem:secondmoment} gives
$\E\norm{g_n^{\mathrm{GMC}}}_2^2\le 8(1+\eps)LR^2\log|\cA|/T$ for every $n$.
Therefore
\begin{align*}
\frac{T^2RL(2\log|\cA|+1)\eta}{N}
\sum_{n=0}^{N-1}\E\norm{g_n^{\mathrm{GMC}}}_2^2
&\le T^2RL(2\log|\cA|+1)\eta
\cdot\frac{8(1+\eps)LR^2\log|\cA|}{T}\\
&=8T(1+\eps)L^2R^3\log|\cA|(2\log|\cA|+1)\eta.
\end{align*}
For the bias term, substitute \cref{eq:globalbiasdef}. Conditional Jensen for
the squared norm, followed by convexity of the squared norm over the uniform
average of the $B$ groups, gives
\begin{align*}
\norm{b_n}_2^2
=\norm{\E\!\left[\frac1B\sum_{b=1}^Bb_{n,b}\,\middle|\,\cF_n^-\right]}_2^2
\le\E\!\left[\norm{\frac1B\sum_{b=1}^Bb_{n,b}}_2^2\,\middle|\,\cF_n^-\right]
\le\E\!\left[\frac1B\sum_{b=1}^B\norm{b_{n,b}}_2^2\,\middle|\,\cF_n^-\right].
\end{align*}
The bound \cref{eq:covbias} in Lemma~\ref{lem:covbias} holds pathwise for every
$(n,b)$, and $\Gamma_{n,b}^{\mathrm{GMC}}$ is exactly $T^2$ times its
four-branch minimum. Taking expectations therefore yields
\[
T^2\E\norm{b_n}_2^2
\le T^2\E\left[\frac1B\sum_{b=1}^B\norm{b_{n,b}}_2^2\right]
\le\E\left[\frac1B\sum_{b=1}^B\Gamma_{n,b}^{\mathrm{GMC}}\right].
\]
Substituting the moment and bias bounds into the inequality of Step~1 proves
\cref{eq:coveragethm}.

\emph{Step 3: conversion to delay moments.}
Suppose now that the delay-regularity condition in Assumption~\ref{as:async}
also holds. For nonnegative random variables $X_1,\dots,X_4$,
$\E[\min_kX_k]\le\min_k\E[X_k]$. Hence the final term of
\cref{eq:coveragethm} is at most the minimum of the four averaged branches, and
it suffices to bound each branch separately.

The uniform branch is the constant $8LTR^2\log|\cA|$. For the linear KL branch,
the tower property and \cref{eq:expecteddrift} give
$\E[D_n^{(b)}]=\E\bigl[\E[D_n^{(b)}\mid\tau_n^{(b)}]\bigr]
\le4(1+\eps)L^2R^2\log|\cA|\,\eta^2\,\E[(\tau_n^{(b)})^2]$.
Averaging over $b$ and $n$ and using the definition \cref{eq:average-delay-moment} of
$\overline\tau_2$,
\[
\frac1{N}\sum_{n=0}^{N-1}
\E\left[\frac1B\sum_{b=1}^B
\frac{8TR^2L\log|\cA|}{\ceps}D_n^{(b)}\right]
\le\frac{32(1+\eps)L^3R^4T(\log|\cA|)^2}{\ceps}
\eta^2\overline\tau_2.
\]
For the quadratic-KL branch, the tower property and \cref{eq:fourthdrift} give
$\E[(D_n^{(b)})^2]
\le16(1+\eps)^3TL^4R^4\log|\cA|\log\mathcal C\,\eta^4\,\E[(\tau_n^{(b)})^4]$.
Averaging in the same way,
\[
\frac1{N}\sum_{n=0}^{N-1}
\E\left[\frac1B\sum_{b=1}^B
\frac{8T^2R^2L\log\mathcal C}{\log^2(1+2\eps)}(D_n^{(b)})^2\right]
\le\frac{128(1+\eps)^3L^5R^6T^3\log|\cA|(\log\mathcal C)^2}
{\log^2(1+2\eps)}\eta^4\overline\tau_4.
\]
The chi-square branch is kept in its averaged form
\[
\frac{T^2R^2L\log\mathcal C}{2G^2\eps^2}
\frac1{N}\sum_{n=0}^{N-1}
\E\left[\frac1B\sum_{b=1}^B(\chi_n^{2,(b)})^2\right].
\]
Taking the minimum of these four upper bounds proves
\cref{eq:coveragecompact}.
\end{proof}

\begin{corollary}[Local overlap converts chi-square to KL]
\label{cor:local-overlap-kl}

Suppose additionally that, for some finite $\delta\ge0$, the trajectory
likelihood ratio satisfies
$|\rho_n^{(b,i)}-1|\le\delta$ almost surely for all $n,b,i$. Then
$\chi_n^{2,(b)}\le2(1+\delta)D_n^{(b)}$ almost surely for every $n,b$.
\end{corollary}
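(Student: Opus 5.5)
We compare the two divergences pointwise in the likelihood ratio and then change measure. Fix $n,b$, condition on $\cH_n$, and write $u:=\rho=\pi_{\theta_n}(y\mid x^{(b)})/\pi_{\theta_{n-\tau_n^{(b)}}}(y\mid x^{(b)})$ for $y$ drawn from the behavior policy. Both divergences are expectations under the behavior policy of scalar functions of $u$. As in the proof of Lemma~\ref{lem:covbias}, we have $D_n^{(b)}=\E[\varphi(u)]$ with $\varphi(u)=u\log u-u+1$, and $\chi_n^{2,(b)}=\E[(u-1)^2]$. The hypothesis $|\rho_n^{(b,i)}-1|\le\delta$ a.s. must be read as holding on the support of the behavior policy, so that $u\in[\max\{0,1-\delta\},1+\delta]$ for every trajectory with positive probability. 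Both divergences are finite sums over the finite trajectory space, so a pointwise inequality $(u-1)^2\le 2(1+\delta)\varphi(u)$ on this interval, integrated against the behavior policy, gives the claim.

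The key step is the scalar inequality. Define $h(u):=2(1+\delta)\varphi(u)-(u-1)^2$. Then $h(1)=0$ and $h'(u)=2(1+\delta)\log u-2(u-1)$, so $h'(1)=0$. Also $h''(u)=2(1+\delta)/u-2$, which is nonnegative exactly when $u\le 1+\delta$. Hence $h$ is convex on $(0,1+\delta]$ with a critical point at $1$, so $h\ge0$ there. At $u=0$, continuity gives $h(0)=2(1+\delta)-1>0$. An equivalent route is Taylor's theorem with integral remainder:
\[
\varphi(u)=\int_1^u\frac{u-s}{s}\,\mathrm ds\ge\frac{(u-1)^2}{2\max\{u,1\}}\ge\frac{(u-1)^2}{2(1+\delta)}.
\]
This uses $1/s\ge1/\max\{u,1\}$ on the integration range, and it handles $u<1$ and $u>1$ uniformly.

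Integrating against the behavior policy gives $\chi_n^{2,(b)}\le 2(1+\delta)D_n^{(b)}$ almost surely.

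The main subtlety is interpretive rather than technical. The almost-sure ratio bound holds only for sampled responses. To pass from the sampled bound to the full-support statement, we need that every trajectory with positive behavior probability is sampled with positive probability. This is true since the responses are drawn from the behavior policy itself, so the bound holds $\pi_{\theta_{n-\tau_n^{(b)}}}$-a.e. and that is all the expectations require.
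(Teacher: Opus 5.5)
Your proof is correct and is essentially the paper's argument: the paper derives the same pointwise bound $\varphi(u)\ge (u-1)^2/(2(1+\delta))$ from Taylor's formula with integral remainder, using $\varphi''\ge 1/\max\{1,u\}\ge 1/(1+\delta)$ on the segment between $1$ and $u$, and then takes expectation under the behavior policy. Your convexity argument for $h$ and your remark that the almost-sure ratio bound holds behavior-a.e. (which is what the expectations need) are harmless additions that the paper leaves implicit.
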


\begin{proof}
Let $\varphi(u)=u\log u-u+1$. Softmax full support implies
$\rho_n^{(b,i)}>0$ almost surely. For any $u>0$ satisfying
$|u-1|\le\delta$ and every $s\in[0,1]$,
$0<1+s(u-1)\le\max\{1,u\}\le1+\delta$. Hence
$\varphi''(1+s(u-1))\ge(1+\delta)^{-1}$. Since
$\varphi(1)=\varphi'(1)=0$, Taylor's formula with integral remainder gives
\[
\varphi(u)=(u-1)^2\int_0^1(1-s)
\varphi''(1+s(u-1))\,\mathrm{d}s
\ge\frac{(u-1)^2}{2(1+\delta)}.
\]
Apply this pointwise to $u=\rho_n^{(b,i)}$ and take expectation under the
behavior policy:
\[
D_n^{(b)}=\E[\varphi(\rho_n^{(b,i)})\mid\cH_n]
\ge\frac{1}{2(1+\delta)}
\E[(\rho_n^{(b,i)}-1)^2\mid\cH_n]
=\frac{\chi_n^{2,(b)}}{2(1+\delta)},
\]
which proves the claim.
\end{proof}

\section{Convergence Analysis of TIC-GRPO}\label{app:vanilla}

The TIC-GRPO analysis follows the same proof structure, making the effect of
individual ratio truncation explicit. We decompose its update into the raw
importance-sampling estimator and the discarded contribution, establish the
shared second-moment bound, and bound the clipping bias using trajectory
divergences. Combining these estimates with the parameter-drift argument and
Lemma~\ref{lem:master-descent} yields the TIC convergence bound. The resulting
coefficients allow a direct comparison with GMC under the same assumptions.

For every group, define the individually up-clipped estimator
\begin{equation}\label{eq:appvangroup}
U_{n,b}^{\mathrm{TIC}}
:=
\frac1{TG}\sum_{i=1}^G
\rho_n^{(b,i)}
\mathbf1\{\rho_n^{(b,i)}\le 1+\eps\}
\nabla\log\pi_{\theta_n}(y^{(b,i)}\mid x^{(b)})
\widehat A^{(b,i)},
\end{equation}
and the full mini-batch update
\begin{equation}\label{eq:appvangrad}
g_n^{\mathrm{TIC}}
:=\frac1B\sum_{b=1}^B U_{n,b}^{\mathrm{TIC}},
\qquad
\theta_{n+1}=\theta_n+\eta g_n^{\mathrm{TIC}}.
\end{equation}

\begin{proposition}[TIC-GRPO conditional gradient decomposition]
\label{prop:appvandecomp}
Under Assumption~\ref{as:lip} and the sampling conditions in
Assumption~\ref{as:async}, let
$U_{n,b}^{\mathrm{IS}}$ be the raw estimator in \cref{eq:isgroup}. Pathwise,
\begin{equation}\label{eq:appvanerror}
U_{n,b}^{\mathrm{TIC}}-U_{n,b}^{\mathrm{IS}}
=-\frac1{TG}\sum_{i=1}^G
\rho_n^{(b,i)}\mathbf1\{\rho_n^{(b,i)}>1+\eps\}
\nabla\log\pi_{\theta_n}(y^{(b,i)}\mid x^{(b)})
\widehat A^{(b,i)}.
\end{equation}
Set $b_{n,b}^{\mathrm{TIC}}
:=\E[U_{n,b}^{\mathrm{TIC}}-U_{n,b}^{\mathrm{IS}}\mid\cH_n]$. Then
\begin{align}
\E[U_{n,b}^{\mathrm{TIC}}\mid\cH_n]
&=\frac1T\nabla J_{x^{(b)}}(\theta_n)
+b_{n,b}^{\mathrm{TIC}},
\label{eq:appvangroupdecomp}\\
\E[g_n^{\mathrm{TIC}}\mid\cF_n^-]
&=\frac1T\nabla J(\theta_n)+b_n^{\mathrm{TIC}},
\label{eq:appvanfulldecomp}
\end{align}
where $b_n^{\mathrm{TIC}}:=
\E[B^{-1}\sum_{b=1}^B b_{n,b}^{\mathrm{TIC}}\mid\cF_n^-]$.
\end{proposition}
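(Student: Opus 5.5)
The plan mirrors the proof of Proposition~\ref{prop:decomposition}, with the group-mass cap replaced by individual truncation.

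\emph{Pathwise identity.} Each ratio satisfies $\rho_n^{(b,i)}=\rho_n^{(b,i)}\mathbf1\{\rho_n^{(b,i)}\le1+\eps\}+\rho_n^{(b,i)}\mathbf1\{\rho_n^{(b,i)}>1+\eps\}$. Substitute this split into the definition \cref{eq:isgroup} of $U_{n,b}^{\mathrm{IS}}$. The first piece reproduces $U_{n,b}^{\mathrm{TIC}}$ term by term, as in \cref{eq:appvangroup}, and the second piece is the discarded contribution. Subtracting gives \cref{eq:appvanerror}. This is a finite sum of finite quantities, since softmax full support makes every ratio finite, so no integrability issue arises at this stage.

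\emph{Group-level mean.} Take conditional expectation given $\cH_n$. First verify that all terms are integrable. The truncated term is bounded by $(1+\eps)\cdot 2R\,T^{-1}\|\nabla\log\pi_{\theta_n}\|_2$. The raw term is square integrable after the change of measure, using \cref{eq:scoremoment}, $|\widehat A^{(b,i)}|\le2R$, and Cauchy--Schwarz, exactly as at the end of the proof of Proposition~\ref{prop:decomposition}. Hence the discarded term, being a difference of the two, is integrable as well. Linearity then gives $\E[U_{n,b}^{\mathrm{TIC}}\mid\cH_n]=\E[U_{n,b}^{\mathrm{IS}}\mid\cH_n]+b_{n,b}^{\mathrm{TIC}}$.

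We reuse the computation $\E[U_{n,b}^{\mathrm{IS}}\mid\cH_n]=T^{-1}\nabla J_{x^{(b)}}(\theta_n)$ from the proof of Proposition~\ref{prop:decomposition}. That computation rests on three facts. The change of measure turns the paired-reward term into $\nabla J_{x^{(b)}}(\theta_n)$. Conditional independence within the group, from Assumption~\ref{as:async}, factorizes the leave-one-out cross terms. The score identity $\sum_y\nabla\pi_{\theta_n}(y\mid x)=0$ then kills those cross terms. Together these yield \cref{eq:appvangroupdecomp}.

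\emph{Batch level.} Average over $b$ and condition on $\cF_n^-\subseteq\cH_n$ using the tower property. This gives $\E[g_n^{\mathrm{TIC}}\mid\cF_n^-]=T^{-1}\E[B^{-1}\sum_b\nabla J_{x^{(b)}}(\theta_n)\mid\cF_n^-]+b_n^{\mathrm{TIC}}$.

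\emph{Expected main obstacle.} The only delicate step is identifying the first term with $T^{-1}\nabla J(\theta_n)$. Prompts and delays need not be $\cF_n^-$-measurable, and mini-batch selection may depend on completion times. We do not argue this directly. Instead, we invoke the unbiasedness clause of Assumption~\ref{as:async}, $\E[g_n^{\mathrm{IS}}\mid\cF_n^-]=T^{-1}\nabla J(\theta_n)$. Combined with the tower property and the group-level identity for $U_{n,b}^{\mathrm{IS}}$, it gives precisely $\E[B^{-1}\sum_b\nabla J_{x^{(b)}}(\theta_n)\mid\cF_n^-]=\nabla J(\theta_n)$, which proves \cref{eq:appvanfulldecomp}.
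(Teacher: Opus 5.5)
Your proposal is correct and takes essentially the same approach as the paper: you obtain \cref{eq:appvanerror} from the pathwise split of each ratio, import the raw-estimator mean from Proposition~\ref{prop:decomposition}, and get the batch identity from the tower property together with the unbiasedness clause of Assumption~\ref{as:async}. One small wording fix: change of measure with \cref{eq:scoremoment} and Cauchy--Schwarz controls only the first moment $\E[\rho_n^{(b,i)}\norm{\nabla\log\pi_{\theta_n}(y^{(b,i)}\mid x^{(b)})}_2\mid\cH_n]$, so it gives integrability rather than square integrability of the raw term, but integrability is all your argument needs.
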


\begin{proof}
Subtracting \cref{eq:isgroup} from \cref{eq:appvangroup} gives
\cref{eq:appvanerror}. Proposition~\ref{prop:decomposition} already proves the
mean identity for the raw estimator, including cancellation of the
leave-one-out baseline. Hence
\begin{align*}
\E[U_{n,b}^{\mathrm{TIC}}\mid\cH_n]
&=\E[U_{n,b}^{\mathrm{IS}}\mid\cH_n]+b_{n,b}^{\mathrm{TIC}}
=\frac1T\nabla J_{x^{(b)}}(\theta_n)+b_{n,b}^{\mathrm{TIC}},\\
\E[g_n^{\mathrm{TIC}}\mid\cF_n^-]
&=\E\!\left[\frac1B\sum_{b=1}^B
\E[U_{n,b}^{\mathrm{TIC}}\mid\cH_n]\,\middle|\,\cF_n^-\right]
=\frac1T\nabla J(\theta_n)+b_n^{\mathrm{TIC}}.
\end{align*}
This proves \cref{eq:appvangroupdecomp}--\cref{eq:appvanfulldecomp}.
\end{proof}

\begin{remark}[The TIC bias does not depend on $G$]
Fix $\cH_n$ and write $\E_{\mathrm b}$ for expectation under the behavior
policy $\pi_{\theta_{n-\tau_n^{(b)}}}(\cdot\mid x^{(b)})$. Since the responses
in a group are conditionally i.i.d.\ and $r^{(b,j)}$ is independent of
$y^{(b,i)}$ for $j\ne i$, substituting the leave-one-out advantage into
\cref{eq:appvanerror} gives
\[
b_{n,b}^{\mathrm{TIC}}
=-\frac1T\,\E_{\mathrm b}\Bigl[\rho\,\mathbf 1\{\rho>1+\eps\}\,
\nabla\log\pi_{\theta_n}(y\mid x^{(b)})
\bigl(r_\phi(x^{(b)},y)-\E_{\mathrm b}[r_\phi(x^{(b)},y)]\bigr)\Bigr],
\]
where $\rho$ is the current-to-behavior ratio of $y$. The right-hand side
does not depend on $G$ and is nonzero in general. In contrast, the
chi-square branch of \cref{eq:covbias} shows that the GMC bias is $O\bigl(\chi_n^{2,(b)}/(G\eps)\bigr)$ and
vanishes as $G\to\infty$ for fixed behavior and current policies.
\end{remark}

\begin{lemma}[Expected TIC-GRPO update second moment]
\label{lem:appvanbounds}
Under Assumption~\ref{as:lip} and the sampling conditions in
Assumption~\ref{as:async},
\begin{equation}\label{eq:appvansecondmoment}
\E[\norm{g_n^{\mathrm{TIC}}}_2^2\mid\cF_n^-]
\le\frac{8(1+\eps)LR^2\log|\cA|}{T}.
\end{equation}
The GMC counterpart in Lemma~\ref{lem:secondmoment} has the same constant.
\end{lemma}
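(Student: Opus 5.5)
The plan is to recognize TIC-GRPO as the member of the weighted class in \Cref{def:admissible class} with $w_n^{(b,i)}=\mathbf 1\{\rho_n^{(b,i)}\le1+\eps\}$. The bound \cref{eq:appvansecondmoment} then follows directly from \Cref{prop:weighted-second-moment}. The only thing to check is that these weights satisfy \cref{eq:admissible-weights} for every realized group. The box constraint $0\le w\le1$ is immediate. For the mass constraint, each retained term satisfies $w_n^{(b,i)}\rho_n^{(b,i)}\le1+\eps$ pointwise, since it equals $\rho_n^{(b,i)}$ when $\rho_n^{(b,i)}\le1+\eps$ and $0$ otherwise. Averaging over $i$ gives $\frac1G\sum_i w_n^{(b,i)}\rho_n^{(b,i)}\le1+\eps$.

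For a self-contained argument, I would mirror the proof of Lemma~\ref{lem:secondmoment} with $a_i=\rho_n^{(b,i)}\mathbf 1\{\rho_n^{(b,i)}\le1+\eps\}/G$ in the weighted Cauchy--Schwarz inequality. The first factor $\sum_i a_i$ is at most $1+\eps$ by the pointwise bound above. In the second factor I would enlarge $a_i$ to $\rho_n^{(b,i)}/G$. Conditioning on $\cH_n$ and using $|\widehat A^{(b,i)}|\le2R$, the change of measure from $\pi_{\theta_{n-\tau_n^{(b)}}}$ to $\pi_{\theta_n}$ (valid by Assumption~\ref{as:async}) turns $\E[\rho_n^{(b,i)}\norm{\nabla\log\pi_{\theta_n}}_2^2\mid\cH_n]$ into the on-policy score moment. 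That moment is at most $2LT\log|\cA|$ by \cref{eq:scoremoment}, which yields $8(1+\eps)LR^2\log|\cA|/T$ per group. Finally, convexity of $\norm{\cdot}_2^2$ over the $B$ groups and the tower property $\cF_n^-\subseteq\cH_n$ give the batch bound.

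There is no real obstacle. The only subtle point is that the mass constraint must hold pathwise for every realized group and not merely in expectation, and for individual truncation this is automatic from the per-sample bound. The constant therefore coincides with the GMC one, as claimed.
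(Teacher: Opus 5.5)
Your self-contained argument is exactly the paper's proof: weighted Cauchy--Schwarz with $a_i=G^{-1}\rho_n^{(b,i)}\mathbf 1\{\rho_n^{(b,i)}\le1+\eps\}$, first factor at most $1+\eps$, weights dropped in the second factor, then $|\widehat A^{(b,i)}|\le2R$, change of measure, \cref{eq:scoremoment}, and convexity over the $B$ groups with the tower property. Your shortcut through Proposition~\ref{prop:weighted-second-moment} is also sound in substance. Note, though, that the proposition as stated assumes coverage (Assumption~\ref{as:coverage}), which this lemma does not, so the direct computation is the version that matches the lemma's hypotheses exactly.
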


\begin{proof}
We use the weighted Cauchy--Schwarz inequality
\[
\norm{\sum_i a_iv_i}_2^2
\le\left(\sum_i a_i\right)
\left(\sum_i a_i\norm{v_i}_2^2\right),
\qquad a_i\ge0.
\]
Apply it with
$a_i=G^{-1}\rho_n^{(b,i)}
\mathbf1\{\rho_n^{(b,i)}\le1+\eps\}$ and
$v_i=T^{-1}\nabla\log\pi_{\theta_n}(y^{(b,i)}\mid x^{(b)})
\widehat A^{(b,i)}$. Since each retained ratio is at most $1+\eps$,
\begin{align*}
\norm{U_{n,b}^{\mathrm{TIC}}}_2^2
&\le
\left(\frac1G\sum_{i=1}^G
\rho_n^{(b,i)}\mathbf1\{\rho_n^{(b,i)}\le1+\eps\}\right)\times
\left(\frac1G\sum_{i=1}^G
\rho_n^{(b,i)}\mathbf1\{\rho_n^{(b,i)}\le1+\eps\}
\norm{\frac1T\nabla\log\pi_{\theta_n}(y^{(b,i)}\mid x^{(b)})
\widehat A^{(b,i)}}_2^2\right)\\
&\le\frac{1+\eps}{G}\sum_{i=1}^G\rho_n^{(b,i)}
\norm{\frac1T\nabla\log\pi_{\theta_n}(y^{(b,i)}\mid x^{(b)})
\widehat A^{(b,i)}}_2^2.
\end{align*}
Taking conditional expectation and using $|\widehat A^{(b,i)}|\le2R$ gives
\begin{align*}
\E[\norm{U_{n,b}^{\mathrm{TIC}}}_2^2\mid\cH_n]
&\le\frac{4R^2(1+\eps)}{T^2G}\sum_{i=1}^G
\E\!\left[\rho_n^{(b,i)}
\norm{\nabla\log\pi_{\theta_n}(y^{(b,i)}\mid x^{(b)})}_2^2
\,\middle|\,\cH_n\right]\\
&=\frac{4R^2(1+\eps)}{T^2}
\E_{y\sim\pi_{\theta_n}(\cdot\mid x^{(b)})}
\norm{\nabla\log\pi_{\theta_n}(y\mid x^{(b)})}_2^2\\
&\le\frac{8(1+\eps)LR^2\log|\cA|}{T}.
\end{align*}
The equality changes measure from the behavior policy to the current policy,
and the last inequality is \cref{eq:scoremoment}. Finally, convexity gives
\[
\norm{g_n^{\mathrm{TIC}}}_2^2
=\norm{\frac1B\sum_{b=1}^BU_{n,b}^{\mathrm{TIC}}}_2^2
\le\frac1B\sum_{b=1}^B\norm{U_{n,b}^{\mathrm{TIC}}}_2^2.
\]
Conditioning first on $\cH_n$ and then on $\cF_n^-$ proves
\cref{eq:appvansecondmoment}.
\end{proof}

\begin{lemma}[TIC-GRPO clipping-error bounds]
\label{lem:appvanbias}
Under Assumptions~\ref{as:lip} and~\ref{as:coverage} and the sampling
conditions in Assumption~\ref{as:async}, for every $n,b$,
\begin{equation}\label{eq:appvancovbias}
\norm{b_{n,b}^{\mathrm{TIC}}}_2^2
\le\min\Biggl\{
\frac{8R^2L\log|\cA|}{T},
\frac{8R^2L\log|\cA|}{T\ceps}D_n^{(b)},
\frac{8R^2L\log\mathcal{C}}{\ceps^2}(D_n^{(b)})^2,
\frac{8R^2L\log\mathcal{C}(1+\eps)^2}{\eps^4}
(\chi_n^{2,(b)})^2
\Biggr\}.
\end{equation}
\end{lemma}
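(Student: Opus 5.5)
We follow the GMC argument of Lemma~\ref{lem:covbias}, adapting each branch to the TIC clipping error \cref{eq:appvanerror}. Write $F_i:=\{\rho_n^{(b,i)}>1+\eps\}$ and $\varphi(u)=u\log u-u+1$. By conditional Jensen, the triangle inequality and $|\widehat A^{(b,i)}|\le2R$,
\[
\norm{b_{n,b}^{\mathrm{TIC}}}_2\le\frac{2R}{TG}\sum_{i=1}^G\E\!\left[\mathbf 1_{F_i}\rho_n^{(b,i)}\norm{\nabla\log\pi_{\theta_n}(y^{(b,i)}\mid x^{(b)})}_2\,\middle|\,\cH_n\right].
\]
The analysis then splits into the same two routes as before. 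The first route is Cauchy--Schwarz against the score moment \cref{eq:scoremoment}. The second route is the coverage bound \cref{eq:coverage-score}.

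\emph{Uniform and linear-KL branches.} For each $i$, apply conditional Cauchy--Schwarz in the form $\E[\mathbf1_{F_i}\rho\,\norm{s}]\le\E[\mathbf1_{F_i}\rho]^{1/2}\E[\rho\norm{s}^2]^{1/2}$. Change of measure and \cref{eq:scoremoment} bound the second factor by $(2LT\log|\cA|)^{1/2}$. Squaring then gives
\[
\norm{b_{n,b}^{\mathrm{TIC}}}_2^2\le\frac{8R^2L\log|\cA|}{T}\,\E\!\left[\rho\,\mathbf 1\{\rho>1+\eps\}\mid\cH_n\right],
\]
where $\rho$ denotes a single ratio, since the responses are identically distributed. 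Bounding the indicator by one and using $\E[\rho\mid\cH_n]=1$ gives the uniform branch. For the linear-KL branch, use the pointwise inequality $\ceps u\mathbf 1\{u>1+\eps\}\le\varphi(u)$ from the proof of Lemma~\ref{lem:covbias}, now applied to the individual ratio rather than the group mean. This gives $\ceps\E[\rho\mathbf1_F\mid\cH_n]\le\E[\varphi(\rho)\mid\cH_n]=D_n^{(b)}$. No convexity step is needed here.

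\emph{Quadratic-KL branch.} Under coverage, \cref{eq:coverage-score} gives
\[
\norm{b_{n,b}^{\mathrm{TIC}}}_2\le2R\sqrt{2L\log\mathcal C}\,\E[\rho\mathbf 1\{\rho>1+\eps\}\mid\cH_n].
\]
The removed mass is now the full mass above the threshold, not the excess over it. So the tangent-line trick that produced $\log(1+2\eps)$ for GMC is unavailable. Instead we reuse $\ceps\rho\mathbf1_F\le\varphi(\rho)$, which yields $\E[\rho\mathbf1_F\mid\cH_n]\le D_n^{(b)}/\ceps$. Squaring gives $8R^2L\log\mathcal C\,(D_n^{(b)})^2/\ceps^2$.

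\emph{Chi-square branch.} On $F$ we have $\rho-1>\eps$, so $\mathbf1_F\le(\rho-1)^2/\eps^2$. The remaining difficulty is the factor $\rho$ in front of the indicator. The main obstacle is this chi-square branch: we must obtain exactly $(1+\eps)^2/\eps^4$ without a higher moment of $\rho$. I will first try to show
\[
u\,\mathbf 1\{u>1+\eps\}\le\frac{(1+\eps)(u-1)^2}{\eps^2},\qquad u\ge0 .
\]
This follows because $u/(u-1)^2$ is decreasing for $u>1$, so its maximum on $u>1+\eps$ is attained at the endpoint, with value $(1+\eps)/\eps^2$. Hence
\[
\E[\rho\mathbf1_F\mid\cH_n]\le\frac{(1+\eps)\chi_n^{2,(b)}}{\eps^2}.
\]
Inserting this into the coverage bound and squaring gives the stated constant $8R^2L\log\mathcal C(1+\eps)^2/\eps^4$. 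The bound has no $1/G$ factor, which is consistent with the remark that the TIC bias does not depend on $G$.

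Finally, each of the four branches is a valid bound on $\norm{b_{n,b}^{\mathrm{TIC}}}_2^2$, so their minimum is as well.
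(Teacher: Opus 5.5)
Your proposal is correct and follows the paper's proof essentially step for step. It uses the same Jensen/triangle reduction, the per-response conditional Cauchy--Schwarz against the score moment for the uniform and linear-KL branches, and the pointwise bound $\ceps u\mathbf 1\{u>1+\eps\}\le\varphi(u)$ applied to individual ratios. For the two coverage branches it relies on the same inequality $u\mathbf 1\{u>1+\eps\}\le\frac{1+\eps}{\eps^2}(u-1)^2$, justified by the monotonicity of $u/(u-1)^2$, and all constants match.
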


\begin{proof}
Let $\varphi(u):=u\log u-u+1$. Expanding \cref{eq:appvanerror}, applying
conditional Jensen, and using $|\widehat A^{(b,i)}|\le2R$ give
\[
\norm{b_{n,b}^{\mathrm{TIC}}}_2
\le\frac{2R}{TG}\sum_{i=1}^G
\E\!\left[\rho_n^{(b,i)}\mathbf1\{\rho_n^{(b,i)}>1+\eps\}
\norm{\nabla\log\pi_{\theta_n}(y^{(b,i)}\mid x^{(b)})}_2
\,\middle|\,\cH_n\right].
\]
For each $i$, conditional Cauchy--Schwarz gives
\begin{align*}
&\E\!\left[\rho_n^{(b,i)}\mathbf1\{\rho_n^{(b,i)}>1+\eps\}
\norm{\nabla\log\pi_{\theta_n}(y^{(b,i)}\mid x^{(b)})}_2
\,\middle|\,\cH_n\right]^2\\
&\quad\le
\E\!\left[\rho_n^{(b,i)}\mathbf1\{\rho_n^{(b,i)}>1+\eps\}
\,\middle|\,\cH_n\right]
\E\!\left[\rho_n^{(b,i)}
\norm{\nabla\log\pi_{\theta_n}(y^{(b,i)}\mid x^{(b)})}_2^2
\,\middle|\,\cH_n\right].
\end{align*}
All $G$ responses have the same conditional law. Squaring the preceding bound,
changing measure in the score-moment factor, and applying
\cref{eq:scoremoment} therefore yield
\begin{align*}
\norm{b_{n,b}^{\mathrm{TIC}}}_2^2
&\le\frac{4R^2}{T^2}
\E[\rho_n^{(b,i)}\mathbf1\{\rho_n^{(b,i)}>1+\eps\}\mid\cH_n]
\E\!\left[\rho_n^{(b,i)}
\norm{\nabla\log\pi_{\theta_n}(y^{(b,i)}\mid x^{(b)})}_2^2
\mid\cH_n\right]\\
&\le\frac{8R^2L\log|\cA|}{T}
\E[\rho_n^{(b,i)}\mathbf1\{\rho_n^{(b,i)}>1+\eps\}\mid\cH_n].
\end{align*}
The tail mass is at most
$\E[\rho_n^{(b,i)}\mid\cH_n]=1$, which proves the uniform branch. For the KL
branch, $\varphi(u)/u=\log u-1+u^{-1}$ is increasing for $u>1$, and its value
at $u=1+\eps$ is $\ceps$. Hence
\[
\ceps u\mathbf1\{u>1+\eps\}\le\varphi(u),
\qquad u\ge0.
\]
Applying this inequality to $u=\rho_n^{(b,i)}$ and changing measure gives
\[
\ceps\,
\E[\rho_n^{(b,i)}\mathbf1\{\rho_n^{(b,i)}>1+\eps\}\mid\cH_n]
\le\E[\varphi(\rho_n^{(b,i)})\mid\cH_n]
=D_n^{(b)}.
\]
Substitution proves the KL branch.

The first two branches do not use Assumption \ref{as:coverage}. For the remaining branches, \cref{eq:coverage-score} bounds the score norm in the first display uniformly,
so
\begin{align*}
\norm{b_{n,b}^{\mathrm{TIC}}}_2
&\le2R\sqrt{2L\log\mathcal C}\,
\E[\rho_n^{(b,i)}\mathbf1\{\rho_n^{(b,i)}>1+\eps\}\mid\cH_n]\\
&\le2R\sqrt{2L\log\mathcal C}\,
\min\left\{\frac{D_n^{(b)}}{\ceps},
\frac{1+\eps}{\eps^2}\chi_n^{2,(b)}\right\},
\end{align*}
where the first alternative is the KL tail bound above. For the second, use the
pointwise inequality
\[
u\mathbf1\{u>1+\eps\}
\le\frac{1+\eps}{\eps^2}(u-1)^2.
\]
Indeed, $u/(u-1)^2$ is decreasing for $u>1$, so on the event
$u>1+\eps$ it is at most $(1+\eps)/\eps^2$. Taking behavior-policy expectation
turns $(u-1)^2$ into $\chi_n^{2,(b)}$. Squaring the two alternatives proves the
additional branches in \cref{eq:appvancovbias}.
\end{proof}

For each realized group, define the local TIC penalty
\begin{equation}\label{eq:gammavanilla}
\Gamma_{n,b}^{\mathrm{TIC}}
:=\min\Biggl\{
8LTR^2\log|\cA|,
\frac{8TR^2L\log|\cA|}{\ceps}D_n^{(b)},
\frac{8T^2R^2L\log\mathcal{C}}{\ceps^2}(D_n^{(b)})^2,
\frac{8T^2R^2L\log\mathcal{C}(1+\eps)^2}{\eps^4}
(\chi_n^{2,(b)})^2
\Biggr\}.
\end{equation}

\begin{theorem}[Delay-explicit convergence guarantee for TIC-GRPO]
\label{thm:appvanilla}
Suppose the iterates are generated by the TIC update \cref{eq:appvangrad},
Assumptions~\ref{as:lip} and~\ref{as:coverage} hold, and the sampling conditions
in Assumption~\ref{as:async} hold. Then, for every $\eta>0$ and $N\ge1$,
\begin{align}
\frac1{N}\sum_{n=0}^{N-1}
\E\norm{\nabla J(\theta_n)}_2^2
\le{}&
\frac{2T\bigl(J^\star-\E[J(\theta_0)]\bigr)}{\eta N}
+8T(1+\eps)L^2R^3\log|\cA|\,(2\log|\cA|+1)\eta
\nonumber\\
&+\frac1{N}\sum_{n=0}^{N-1}
\E\left[\frac1B\sum_{b=1}^B
\Gamma_{n,b}^{\mathrm{TIC}}\right],
\label{eq:appvantheorem}
\end{align}
where $\Gamma_{n,b}^{\mathrm{TIC}}$ is defined in
\cref{eq:gammavanilla}. If the delay-regularity condition in
Assumption~\ref{as:async} also holds, the final term is at most
\begin{equation}\label{eq:appvancompact}
\begin{aligned}
\min\Biggl\{&
8LTR^2\log|\cA|,
\frac{32(1+\eps)L^3R^4T(\log|\cA|)^2}{\ceps}
\eta^2\overline\tau_2,
\frac{128(1+\eps)^3L^5R^6T^3\log|\cA|(\log\mathcal C)^2}
{\ceps^2}\eta^4\overline\tau_4,\\[-1mm]
&\frac{8T^2R^2L\log\mathcal C(1+\eps)^2}{\eps^4}
\frac1{N}\sum_{n=0}^{N-1}
\E\left[\frac1B\sum_{b=1}^B(\chi_n^{2,(b)})^2\right]
\Biggr\}.
\end{aligned}
\end{equation}
\end{theorem}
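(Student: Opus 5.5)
The plan is to mirror the proof of Theorem~\ref{thm:coverage} step for step, swapping in the TIC ingredients. \emph{Step 1.} Proposition~\ref{prop:appvandecomp} gives $\E[g_n^{\mathrm{TIC}}\mid\cF_n^-]=T^{-1}\nabla J(\theta_n)+b_n^{\mathrm{TIC}}$ with $b_n^{\mathrm{TIC}}$ $\cF_n^-$-measurable. Square integrability of the update follows from Lemma~\ref{lem:appvanbounds}, and that of the bias from the uniform branch of Lemma~\ref{lem:appvanbias}. Hence Lemma~\ref{lem:master-descent} applies with $g_n=g_n^{\mathrm{TIC}}$.

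\emph{Step 2.} Take full expectations in Lemma~\ref{lem:appvanbounds} to get $\E\norm{g_n^{\mathrm{TIC}}}_2^2\le 8(1+\eps)LR^2\log|\cA|/T$. Multiplying by $T^2RL(2\log|\cA|+1)\eta$ produces exactly the second term of \cref{eq:appvantheorem}, with the same constant as in the GMC case. For the bias, apply conditional Jensen and then convexity of the squared norm over the $B$ groups:
\[
\norm{b_n^{\mathrm{TIC}}}_2^2\le\E\Bigl[\tfrac1B\sum_{b=1}^B\norm{b_{n,b}^{\mathrm{TIC}}}_2^2\,\Big|\,\cF_n^-\Bigr].
\]
Since \cref{eq:appvancovbias} holds pathwise, and $\Gamma_{n,b}^{\mathrm{TIC}}$ in \cref{eq:gammavanilla} is exactly $T^2$ times that four-branch minimum, we obtain $T^2\E\norm{b_n^{\mathrm{TIC}}}_2^2\le\E[B^{-1}\sum_b\Gamma_{n,b}^{\mathrm{TIC}}]$. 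Substituting both bounds into Lemma~\ref{lem:master-descent} proves \cref{eq:appvantheorem}.

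\emph{Step 3.} To convert the bias term to delay moments, start from $\E[\min_kX_k]\le\min_k\E[X_k]$ and bound each averaged branch separately.
\begin{itemize}
\item The uniform branch is the constant $8LTR^2\log|\cA|$.
\item The chi-square branch is left in averaged form.
\item The linear-KL branch uses the first-moment drift bound.
\item The quadratic-KL branch uses the fourth-moment drift bound.
\end{itemize}
For the two KL branches I need TIC analogues of \cref{eq:expecteddrift} and \cref{eq:fourthdrift}. The proof of Lemma~\ref{lem:drift} uses only two facts about the iterates. The first is the second-moment bound, which TIC shares by Lemma~\ref{lem:appvanbounds}. The second is a pathwise update bound, which here reads
\[
\norm{U_{s,b}^{\mathrm{TIC}}}_2\le 2R(1+\eps)\sqrt{2L\log\mathcal C},
\]
because every retained ratio is at most $1+\eps$ and \cref{eq:coverage-score} applies. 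So the lemma transfers verbatim and gives $\E[D_n^{(b)}\mid\tau_n^{(b)}]\le4(1+\eps)L^2R^2\log|\cA|\eta^2(\tau_n^{(b)})^2$ and $\E[(D_n^{(b)})^2\mid\tau_n^{(b)}]\le16(1+\eps)^3TL^4R^4\log|\cA|\log\mathcal C\,\eta^4(\tau_n^{(b)})^4$. Taking the tower property and averaging over $n,b$ then yields the $\overline\tau_2$ and $\overline\tau_4$ branches. Their constants are $32(1+\eps)L^3R^4T(\log|\cA|)^2/\ceps$ and $8\cdot16/\ceps^2=128/\ceps^2$ times $(1+\eps)^3L^5R^6T^3\log|\cA|(\log\mathcal C)^2$, matching \cref{eq:appvancompact}.

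The only step needing real care is this re-verification of Lemma~\ref{lem:drift} for TIC iterates. In particular, the delay-regularity condition must be invoked for the TIC trajectory, and the almost-sure bound $D_n^{(b)}\le D_{\max}$ must be used to pass from $(D_n^{(b)})^2$ to $D_{\max}D_n^{(b)}$. Everything else is bookkeeping of constants.
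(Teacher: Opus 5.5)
Your proposal is correct and follows the paper's proof essentially step for step. You verify the conditional-mean hypothesis via Proposition~\ref{prop:appvandecomp}, and feed the shared second-moment bound and the pathwise four-branch bias minimum into Lemma~\ref{lem:master-descent}. You then re-derive the drift bounds of Lemma~\ref{lem:drift} for the TIC iterates from the second-moment bound, the pathwise estimate $\norm{U_{s,b}^{\mathrm{TIC}}}_2\le 2R(1+\eps)\sqrt{2L\log\mathcal C}$, and $(D_n^{(b)})^2\le D_{\max}D_n^{(b)}$; your constants $32/\ceps$ and $128/\ceps^2$ match \cref{eq:appvancompact}.
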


\begin{proof}
\Cref{eq:appvanfulldecomp} verifies the conditional-mean hypothesis of
Lemma~\ref{lem:master-descent}. The update-moment bound
\cref{eq:appvansecondmoment} contributes
\[
T^2RL(2\log|\cA|+1)\eta
\,\cdot\,
\frac{8(1+\eps)LR^2\log|\cA|}{T}
=8T(1+\eps)L^2R^3\log|\cA|(2\log|\cA|+1)\eta.
\]
For the bias term, conditional Jensen and \cref{eq:appvancovbias} give
\[
T^2\E\norm{b_n^{\mathrm{TIC}}}_2^2
\le T^2\E\left[\frac1B\sum_{b=1}^B
\norm{b_{n,b}^{\mathrm{TIC}}}_2^2\right]
\le\E\left[\frac1B\sum_{b=1}^B
\Gamma_{n,b}^{\mathrm{TIC}}\right].
\]
Substituting both estimates into Lemma \ref{lem:master-descent} gives
\begin{align*}
\frac1{N}\sum_{n=0}^{N-1}\E\norm{\nabla J(\theta_n)}_2^2
&\le\frac{2T(J^\star-\E[J(\theta_0)])}{\eta N}
+8T(1+\eps)L^2R^3\log|\cA|(2\log|\cA|+1)\eta\\
&\quad+\frac1{N}\sum_{n=0}^{N-1}
\E\left[\frac1B\sum_{b=1}^B\Gamma_{n,b}^{\mathrm{TIC}}\right].
\end{align*}
This proves \cref{eq:appvantheorem}.

Now additionally impose the delay-regularity condition in
Assumption~\ref{as:async}. Fix $t\ge0$ with
$\Pr(\tau_n^{(b)}=t)>0$. The learner recursion and Cauchy--Schwarz over the
$t$ updates give
\[
\theta_n-\theta_{n-t}=\eta\sum_{s=n-t}^{n-1}g_s^{\mathrm{TIC}},
\qquad
\norm{\theta_n-\theta_{n-t}}_2^2
\le\eta^2t\sum_{s=n-t}^{n-1}\norm{g_s^{\mathrm{TIC}}}_2^2.
\]
This condition removes the conditioning on the realized delay, and
\cref{eq:appvansecondmoment} bounds every update moment. Therefore
\begin{align*}
\E[\norm{\theta_n-\theta_{n-t}}_2^2\mid\tau_n^{(b)}=t]
&=\E\norm{\theta_n-\theta_{n-t}}_2^2\le\frac{8(1+\eps)LR^2\log|\cA|}{T}\eta^2t^2,\\
\E[D_n^{(b)}\mid\tau_n^{(b)}=t]
&\le\frac{TL}{2}
\E[\norm{\theta_n-\theta_{n-t}}_2^2\mid\tau_n^{(b)}=t]\le4(1+\eps)L^2R^2\log|\cA|\,\eta^2t^2.
\end{align*}

Under coverage, every retained ratio is at most $1+\eps$, and
\cref{eq:coverage-score} gives, for every group,
\begin{align*}
\norm{U_{s,b}^{\mathrm{TIC}}}_2
&\le\frac{2R}{TG}\sum_{i=1}^G
\rho_s^{(b,i)}\mathbf1\{\rho_s^{(b,i)}\le1+\eps\}
\norm{\nabla\log\pi_{\theta_s}(y^{(b,i)}\mid x^{(b)})}_2\le2R(1+\eps)\sqrt{2L\log\mathcal C}.
\end{align*}
The same bound holds for $g_s^{\mathrm{TIC}}$, because it is the average of
the group updates. The triangle inequality along the learner path thus yields
\[
\norm{\theta_n-\theta_{n-t}}_2
\le\eta\sum_{s=n-t}^{n-1}\norm{g_s^{\mathrm{TIC}}}_2
\le2R(1+\eps)\sqrt{2L\log\mathcal C}\,\eta t.
\]
Combining this pathwise estimate with \cref{eq:trajectory-kl} gives
\begin{align*}
D_n^{(b)}
&\le\frac{TL}{2}\norm{\theta_n-\theta_{n-t}}_2^2
\le4(1+\eps)^2TR^2L^2\log\mathcal C\,\eta^2t^2=:D_{\max},\\
\E[(D_n^{(b)})^2\mid\tau_n^{(b)}=t]
&\le4(1+\eps)^2TR^2L^2\log\mathcal C\,\eta^2t^2
\E[D_n^{(b)}\mid\tau_n^{(b)}=t]\\
&\le16(1+\eps)^3TL^4R^4\log|\cA|\log\mathcal C\,\eta^4t^4.
\end{align*}
The middle inequality uses the almost-sure upper bound on $D_n^{(b)}$, namely
$(D_n^{(b)})^2\le D_{\max}D_n^{(b)}$.
For example, substituting the last line into the quadratic-KL branch gives
\begin{align*}
&\frac{8T^2R^2L\log\mathcal C}{\ceps^2}
\frac1{N}\sum_{n=0}^{N-1}
\E\left[\frac1B\sum_{b=1}^B(D_n^{(b)})^2\right]\le
\frac{128(1+\eps)^3L^5R^6T^3\log|\cA|(\log\mathcal C)^2}{\ceps^2}
\eta^4\overline\tau_4.
\end{align*}
The expected-drift line gives the second branch of \cref{eq:appvancompact}; the
uniform branch is constant, and the empirical chi-square branch is left in its
averaged form. The average pointwise minimum is no larger than any of these four
branchwise averages, so their minimum proves \cref{eq:appvancompact}. 

\end{proof}

\section{From the appendix bounds to the main-text rates}
\label{sec:main-specializations}

This section derives the main-text rates from the detailed bounds above.
We first prove a step-size balancing lemma for the optimization, stochastic,
and fourth-order delay terms, then apply it to the GMC bound. We next
optimize the GMC threshold asymptotically and use the local-overlap estimate
to obtain the group-size refinement. Applying the same balancing lemma to
TIC completes the main-text guarantees and makes their different
delay-dependent coefficients explicit.

\begin{lemma}[Balancing optimization, stochastic, and delay terms]
\label{lem:stepsize-balance}
Let $A,B>0$, $C\ge0$, and $M\ge1$. Suppose that, for every $\eta>0$,
\begin{equation}\label{eq:generic-rate}
F_M(\eta)\le \frac{A}{M\eta}+B\eta+C\eta^4,
\end{equation}
then, with $(A/(CM))^{1/5}:=+\infty$ when $C=0$, the choice
\begin{equation}\label{eq:generic-stepsize}
\eta^\star:=\min\left\{\sqrt{\frac{A}{BM}},
\left(\frac{A}{CM}\right)^{1/5}\right\},
\end{equation}
satisfies
\begin{equation}\label{eq:generic-balanced-rate}
F_M(\eta^\star)
\le2\sqrt{\frac{AB}{M}}
+2\left(\frac{A^4C}{M^4}\right)^{1/5}.
\end{equation}
\end{lemma}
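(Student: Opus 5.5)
We plug $\eta^\star$ into \cref{eq:generic-rate} and bound each of the three terms separately. We use only two facts: $\eta^\star$ is at most each entry of the minimum, and $1/\eta^\star$ is at most the sum of the reciprocals of the two entries. The second fact holds because $1/\min\{a,b\}=\max\{1/a,1/b\}\le 1/a+1/b$ for positive $a,b$.

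\emph{Optimization term.} Using $1/\eta^\star\le\sqrt{BM/A}+(CM/A)^{1/5}$, we get
\[
\frac{A}{M\eta^\star}\le\frac{A}{M}\sqrt{\frac{BM}{A}}+\frac{A}{M}\left(\frac{CM}{A}\right)^{1/5}
=\sqrt{\frac{AB}{M}}+\left(\frac{A^4C}{M^4}\right)^{1/5}.
\]
When $C=0$ the second entry of the minimum is $+\infty$ and the second summand is zero, so the display remains valid.

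\emph{Stochastic term.} From $\eta^\star\le\sqrt{A/(BM)}$ we get $B\eta^\star\le\sqrt{AB/M}$.

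\emph{Delay term.} From $\eta^\star\le(A/(CM))^{1/5}$ we get
\[
C(\eta^\star)^4\le C\left(\frac{A}{CM}\right)^{4/5}=\left(\frac{A^4C}{M^4}\right)^{1/5}.
\]
If $C=0$ this term vanishes.

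Adding the three bounds gives exactly $2\sqrt{AB/M}+2(A^4C/M^4)^{1/5}$, which is \cref{eq:generic-balanced-rate}. The argument needs no case split on which entry attains the minimum, since every inequality above holds regardless. The only step needing care is the convention $(A/(CM))^{1/5}=+\infty$ when $C=0$: under it $\eta^\star=\sqrt{A/(BM)}$ and all $C$-terms vanish.
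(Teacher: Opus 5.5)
Your proposal is correct and follows essentially the same route as the paper: substitute $\eta^\star$, bound $A/(M\eta^\star)$ via $1/\min\{u,v\}=\max\{1/u,1/v\}\le 1/u+1/v$, and bound $B\eta^\star$ and $C(\eta^\star)^4$ using the respective entries of the minimum. The one difference is cosmetic. The paper disposes of $C=0$ as a separate case up front, while you absorb it through the $+\infty$ convention, and that convention is handled correctly.
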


\begin{proof}
If $C=0$, substituting $\eta^\star=\sqrt{A/(BM)}$ directly gives
$F_M(\eta^\star)\le2\sqrt{AB/M}$. Suppose henceforth that $C>0$.
The definition \cref{eq:generic-stepsize} implies
\begin{align*}
\frac{A}{M\eta^\star}
&=\max\left\{\sqrt{\frac{AB}{M}},
\left(\frac{A^4C}{M^4}\right)^{1/5}\right\}\le\sqrt{\frac{AB}{M}}
+\left(\frac{A^4C}{M^4}\right)^{1/5},\\
B\eta^\star&\le\sqrt{\frac{AB}{M}},
\qquad
C(\eta^\star)^4\le\left(\frac{A^4C}{M^4}\right)^{1/5},
\end{align*}
where the equality uses $1/\min\{u,v\}=\max\{1/u,1/v\}$ and the final
inequality uses $\max\{a,b\}\le a+b$ for $a,b\ge0$. Summing the three displayed
bounds proves \cref{eq:generic-balanced-rate}.
\end{proof}

\begin{proof}[Proof of Theorem~\ref{thm:gmc-main}]
Retain the quadratic-KL branch in \cref{eq:coveragecompact}. With $M=N$, the
result has the form \cref{eq:generic-rate} with
\begin{align*}
A&=2T\bigl(J^\star-\E[J(\theta_0)]\bigr),\\
B&=8T(1+\eps)L^2R^3\log|\cA|(2\log|\cA|+1),\\
C&=\frac{128(1+\eps)^3L^5R^6T^3\log|\cA|(\log\mathcal C)^2}
{\log^2(1+2\eps)}\overline\tau_4.
\end{align*}
This is the first bound in Theorem~\ref{thm:gmc-main}. Assume first that $A>0$
and $\overline\tau_4>0$. Lemma~\ref{lem:stepsize-balance} gives
\begin{align*}
\eta^\star
&=\min\left\{\sqrt{\frac{A}{BN}},
\left(\frac{A}{CN}\right)^{1/5}\right\},\\
\frac1{N}\sum_{n=0}^{N-1}\E\norm{\nabla J(\theta_n)}_2^2
&\le2\sqrt{\frac{AB}{N}}
+2\left(\frac{A^4C}{N^4}\right)^{1/5}.
\end{align*}
Substituting $C$ and suppressing the fixed problem constants gives
\[
\eta^\star\asymp\min\left\{
\frac1{\sqrt{(1+\eps)N}},
\left(\frac{\log^2(1+2\eps)}
{(1+\eps)^3\overline\tau_4N}\right)^{1/5}\right\},
\]
and the preceding optimized bound gives the second claim of
Theorem~\ref{thm:gmc-main}. When $\overline\tau_4=0$ and $A>0$, the $C=0$
branch of Lemma~\ref{lem:stepsize-balance} instead gives
$\eta^\star=\sqrt{A/(BN)}$ and the first term of the optimized bound. If
$A=0$, choose any positive step size of the main-text order and substitute it
directly into \cref{eq:generic-rate}; the optimization term is then zero and the
same stated upper order follows.
\end{proof}

\begin{proof}[Proof of Corollary~\ref{cor:gmc-threshold}]
Suppressing fixed positive problem-dependent coefficients, the leading
$\eps$-dependent terms in the optimized GMC bound have the form
\[
\mathcal B_N(\eps)
=aN^{-1/2}\sqrt{1+\eps}
+b\overline\tau_4^{1/5}N^{-4/5}
(1+\eps)^{3/5}\log^{-2/5}(1+2\eps),
\]
for constants $a,b>0$ independent of $N$, $\overline\tau_4$, and $\eps$.
The first term increases with $\eps$, while the second diverges as
$\eps\downarrow0$ and captures the correction-bias penalty.
For each $N$, $\mathcal B_N$ is continuous on $(0,\infty)$ and diverges
at both endpoints, so it attains a global minimum in $(0,\infty)$.

Let $h(\eps):=(1+\eps)^{3/5}\log^{-2/5}(1+2\eps)$.
Since $\log(1+2\eps)\sim2\eps$ as $\eps\downarrow0$,
\begin{align*}
-h'(\eps)
&=\frac{h(\eps)}{5}
\left(
\frac{4}{(1+2\eps)\log(1+2\eps)}-\frac{3}{1+\eps}
\right)
\asymp\eps^{-7/5}
\qquad(\eps\downarrow0).
\end{align*}
With $\overline\tau_4>0$ fixed, every minimizer tends to zero: at $\eps=N^{-1/5}$,
$\mathcal B_N(\eps)=aN^{-1/2}+o(N^{-1/2})$, whereas for any fixed $\eps_0>0$ and $\eps\ge\eps_0$,
$\mathcal B_N(\eps)\ge a\sqrt{1+\eps_0}\,N^{-1/2}$.
At any such minimizer, the first-order condition and
$\frac{\mathrm d}{\mathrm d\eps}\sqrt{1+\eps}\asymp1$ then give
\[
N^{-1/2}
\asymp
\overline\tau_4^{1/5}N^{-4/5}
(\eps_{\mathrm{GMC}}^\star)^{-7/5},
\]
or, equivalently,
\[
(\eps_{\mathrm{GMC}}^\star)^{7/5}
\asymp\overline\tau_4^{1/5}N^{-3/10}.
\]
Taking both sides to the power $5/7$ yields
\[
\eps_{\mathrm{GMC}}^\star
=\Theta\left(\overline\tau_4^{1/7}N^{-3/14}\right).
\]
\end{proof}

\begin{proof}[Proof of Corollary~\ref{cor:gmc-local-overlap}]
Corollary~\ref{cor:local-overlap-kl} gives
$(\chi_n^{2,(b)})^2\le4(1+\delta)^2(D_n^{(b)})^2$. Therefore the chi-square
branch in \cref{eq:gammagmc} and \cref{eq:fourthdrift} imply
\begin{align*}
\frac{T^2R^2L\log\mathcal C}{2G^2\eps^2}
\frac1{N}\sum_{n=0}^{N-1}
\E\left[\frac1B\sum_{b=1}^B(\chi_n^{2,(b)})^2\right]
&\le\frac{2(1+\delta)^2T^2R^2L\log\mathcal C}{G^2\eps^2}
\frac1{N}\sum_{n=0}^{N-1}
\E\left[\frac1B\sum_{b=1}^B(D_n^{(b)})^2\right]\\
&\le\frac{32(1+\delta)^2(1+\eps)^3L^5R^6T^3
\log|\cA|(\log\mathcal C)^2}{G^2\eps^2}
\eta^4\overline\tau_4.
\end{align*}
Substitution in \cref{eq:coveragethm} proves the first displayed bound in the
corollary. Lemma~\ref{lem:stepsize-balance}, with $C$ equal to the complete
coefficient of $\eta^4$ on the last line, gives the stated step size and optimized
rate.
\end{proof}

\begin{proof}[Proof of Theorem~\ref{thm:tic-main}]
Retain the quadratic-KL branch in \cref{eq:appvancompact}. With $M=N$, the
result has the form \cref{eq:generic-rate} with the same $A$ and $B$ as in the
proof of Theorem~\ref{thm:gmc-main}, and
\[
C=\frac{128(1+\eps)^3L^5R^6T^3\log|\cA|(\log\mathcal C)^2}
{\ceps^2}\overline\tau_4.
\]
This is the first bound in Theorem~\ref{thm:tic-main}. Assume first that $A>0$
and $\overline\tau_4>0$. Lemma~\ref{lem:stepsize-balance} gives
\begin{align*}
\eta^\star
&=\min\left\{\sqrt{\frac{A}{BN}},
\left(\frac{A}{CN}\right)^{1/5}\right\},\\
\frac1{N}\sum_{n=0}^{N-1}\E\norm{\nabla J(\theta_n)}_2^2
&\le2\sqrt{\frac{AB}{N}}
+2\left(\frac{A^4C}{N^4}\right)^{1/5}.
\end{align*}
Treating the objective gap and $T,L,R,|\cA|,\mathcal C$ as fixed constants
reduces the first line to
\[
\eta^\star\asymp\min\left\{
\frac1{\sqrt{(1+\eps)N}},
\left(\frac{\ceps^2}{(1+\eps)^3\overline\tau_4N}\right)^{1/5}\right\},
\]
and the second line is exactly the optimized order stated in the theorem. When
$\overline\tau_4=0$ and $A>0$, the $C=0$ branch of
Lemma~\ref{lem:stepsize-balance} instead gives
$\eta^\star=\sqrt{A/(BN)}$ and the first term of the optimized bound. If
$A=0$, choose any positive step size of the main-text order and substitute it
directly into \cref{eq:generic-rate}; the optimization term is then zero and the
same stated upper order follows.
\end{proof}

\end{document}